\DeclareMathOperator*{\argmin}{arg\,min}
\DeclareMathOperator*{\E}{\mathbb{E}}
\newtheorem{assumption}{Assumption}
\newtheorem{lemma}{Lemma}
\newtheorem{theorem}{Theorem}
\newtheorem{corollary}{Corollary}
\newtheorem{proposition}{Proposition}
\newcommand{\lina}[1]{  \ifthenelse{\boolean{showcomments}}
	{ \textcolor{blue}{(Lina says:  #1)}} {}  }
\DeclarePairedDelimiter\ceil{\lceil}{\rceil}
\newcommand{\R}{\mathbb R}
\newcommand{\A}{\mathcal A}
\newcommand{\F}{\mathcal F}
\newcommand{\Pb}{\mathbb P}
\newcommand{\X}{\mathbb X}
\newcommand{\one}{\mathbbm{1}}
\newcommand{\blue}[1]{\textcolor{blue}{#1}}
\newcommand{\nbf}{\noindent\textbf}
\newcommand{\nit}{\noindent\textit}
\title{Leveraging  Predictions in Smoothed Online Convex Optimization via Gradient-based Algorithms}
\author[1]{Yingying Li}
\author[1]{Na Li}
\affil[1]{John A. Paulson School of Engineering and Applied Sciences, Harvard University}
\date{}                  
\begin{document}
\maketitle
\begin{abstract}
We consider online convex optimization with time-varying stage costs and additional switching costs.  Since the switching costs introduce coupling across all stages, multi-step-ahead (long-term) predictions are  incorporated to improve the online performance. However,  longer-term predictions tend to suffer from lower quality. Thus, a critical question is:  \textit{how to reduce the  impact of  long-term prediction errors on the online performance?} To  address this question, we introduce a gradient-based online algorithm, Receding Horizon Inexact Gradient (RHIG), and analyze its  performance by dynamic regrets in terms of the temporal variation of the environment and the prediction errors. RHIG only considers at most $W$-step-ahead predictions to avoid being misled by worse predictions in the longer term. The optimal choice of $W$ suggested by our regret bounds depends on the tradeoff between the variation of the environment and the prediction accuracy. Additionally, we apply RHIG to a well-established stochastic  prediction error model and provide expected  regret  and  concentration bounds under correlated prediction errors. Lastly, we numerically test the performance of RHIG on  quadrotor tracking problems.

\end{abstract}


\section{Introduction}\label{sec:intro}

In this paper, we consider online convex optimization (OCO) with switching costs, also known as ``smoothed'' OCO  (SOCO) in the literature \cite{chen2015online,goel2019online,lin2019online,goel2019beyond,chen2018smoothed}. The stage costs are time-varying but the decision maker (agent) has access to  noisy predictions on the future costs.    Specifically, we consider stage cost function $f(x_t;\theta_t)$ parameterized by a time-varying parameter $\theta_t\in \Theta$. At each stage $t\in
\{1,2,\cdots, T\}$, the agent receives the predictions of the future parameters $\theta_{t\mid t-1}, \dots, \theta_{T\mid t-1}$, takes an action $x_t \in \X$, and suffers the stage cost $f(x_t;\theta_t)$ plus a switching cost $d(x_t, x_{t-1})$. The switching cost $d(x_t, x_{t-1})$  penalizes the changes in the actions between consecutive stages.   This problem  enjoys a wide range of applications.
For example, in the data center management problems \cite{lin2012online,lin2013dynamic},  the switching cost captures the switch on/off costs of the servers \cite{lin2013dynamic}, and  noisy predictions on future electricity prices and network traffic are available for the center manager \cite{wang2017multi,cortez2012multi}. Other applications include  smart building \cite{zanini2010online,li2019distributed}, robotics \cite{baca2018model}, smart grid \cite{gan2014real}, connected vehicles \cite{rios2016survey}, optimal control \cite{li2019online}, etc. 








Unlike  OCO \cite{hazan2016introduction}, the switching costs considered in SOCO introduce coupling among all stages, so  multi-step-ahead predictions are usually used for promoting the online performance. However, in most cases, predictions are not accurate,  and  longer-term  predictions tend to suffer lower quality. Therefore, it is crucial to study \textit{how to use the multi-step-ahead predictions effectively}, especially, \textit{how to reduce the impact of long-term prediction errors on the online performance.}



Recent years have witnessed a growing interest in studying SOCO with predictions. However, most literature avoids the complicated analysis on noisy multi-step-ahead predictions  by considering a rather simplified prediction model: the costs in the next $W$ stages are accurately predicted with no errors while the costs beyond the next $W$ stages are adversarial and not predictable at all \cite{lin2012online,lin2013dynamic,li2018online,li2019online,lin2019online}. This first-accurate-then-adversarial model is motivated by the fact that long-term predictions are much worse than the short-term ones, but it fails to capture the gradually increasing prediction errors as one predicts further into the future.
Several online algorithms have been proposed for this model, e.g. the optimization-based algorithm AFHC \cite{lin2012online}, the gradient-based algorithm RHGD \cite{li2018online}, etc. 
Moreover, there have been  a few  attempts to consider noisy multi-step-ahead predictions in SOCO. In particular, \cite{chen2015online} proposes  a stochastic  prediction error model to describe the correlation among prediction errors. This stochastic model generalizes stochastic filter prediction errors. Later, \cite{chen2016using} proposes an optimization-based algorithm CHC, which generalizes AFHC and MPC \cite{rawlings2017model}, and analyzes its performance based on the stochastic model in \cite{chen2015online}.


However, many important questions remain unresolved for SOCO with noisy predictions. For example, though the discussions on the stochastic model  in \cite{chen2015online,chen2016using} are insightful, there still lacks a general understanding on the effects of prediction errors on SOCO without any (stochastic  model)  assumptions. Moreover, most  methods in the literature \cite{lin2012online,chen2015online,chen2016using} require fully solving  multi-stage optimization programs at each stage; it is unclear whether any gradient-based algorithm,
which is more computationally efficient, would work for SOCO with noisy multi-step-ahead predictions. 

\nbf{Our contributions.}
In this paper, we introduce a gradient-based online algorithm Receding Horizon Inexact Gradient (RHIG). It is a straightforward extension of RHGD, which was designed for the simple first-accurate-then-adversarial prediction model in \cite{li2018online}. In RHIG, the agent can choose to  utilize only  $W\geq 0$ steps of future predictions, where $W$ is a tunable parameter for the agent. 

We first analyze the dynamic regret of  RHIG by  considering general prediction errors without any (stochastic model) assumptions. Our  regret bound depends on both the errors of the utilized predictions, i.e. $k$-step-ahead prediction errors for $k\leq W$; and the temporal variation of the environment $V_T=\sum_{t=1}^T\sup_{x\in \X}|f(x;\theta_t)-f(x;\theta_{t-1})|$. Interestingly, the regret bound shows that the optimal choice of $W$ 
depends on the tradeoff between the  variation of environment $V_T$ and the prediction errors, that is,  a large $W$  is preferred when $V_T$ is large  while a small $W$ is preferred when the  prediction errors are large. Further, the  $k$-step prediction errors have an exponentially decaying influence on the regret bound as $k$ increases, indicating that RHIG effectively reduces the negative impact of the noisy multi-step-ahead predictions.

We then  consider the stochastic prediction error model in \cite{chen2015online,chen2016using} to analyze the performance of RHIG under correlated prediction errors. We provide an expected regret bound and a concentration bound on the regret. In both bounds,  the long-term correlation among prediction errors  has an exponentially decaying effect, indicating RHIG's good performance even with strongly correlated prediction errors.

Finally, we numerically test RHIG on  online quadrotor tracking problems. Numerical experiments show that RHIG outperforms AFHC and CHC especially under larger prediction errors. Besides, we  show that RHIG  is robust to unforeseen shocks in the future.

\nbf{Additional related work:} There is   a related line of work on predictable  OCO  (without switching costs) \cite{hall2013dynamical,rakhlin2013online,rakhlin2013optimization,jadbabaie2015online}. In this case,  stage decisions are fully decoupled and  only one-step-ahead predictions are relevant. The proposed algorithms include  OMD \cite{rakhlin2013online,rakhlin2013optimization}, DMD \cite{hall2013dynamical}, AOMD \cite{jadbabaie2015online}, whose regret bounds depend on one-step prediction errors \cite{rakhlin2013optimization,jadbabaie2015online,hall2013dynamical} and $V_T$ if   dynamic regret is concerned \cite{jadbabaie2015online}.

Besides, it is worth mentioning the related online decision making problems with coupling across stages, e.g. OCO with memory \cite{anava2015online,shi2020beyond}, online optimal control \cite{agarwal2019online,li2019online,li2020online,goel2017thinking}, online Markov decision processes \cite{even2009online,li2019onlinemdp,li2019onlineicml}, etc. Leveraging inaccurate predictions in these problems is also worth exploring. 

\nbf{Notation:}  $\Pi_{\X}$ denotes the projection onto set $\X$. $\X^T=\X\times\cdots \times \X$ is a Cartesian product.  $\nabla_x $ denotes the gradient with  $x$.   $\sum_{t=0}^k a_t=0$ if $k<0$.  $\|\cdot\|_F$ and $\|\cdot \|$ are   Frobenius norm  and   $L_2$ norm.

\section{Problem Formulation}


Consider  stage cost function $f(x_t;\theta_t)$  with a time-varying parameter $\theta_t \in \Theta$ and a switching cost $d(x_t, x_{t-1})$ that penalize the changes in the actions between stages. The total cost in horizon $T$ is:.
$
C(\bm x;\bm \theta)=\sum_{t=1}^T\left[ f(x_t;\theta_t)+ d(x_t, x_{t-1})\right]$,
where  $x_t \in \X \subseteq \R^n$, $\theta_t\in \Theta \subseteq \R^p$,  and we denote $\bm x:=(x_1^\top, \dots, x_T^\top)^\top$, $\bm \theta=(\theta_1^\top, \dots, \theta_T^\top)^\top$. The  switching cost  enjoys many applications as discussed in Section~\ref{sec:intro}. 
The presence of  switching costs $d(x_t, x_{t-1})$ couples decisions  among stages. Therefore, all parameters in  horizon $T$, i.e. $\theta_1, \dots, \theta_T$, are needed to minimize $C(\bm x;\bm \theta)$. However, in practice, only predictions are available ahead of the time and the predictions are often inaccurate, especially the long-term predictions. This may lead to wrong decisions and degrade the online performance. 
In this paper, we aim at designing an online algorithm to use prediction effectively and  
unveil the unavoidable influences of the prediction errors on the online performance. 

\nbf{Prediction models.} In this paper, we denote the prediction of the future parameter $\theta_{\tau}$  obtained at the beginning of stage $t$  as $\theta_{\tau\mid t-1}$ for $t\leq \tau \leq T$.  The initial predictions $\theta_{1\mid 0}, \dots, \theta_{T\mid 0}$ are usually available before the problem starts.  
We call $\theta_{t\mid t-k}$ as $k$-step-ahead predictions of parameter $\theta_{t}$ and let $\delta_{t}(k)$ denote the $k$-step-prediction error, i.e. 
\begin{align}\label{equ: def of deltat(k)}
    \delta_t(k):=\theta_{t}-\theta_{t\mid t-k}, \quad \forall 1\leq k\leq t.
\end{align}
For notation simplicity, we define $\theta_{t\mid \tau}:=\theta_{t\mid 0}$ for $\tau \leq 0$, and thus $\delta_t(k)=\delta_t(t)$ for $k\geq t$. Further, we denote the vector of  $k$-step prediction errors of all stages as follows
\begin{align}
	\bm \delta(k)=(\delta_1(k)^\top, \dots, \delta_T(k)^\top)^\top \in\R^{pT}
,\qquad \forall\, 1\leq k \leq T.
\end{align} 
It is commonly observed that the number of lookahead steps heavily influences the prediction accuracy and in most cases long-term prediction errors are usually larger than short-term ones. 

We will first consider the general prediction errors  without additional assumptions on $\delta_t(k)$. Then, we will carry out a more insightful discussion for the case when the prediction error $\|\delta_t(k)\|$ is non-decreasing  with the number of look-ahead steps $k$.
Further, it is also commonly observed that the prediction errors are correlated. To study how the correlation among prediction errors affect the algorithm performance, we adopt the stochastic model of prediction errors  in \cite{chen2015online}. The stochastic model is a more general version of the prediction errors for Wiener filter, Kalman filter, etc. In Section 5, we will  review  this stochastic model and analyze the performance under this model.

\nbf{Protocols.} We summarize the protocols of our online problem below. 
We consider that the agent knows the function form $f(\cdot\,;\cdot)$ and $d(\cdot\, , \cdot)$ a priori. For each stage $t=1, 2, \dots, T$, the agent
\begin{itemize}
	\item  receives the predictions $\theta_{t\mid t-1}, \dots, \theta_{T\mid t-1}$ at the beginning of stage;\footnote{
	If only $W$-step-ahead predictions are received, we define $\theta_{t+\tau\mid t-1}:=\theta_{t+W-1\mid t-1}$ for $\tau\geq W$.}
	\item selects $x_t$ based on the predictions and the history, i.e. $\theta_1, \dots, \theta_{t-1}, \theta_{t\mid t-1}, \dots, \theta_{T\mid t-1}$;
	\item  suffers $f(x_t; \theta_t)+ d(x_t, x_{t-1})$ at the end of stage after true $\theta_t$ is revealed.
\end{itemize}









\nbf{Performance metrics.} This paper considers (expected) dynamic regret \cite{jadbabaie2015online}.   
The benchmark is  the optimal solution $\bm x^*$ in hindsight when $\bm \theta$ is known, i.e.
$
\bm x^*= \argmin_{\bm x\in\X^T}C(\bm x; \bm \theta)
$,
where $\bm x^*=((x_1^*)^\top, \dots, (x_T^*)^\top)^\top$. Notice that $\bm x^*$ depends on $\bm \theta$ but we omit $\bm \theta$ for brevity. Let $\bm x^\A$ denote the actions selected by the online algorithm $\A$. The dynamic regret of $\A$ with parameter $\bm \theta$ is defined as
\begin{equation}
\text{Reg}(\A)= C(\bm x^{\A};\bm \theta) -C(\bm x^*;\bm \theta )\label{equ: def dyn regret}
\end{equation} 
When considering stochastic prediction errors,  we define the expectation of the dynamic regret: $$
 \E[\text{Reg}(\A)]= \E\left[C(\bm x^{\A};\bm\theta) -C(\bm x^*;\bm \theta )\right]\label{equ: def exp dyn regret},$$
where the expectation is taken with respect to the randomness of the prediction error as well as the randomness of $\theta_t$ if applicable. 


Lastly, we  consider the following assumptions throughout this paper.  
\begin{assumption}\label{ass: strong convexity smooth}
	$f(x;\theta)$ is $\alpha$ strongly convex and $l_f$ smooth with respect to $x\in \X$ for any $\theta\in \Theta$.  $d(x,x')$ is convex and $l_d$ smooth with respect to $x,x'\in \X$.
\end{assumption}
\begin{assumption}\label{ass: grad lip cont}
	$\nabla_x f(x;\theta)$ is $h$-Lipschitz continuous with respect to $\theta$ for any $x$, i.e. 
	\begin{align*}
	& \left\|\nabla_x f(x;\theta_1)- \nabla_x f(x;\theta_2)\right\|\leq h \|\theta_1-\theta_2\|, \  \forall\, x \in \X, \ \theta_1, \theta_2 \in \Theta.
	\end{align*}
\end{assumption}
Assumption \ref{ass: strong convexity smooth} is  common in convex optimization literature \cite{nesterov2013introductory}. Assumption \ref{ass: grad lip cont} ensures a small prediction error on $\theta$ only causes a small  error in the gradient. Without such an assumption,  little can be achieved with noisy predictions. Lastly, we note that these assumptions are for the purpose of  theoretical regret analysis. The designed algorithm would apply for general convex smooth functions. 
\section{\textbf{R}eceding Horizon \textbf{I}nexact \textbf{G}radient  (RHIG)} \label{sec: online alg}





This section introduces our online algorithm Receding Horizon Inexact Gradient (RHIG).  It is based on a promising online algorithm RHGD \cite{li2018online}  designed for an over-simplified prediction model: at stage $t$, the next $W$-stage  parameters $\{\theta_{\tau}\}_{\tau=t}^{t+W-1}$ are exactly known but  parameters beyond $W$ steps are adversarial and totally unknown. We will first briefly  review RHGD
and then introduce our RHIG as an extension of RHGD to  handle the  inaccurate multi-step-ahead  predictions.  




\subsection{Preliminary: RHGD with accurate lookahead window}
RHGD is  built on the following  observation: the $k$-th iteration of offline gradient descent (GD) on the total cost $C(\bm x; \bm \theta)$ for stage variable $x_{\tau}(k)$, i.e.,
\begin{equation}\label{equ: offline GD}
\begin{aligned}
&x_{\tau}(k) = \Pi_{\X}[x_{\tau}(k-1)-\eta \nabla_{ x_{\tau}} C(\bm x(k-1);\bm \theta)], \quad \forall 1\leq \tau \leq T, \\
\text{where} \quad & \nabla_{x_{\tau}} C(\bm x;\bm \theta)=\nabla_{x_{\tau}} f(x_{\tau};\theta_{\tau})+ \nabla_{x_{\tau}}  d(x_{\tau}, x_{\tau-1})+\nabla_{x_{\tau}} d(x_{\tau+1}, x_{\tau})\one_{(\tau\leq T-1)},
\end{aligned}
\end{equation}
  only requires neighboring stage variables 
$x_{\tau-1}(k-1), x_{\tau}(k-1), x_{\tau+1}(k-1)$ and local parameter $\theta_{\tau}$, instead of all  variables $\bm x(k-1)$ and all parameters $\bm \theta$. This  observation allows  RHGD  \cite{li2018online} (Algorithm \ref{alg:RHGD}) to implement the offline gradient \eqref{equ: offline GD} for $W$ iterations by only using $\{\theta_{\tau}\}_{\tau=t}^{t+W-1}$. Specifically, at  stage $2-W\leq t \leq T$, RHGD initializes $x_{t+W}(0)$ by an oracle $\phi$ (Line 4), where  $\phi$ can be  OCO algorithms (e.g. OGD, OMD \cite{hazan2016introduction}) that compute $x_{t+W}(0)$ with   $\{\theta_{t}\}_{t=1}^{t+W-1}$.\footnote{For instance, if OGD  is used as the initialization oracle $\phi$, then $x_{t+W}(0)=x_{t+W-1}(0)-\xi_{t+W} \nabla_{x} f(x_{t+W-1}(0);\theta_{t+W-1})$, where $\xi_{t+W}$ denotes the stepsize. } If $t+W>T$, skip this step. 
Next, RHGD applies the offline GD  \eqref{equ: offline GD} to compute $x_{t+W-1}(1),x_{t+W-2}(2), \dots, x_t(W)$, which only uses  $\theta_{t+W-1}, \dots, \theta_t$ respectively (Line 5-7). RHGD skips  $x_{\tau}$ if $\tau\not \in \{1, \dots, T\}$. Finally, RHGD outputs $x_t(W)$, the $W$-th update of offline GD. 



\setlength{\intextsep}{5pt}

\begin{algorithm}\caption{Receding Horizon  Gradient Descent (RHGD)\cite{li2018online}}
	\label{alg:RHGD}
	\begin{algorithmic}[1]
		\STATE \textbf{Inputs:} Initial decision $x_0$;  stepsize $\eta$; initialization oracle $\phi$
		\STATE Let $x_1(0)=x_0$. 
		\FOR{$t=2-W, \dots, T$}
		\STATE Initialize $x_{t+W}(0)$ by   oracle $\phi$  if $t+W\leq T$.

		\FOR{$\tau=\min(t+W-1,T)$ \textbf{downto} $\max(t,1)$}
		\STATE  Update $x_{\tau}(t+W-\tau)$ by the offline GD on $x_{\tau}$ in \eqref{equ: offline GD}.
		\ENDFOR
		\STATE Output $x_t(W)$ when $1\leq t \leq T$.
		\ENDFOR

	\end{algorithmic}
\end{algorithm}



	
%
\subsection{Our algorithm: RHIG for inaccurate predictions}\label{subsec: RHIG}
\begin{algorithm}\caption{Receding Horizon Inexact Gradient (RHIG)}
	\label{alg:RHIG}
	\begin{algorithmic}[1]
		\STATE \textbf{Inputs:} {The length of the lookahead horizon: $W\geq 0$;} initial decision $x_0$; stepsize $\eta$; initialization oracle $\phi$
		\STATE Let $x_1(0)=x_0$. 
		\FOR{$t=2-W$ to $T$}
		
			\IF{$t+W\leq T$}

		\STATE Compute $x_{t+W}(0)$ by the initialization oracle $\phi$ with inexact  information.
			
		
			\ENDIF
		
		\FOR{$\tau=\min(t+W-1,T)$ \textbf{downto} $\max(t,1)$}
		\STATE Compute $x_{\tau}(t+W-\tau)$ based on the prediction $\theta_{\tau\mid t-1}$ and the inexact partial gradient:
			\begin{align}\label{equ: inexact offline GD}
		&x_\tau(k)= \Pi_{\X}[x_\tau(k-1)-\eta g_\tau(x_{\tau-1:\tau+1}(k-1);\theta_{\tau\mid t-1})], \quad \text{where }k=t+W-\tau.
		\end{align}
		\ENDFOR
		\STATE Output the decision $x_t(W)$ when $1\leq t \leq T$.
		\ENDFOR
	\end{algorithmic}
\end{algorithm}
With noisy predictions, it is natural
to use the prediction $\theta_{\tau\mid t-1}$ to estimate the future partial gradients,
\begin{align*}
    g_{\tau}(x_{\tau-1:\tau+1};\theta_{\tau\mid t-1})=\nabla_{x_{\tau}} f(x_{\tau};\theta_{\tau\mid t-1})+ \nabla_{x_{\tau}}  d(x_{\tau}, x_{\tau-1})+\nabla_{x_{\tau}} d(x_{\tau+1}, x_{\tau})\one_{(\tau\leq T-1)},
\end{align*} and then updates $x_{\tau}$ by the estimated gradients. This motivates   Receding Horizon Inexact Gradient (RHIG) in Algorithm \ref{alg:RHIG}. Compared with RHGD, RHIG has  the following  major differences.

\begin{itemize}[leftmargin=18pt]
\item (Line 1) Unlike RHGD, the lookahead horizon length $W\geq 0$ is  tunable  in RHIG. When selecting $W=0$, RHIG does not use any predictions in Line 5-7. When selecting $1\leq W\leq T$, RHIG utilizes  at most $W$-step-ahead predictions  $\{\theta_{\tau\mid t-1}\}_{\tau=t}^{t+W-1}$ in Line 5-7. Specifically, when $W=T$, RHIG utilizes all the future predictions $\{\theta_{\tau\mid t-1}\}_{\tau=t}^{T}$. Interestingly, one can also select $W>T$. In this case, RHIG not only utilizes all the  predictions    but also  conducts more computation based on the initial predictions $\{\theta_{\tau\mid 0}\}_{\tau=1}^T$ at $t\leq 0$ (recall that $\theta_{\tau\mid t-1}=\theta_{\tau\mid 0}$ when $t\leq 0$). Notably, when $W\to +\infty$, RHIG essentially solves $ \argmin_{\bm x\in \X^T} C(\bm x; \{\theta_{\tau\mid 0}\}_{\tau=1}^T)$   at $t\leq 0$ to serve as  warm starts  at $t=1$.\footnote{For more discussion  on  $W>T$, we refer the reader to our supplementary material.} The  choice of $W$ will be discussed in Section \ref{sec: regret general}-\ref{sec: stochastic}.



\item (Line 5) Notice that the  oracle $\phi$  no longer receives  $\theta_{t+W-1}$ exactly in RHIG, so OCO algorithms need to be modified here.  For example,  OGD  initializes $x_{t+W}(0)$ by prediction $\theta_{t+W-1\mid t-1}$:
\begin{align}\label{equ: OGD initialization}
x_{\tau}(0)=\Pi_{\X}[x_{\tau-1}(0)-\xi_{\tau} \nabla_{x_{\tau-1}} f(x_{\tau-1}(0); \theta_{\tau-1\mid t-1})], \quad \text{where } \tau=t+W.
\end{align}
Besides, we note that since  $\theta_{\tau\mid t-1}$ is available,  OGD \eqref{equ: OGD initialization} can also use $\theta_{\tau\mid t-1}$ to update $x_{\tau}(0)$. Similarly, OCO algorithms with predictions, e.g. (A)OMD \cite{rakhlin2013online,jadbabaie2015online}, DMD \cite{hall2015online}, can  be applied.

\item  (Line 7) Instead of  exact offline GD in RHGD, RHIG can be interpreted as inexact offline GD with prediction errors. Especially,  \eqref{equ: inexact offline GD} can be written as $ x_{\tau}(k)=
 x_{\tau}(k-1)-\eta \nabla_{ x_{\tau}} C(\bm x(k-1);  \theta_{\tau} -  \delta_{\tau}(W-k+1))$
by the definition \eqref{equ: def of deltat(k)}. More compactly, we can write RHIG updates as
\begin{align}\label{equ: RHGD as PGD}
\bm x(k)&=\Pi_{\mathbb X^T}\left[
\bm x(k-1)-\eta \nabla_{\bm x} C(\bm x(k-1); \bm \theta - \bm \delta(W-k+1))\right], \quad \forall\, 1\leq k \leq W,
\end{align} 
where  $\nabla_{\bm x} C(\bm x(k-1); \bm \theta - \bm \delta(W-k+1))$ is an inexact version of the  gradient $\nabla_{\bm x} C(\bm x(k-1); \bm \theta )$.

\end{itemize}

	\begin{figure}[t]
		\centering
		\includegraphics[scale=0.45]{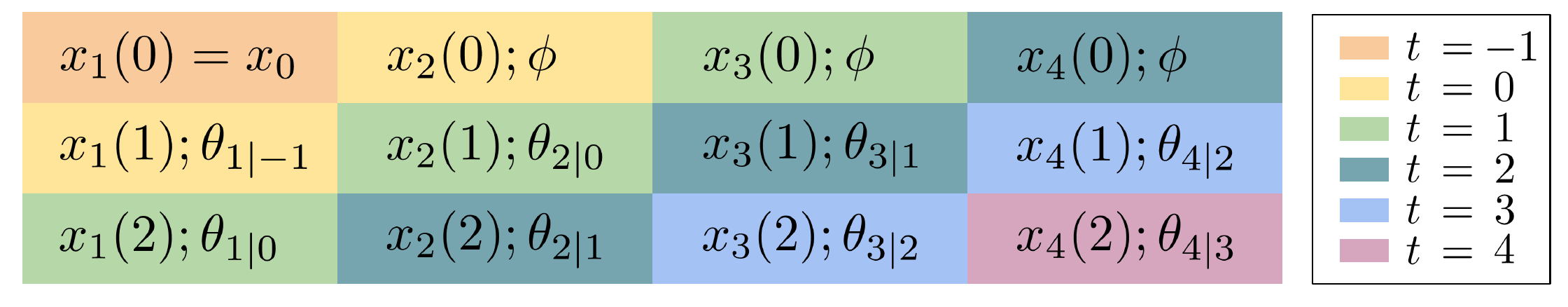}
			\caption{Example: RHIG   for  $W=2, T=4$. (Orange) at $t=-1$,  let $x_1(0)=x_0$. (Yellow) at $t=0$,   initialize $x_2(0)$ by $\phi$,  then compute $x_1(1)$ by inexact offline GD \eqref{equ: inexact offline GD} with prediction $\theta_{1\mid -1}=\theta_{1\mid 0}$. (Green) At $t=1$,  initialize $x_3(0)$ by $\phi$, and update $x_2(1)$ and  $x_1(2)$ by \eqref{equ: inexact offline GD} with $\theta_{2\mid 0}$ and $\theta_{1\mid 0}$ respectively. At $t=2$, initialize $x_4(0)$ by $\phi$, then update $x_3(1)$, $x_2(2)$ by inexact offline GD \eqref{equ: inexact offline GD} with  $\theta_{3\mid 1}$ and $\theta_{2\mid 1}$ respectively.   $t=3,4$ are similar. Notice that $ \bm x(1)=(x_1(1), \dots, x_4(1))$ is computed by inexact offline gradient with 2-step-ahead predictions, and $\bm x(2)$   by  1-step-ahead predictions.}
		\label{fig: illustration}
	\end{figure}

Though the design of RHIG is rather straightforward, both theoretical analysis and numerical experiments show promising performance of  RHIG  even under poor long-term predictions  (Section \ref{sec: regret general}-\ref{sec: simulation}). Some intuitions  are discussed below. By formula \eqref{equ: RHGD as PGD}, as the iteration number $k$ increases, RHIG employs inexact gradients with shorter-term prediction errors $\bm \delta(W-k+1)$. Since shorter-term predictions are often more accurate than  the longer-term ones, RHIG gradually utilizes more accurate gradient information as iterations go on,  reducing the optimality gap caused by inexact gradients. Further, the  longer-term prediction errors used at the first several iterations are  compressed by later  gradient updates, especially for  strongly convex costs  where GD enjoys certain contraction property.


Lastly, with a gradient-based   $\phi$ and a finite $W$,   RHIG only utilizes   gradient updates at each $t$ and is thus  more computationally efficient than  AFHC \cite{chen2015online} and CHC \cite{chen2016using} that  solve multi-stage optimization.


\section{General Regret Analysis}\label{sec: regret general}

%
%
%

This section considers general prediction errors \textit{without} stochastic model assumptions and provides  dynamic regret bounds and  discussions,\footnote{The results in this section can be extended to more general time-varying cost functions, i.e. $f_t(\cdot)$, where the prediction errors will be measured by the difference in the gradients, i.e. $\sup_{x\in \mathbb X}\|\nabla f_t(x)-\nabla f_{t\mid t-k}(x)\|$.}
before which is a helping lemma on the properties of $C(\bm x;\bm \theta)$. 


\begin{lemma}\label{lem: property C(x,theta)}
	$C(\bm x; \bm \theta)$ is $\alpha$ strongly convex and $L=l_f+2l_d$ smooth with  $\bm x\in \X^T$ for any $\bm \theta \in \Theta^T$. 
\end{lemma}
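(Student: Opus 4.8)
The plan is to split the total cost as $C(\bm x;\bm\theta)=F(\bm x)+D(\bm x)$ into the block-separable stage-cost part $F(\bm x)=\sum_{t=1}^T f(x_t;\theta_t)$ and the coupling switching-cost part $D(\bm x)=\sum_{t=1}^T d(x_t,x_{t-1})$ (with $x_0$ the fixed initial decision), and to prove each claim piece by piece, using that convex terms preserve strong convexity and that Lipschitz gradient constants obey the triangle inequality.

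For strong convexity I would show $C(\bm x;\bm\theta)-\tfrac{\alpha}{2}\|\bm x\|^2$ is convex in $\bm x$. Since $\|\bm x\|^2=\sum_t\|x_t\|^2$ decomposes across blocks, this function equals $\sum_t\big(f(x_t;\theta_t)-\tfrac{\alpha}{2}\|x_t\|^2\big)+D(\bm x)$. Each summand in the first sum is convex by the $\alpha$-strong convexity of $f(\cdot;\theta_t)$ from Assumption~\ref{ass: strong convexity smooth}, and $D$ is a sum of the convex functions $d$, hence convex. Thus the whole expression is convex and $C$ is $\alpha$-strongly convex; the switching costs contribute convexity but no extra curvature.

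For smoothness I would bound the Lipschitz constant of $\nabla_{\bm x}C$. The part $F$ is block separable, so $\|\nabla_{\bm x}F(\bm x)-\nabla_{\bm x}F(\bm y)\|^2=\sum_t\|\nabla_{x_t}f(x_t;\theta_t)-\nabla_{x_t}f(y_t;\theta_t)\|^2\le l_f^2\|\bm x-\bm y\|^2$ by the $l_f$-smoothness of $f$, i.e.\ $F$ is $l_f$-smooth. The crux is $D$: the delicate point is obtaining exactly the factor $2$ (rather than the trivial $T\,l_d$, or a lossy $2\sqrt 2\,l_d$ from a direct Minkowski estimate), since each variable $x_\tau$ appears in precisely the two consecutive coupling terms $d(x_\tau,x_{\tau-1})$ and $d(x_{\tau+1},x_\tau)$. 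My plan is an even--odd decomposition $D=D_{\mathrm{odd}}+D_{\mathrm{even}}$, where $D_{\mathrm{odd}}=\sum_{t\ \mathrm{odd}}d(x_t,x_{t-1})$ and $D_{\mathrm{even}}=\sum_{t\ \mathrm{even}}d(x_t,x_{t-1})$. Within each group the coupled pairs $\{x_{t-1},x_t\}$ are pairwise disjoint, so each group is block separable over those pairs and inherits $l_d$-smoothness from the joint $l_d$-smoothness of $d$ by the same Pythagorean estimate used for $F$. Adding the two gradient differences via the triangle inequality then yields $\|\nabla_{\bm x}D(\bm x)-\nabla_{\bm x}D(\bm y)\|\le 2l_d\|\bm x-\bm y\|$, so $D$ is $2l_d$-smooth.

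Finally, combining through $\nabla_{\bm x}C=\nabla_{\bm x}F+\nabla_{\bm x}D$ and the triangle inequality gives $\|\nabla_{\bm x}C(\bm x)-\nabla_{\bm x}C(\bm y)\|\le(l_f+2l_d)\|\bm x-\bm y\|$, establishing $L=l_f+2l_d$ smoothness. The main obstacle, as noted, is precisely the switching-cost smoothness constant: a naive summation over the $T$ overlapping coupling terms is far too loose, and the even--odd grouping is what converts the ``each coordinate shared by two terms'' structure into the tight constant $2$ while keeping disjoint, hence separable, blocks within each group.
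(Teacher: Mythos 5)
Your proof is correct, and for the smoothness part it takes a genuinely different route from the paper's. The paper never passes through a Lipschitz bound on $\nabla_{\bm x}C$: it simply writes the function-value smoothness (descent-type) inequality for each term, i.e.\ $f(y_t;\theta_t)\le f(x_t;\theta_t)+\langle\nabla_{x_t}f(x_t;\theta_t),y_t-x_t\rangle+\frac{l_f}{2}\|y_t-x_t\|^2$ and $d(y_t,y_{t-1})\le d(x_t,x_{t-1})+\langle\nabla_{x_t}d,y_t-x_t\rangle+\langle\nabla_{x_{t-1}}d,y_{t-1}-x_{t-1}\rangle+\frac{l_d}{2}\bigl(\|y_t-x_t\|^2+\|y_{t-1}-x_{t-1}\|^2\bigr)$, and sums over $t$; since each $\|y_\tau-x_\tau\|^2$ appears once with weight $\frac{l_f}{2}$ and in at most two switching-cost inequalities with weight $\frac{l_d}{2}$ each, the coefficient $\frac{l_f+2l_d}{2}$ falls out immediately, with no grouping needed. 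So your remark that ``naive summation over the $T$ overlapping coupling terms is far too loose'' is true only at the gradient level (where the direct Minkowski estimate indeed gives $2\sqrt{2}\,l_d$, or worse with crude bounding); at the function-value level the naive summation is exactly the paper's proof and is already tight. Your even--odd decomposition of $D$ into two block-separable pieces, each $l_d$-smooth by the Pythagorean identity over disjoint pairs $(x_{t-1},x_t)$ (with the $t=1$ term depending only on $x_1$ since $x_0$ is fixed), followed by two triangle inequalities, is a valid and nice alternative: it buys you the formally stronger statement that $\nabla_{\bm x}C$ is $(l_f+2l_d)$-Lipschitz in the standard sense, which in turn implies the descent inequality the paper's later lemmas (e.g.\ the smoothness-with-errors inequality in the appendix) actually use, at the cost of a slightly more elaborate argument than the paper's one-line summation. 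Your strong convexity argument (peeling off $\frac{\alpha}{2}\|\bm x\|^2$ blockwise and adding the convex switching costs) is essentially identical to the paper's.
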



The following theorem provides a general regret bound for  RHIG with any initialization oracle $\phi$.

\begin{theorem}[General Regret Bound]\label{thm: general}
		Under Assumption~\ref{ass: strong convexity smooth}-\ref{ass: grad lip cont}, for   $W\geq 0$,  oracle $\phi$,   $\eta=\frac{1}{2L}$, we have
	\begin{align}\label{equ: general regret bdd}
\textup{Reg}(RHIG)
\leq & \ \frac{2L}{\alpha}\rho^W \textup{Reg}(\phi)+ \zeta \sum_{k=1}^{\min(W,T)}\rho^{k-1}\|\bm \delta(k)\|^2+ \one_{(W>T)} \frac{\rho^T-\rho^W}{1-\rho}\zeta \|\bm \delta(T)\|^2,
	\end{align}
	where $\rho=1-\frac{\alpha}{4L}$, $\zeta=\frac{h^2}{\alpha}+\frac{h^2}{2L}$, $\textup{Reg}(\phi)= C(\bm x(0); \bm \theta)-C(\bm x^*;\bm \theta)$ and $\bm x(0)$ is computed by   $\phi$. 
\end{theorem}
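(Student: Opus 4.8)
The plan is to exploit the compact representation \eqref{equ: RHGD as PGD}, which identifies the output $\bm x^{\textup{RHIG}}=\bm x(W)$ with $W$ steps of projected gradient descent on $C(\cdot\,;\bm\theta)$ in which the $k$-th step uses an \emph{inexact} gradient corrupted by the error $\bm\delta(W-k+1)$. Writing $e_k:=\nabla_{\bm x}C(\bm x(k-1);\bm\theta-\bm\delta(W-k+1))-\nabla_{\bm x}C(\bm x(k-1);\bm\theta)$ for the gradient error at iteration $k$, the first step is to bound $\|e_k\|$. Since the switching costs $d$ do not depend on $\bm\theta$, only the $f$-gradients are perturbed, so applying Assumption~\ref{ass: grad lip cont} coordinate-wise and summing gives $\|e_k\|^2\le h^2\|\bm\delta(W-k+1)\|^2$. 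This is where the $h$ (and hence $\zeta$) enters, and it is the only place the prediction errors appear.

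The core of the argument is a one-step contraction for the squared distance $D_k:=\|\bm x(k)-\bm x^*\|^2$. I would use that $\bm x^*$ is a fixed point of the projected-gradient map (the first-order optimality condition) together with nonexpansiveness of $\Pi_{\X^T}$ to write $\bm x(k)-\bm x^*$ as the projection difference of $\bm x(k-1)-\eta(\nabla_{\bm x}C(\bm x(k-1);\bm\theta)+e_k)$ and $\bm x^*-\eta\nabla_{\bm x}C(\bm x^*;\bm\theta)$, and then split the resulting square into an exact-gradient part and an error part via Young's inequality $\|a+b\|^2\le(1+\beta)\|a\|^2+(1+\beta^{-1})\|b\|^2$. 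By Lemma~\ref{lem: property C(x,theta)}, $C$ is $\alpha$-strongly convex and $L$-smooth, so with $\eta=\frac{1}{2L}$ the exact-gradient map is a contraction; choosing $\beta$ to absorb the cross term yields a recursion
\[
D_k\le\rho\,D_{k-1}+c\,\|e_k\|^2,\qquad \rho=1-\tfrac{\alpha}{4L},
\]
for an explicit constant $c=c(\alpha,L)$. The delicate point, and the step I expect to be the main obstacle, is calibrating the strongly-convex/smooth contraction estimate jointly with the Young's-inequality parameter $\beta$ so that the factor is exactly $\rho=1-\alpha/(4L)$ while keeping $c$ small enough that the downstream constants collapse to the stated $\zeta=\frac{h^2}{\alpha}+\frac{h^2}{2L}$ and prefactor $\frac{2L}{\alpha}$.

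Finally I would unroll the recursion to obtain $D_W\le\rho^W D_0+c\sum_{k=1}^W\rho^{W-k}\|e_k\|^2$, and convert the two endpoints using the quadratic bounds from Lemma~\ref{lem: property C(x,theta)}: $\alpha$-strong convexity together with optimality of $\bm x^*$ gives $D_0\le\frac{2}{\alpha}\textup{Reg}(\phi)$ (recall $\bm x(0)$ is the oracle output), and $L$-smoothness combined with convexity bounds $\textup{Reg}(\textup{RHIG})=C(\bm x(W))-C(\bm x^*)$ above by a multiple of $D_W$; these conversions are what produce the $\frac{2L}{\alpha}$ prefactor. Substituting $\|e_k\|^2\le h^2\|\bm\delta(W-k+1)\|^2$ and re-indexing by $m=W-k+1$ turns the error sum into $\sum_{m=1}^{W}\rho^{m-1}\|\bm\delta(m)\|^2$. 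For $W\le T$ this is precisely the middle term of \eqref{equ: general regret bdd}. For $W>T$ I would invoke the convention $\bm\delta(m)=\bm\delta(T)$ for $m\ge T$ (from \eqref{equ: def of deltat(k)}) to peel off the tail $\sum_{m=T+1}^{W}\rho^{m-1}=\frac{\rho^T-\rho^W}{1-\rho}$, which yields the indicator term; the rest is routine bookkeeping of constants.
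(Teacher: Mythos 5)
Your overall architecture is the same as the paper's: the inexact-PGD interpretation \eqref{equ: RHGD as PGD}, the gradient-error bound $\|e_k\|^2\le h^2\|\bm\delta(W-k+1)\|^2$ (the paper proves exactly this, coordinate-wise via Assumption~\ref{ass: grad lip cont}, noting $d$ is $\theta$-free), a per-step recursion $D_k\le\rho D_{k-1}+c\|e_k\|^2$, unrolling, the conversion $D_0\le\frac{2}{\alpha}\textup{Reg}(\phi)$ (valid in the constrained case since $\langle\nabla_{\bm x}C(\bm x^*;\bm\theta),\bm x(0)-\bm x^*\rangle\ge 0$), and the geometric tail $\sum_{m=T+1}^{W}\rho^{m-1}=\frac{\rho^T-\rho^W}{1-\rho}$ for $W>T$. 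Your mechanism for the recursion differs mildly from the paper's: you use nonexpansiveness of $\Pi_{\X^T}$, the fixed-point identity $\bm x^*=\Pi_{\X^T}[\bm x^*-\eta\nabla_{\bm x}C(\bm x^*;\bm\theta)]$, and Young's inequality, whereas the paper works at the function-value level through the variational inequality of the prox step plus ``strong-convexity/smoothness with errors'' lemmas. Your route works, and the constant calibration you flagged as the delicate point is not actually an obstacle: with $\eta=\frac{1}{2L}$ the exact map contracts squared distances by at most $1-\frac{\alpha}{\alpha+L}\le 1-\frac{\alpha}{2L}$, and a Young parameter of $\frac{\alpha}{4L-2\alpha}$ gives factor exactly $\rho=1-\frac{\alpha}{4L}$ with error coefficient $c\le\frac{1}{\alpha L}$, which is even slightly smaller than the paper's $\frac{\zeta}{L}=\frac{h^2}{\alpha L}+\frac{h^2}{2L^2}$ per step. (Minor: rename your Young parameter, as $\beta$ is already taken by Assumption~\ref{ass: d <beta (x-y)^2/2}.)

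The genuine gap is your final step, ``$L$-smoothness combined with convexity bounds $\textup{Reg}(RHIG)$ above by a multiple of $D_W$.'' That is true unconstrained, where $\nabla_{\bm x}C(\bm x^*;\bm\theta)=0$, but Theorem~\ref{thm: general} is a constrained problem: smoothness only gives $C(\bm x(W);\bm\theta)-C(\bm x^*;\bm\theta)\le\langle\nabla_{\bm x}C(\bm x^*;\bm\theta),\bm x(W)-\bm x^*\rangle+\frac{L}{2}D_W$, and first-order optimality makes the inner product \emph{nonnegative} for feasible $\bm x(W)$, so it cannot be dropped; it scales like $\|\nabla_{\bm x}C(\bm x^*;\bm\theta)\|\sqrt{D_W}$, a linear-in-distance term that cannot be absorbed into the claimed bound, and no gradient bound is available here (Assumption~\ref{ass: d <beta (x-y)^2/2}(i) is only invoked from Theorem~\ref{thm: ogd dyn regret} onward). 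The paper avoids this by unrolling the recursion only to $r_{W-1}$ and handling the \emph{last} iteration at the function-value level: it uses the prox characterization of the projected update (its Lemma~\ref{lem: project GD= min}) with $\eta=\frac{1}{2L}$, evaluates the resulting minimization at $\bm x=\bm x^*$, and applies the strong-convexity-with-errors inequality (Lemma~\ref{lem: strong convex with error C}) to obtain $\textup{Reg}(RHIG)\le L\rho\,r_{W-1}+\zeta\|\bm\delta(1)\|^2$, a purely quadratic-in-distance bound. Graft that last-step argument onto your contraction recursion and the rest of your plan — the re-indexing $m=W-k+1$, the $\frac{2}{\alpha}$ conversion of $D_0$, and the $W>T$ tail via the convention $\bm\delta(m)=\bm\delta(T)$ — goes through verbatim and recovers \eqref{equ: general regret bdd}.
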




The regret bound \eqref{equ: general regret bdd} consists of three terms. The first term $\frac{2L}{\alpha} \rho^W \text{Reg}(\phi)$ depends on $\phi$. The second term $\zeta \sum_{k=1}^{\min(W,T)}\rho^{k-1}\|\bm \delta(k)\|^2$ and the third term $\one_{(W>T)} \frac{\rho^T-\rho^W}{1-\rho}\zeta \|\bm \delta(T)\|^2$ depend on  the errors of the predictions used in Algorithm \ref{alg:RHIG} (Line 5-7). Specifically, when $W\leq T$, at most $W$-step-ahead predictions are used, so the second term  involves at most $W$-step-ahead prediction errors $\{\bm \delta(k)\}_{k=1}^W$ (the third term is irrelevant). When $W>T$, RHIG uses all   predictions, so the second term includes all prediction errors $\{\bm \delta(k)\}_{k=1}^T$; besides,  RHIG conducts more computation by the initial predictions $\{\theta_{t\mid 0}\}_{t=1}^T$ at $t\leq 0$ (see Section \ref{sec: online alg}), causing the  third term on the initial prediction error $\|\bm \delta(T)\|^2$.

\nbf{An example of  $\phi$: restarted OGD \cite{besbes2015non}.} For more concrete discussions on the regret bound, we consider a specific $\phi$, restarted OGD   \cite{besbes2015non}, as  reviewed below. Consider an epoch size $\Delta$ and divide $T$ stages into $\ceil{T/\Delta}$ epochs with size $\Delta$.  In each epoch $k$, restart  OGD \eqref{equ: OGD initialization} and let $\xi_t=\frac{4}{\alpha j}$ at $t=k \Delta +j$ for $1\leq j\leq \Delta$.
Similar to \cite{besbes2015non}, we define the  variation of the environment as $V_T=\sum_{t=1}^T\operatorname{sup}_{x\in \X}| f(x;\theta_t)-f(x;\theta_{t-1})|$, and consider $V_T$ is known and $1\leq V_T\leq T$.\footnote{This is without loss of generality. When $V_T$ is unknown, we can use doubling tricks and adaptive stepsizes to generate similar bounds \cite{jadbabaie2015online}. $1\leq V_T\leq T$ can be enforced  by defining a proper $\theta_0$ and by normalization.} To obtain a meaningful regret bound, we impose   Assumption \ref{ass: d <beta (x-y)^2/2}, where  condition i) is common in OCO literature \cite{besbes2015non,jadbabaie2015online,mokhtari2016online} and  condition ii) requires a small switching cost under a small change of actions.

\begin{assumption}\label{ass: d <beta (x-y)^2/2}
 i)	There exists $G>0$ such that
	$ \|\nabla_x f(x; \theta)\|\leq G, \  \forall\, x\in \X, \theta\in \Theta$. ii) There exists $\beta$ such that $0\leq d(x,x') \leq \frac{\beta}{2}\|x-x'\|^2$.\footnote{Other norms work too, only leading to different constant factors in the regret bounds.}
\end{assumption}


\begin{theorem}[Regret bound of restarted OGD]\label{thm: ogd dyn regret} 
Under Assumption \ref{ass: strong convexity smooth}-\ref{ass: d <beta (x-y)^2/2}, consider $T>2$ and $\Delta=\ceil{\sqrt{2T/V_T}}$,   the initialization   based on restarted OGD  described above satisfies the  regret bound:
	\begin{align}\label{equ: ogd regret}
	\textup{Reg}(OGD)\leq C_1 \sqrt{V_T T}\log(1+\sqrt{T/V_T})+ \frac{h^2}{\alpha} \|\bm\delta(\min(W,T))\|^2,
	\end{align}
	where $C_1=\frac{4\sqrt 2G^2}{\alpha}+\frac{32\sqrt 2\beta G^2}{\alpha^2} + 20$.

\end{theorem}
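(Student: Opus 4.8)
The plan is to bound $\textup{Reg}(\phi)=C(\bm x(0);\bm\theta)-C(\bm x^*;\bm\theta)$ by splitting the total cost into its stage and switching parts and controlling each separately. Writing $x_t^o:=\argmin_{x\in\X}f(x;\theta_t)$ for the per-stage minimizers, and using $d\ge 0$ together with the optimality of $x_t^o$, I would first reduce
$$\textup{Reg}(\phi)\le \sum_{t=1}^T\big[f(x_t(0);\theta_t)-f(x_t^o;\theta_t)\big]+\frac{\beta}{2}\sum_{t=1}^T\|x_t(0)-x_{t-1}(0)\|^2,$$
where the first sum is the dynamic regret of the restarted-OGD initializer against the per-stage minimizers and the second comes from Assumption~\ref{ass: d <beta (x-y)^2/2}(ii) combined with $d(x_t^*,x_{t-1}^*)\ge 0$. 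This decoupling is exactly why restarted OGD, whose updates use only $\nabla_x f$, suffices: the switching cost enters solely as an additive penalty on the movement of the iterates.

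For the switching term, since the projection $\Pi_{\X}$ is nonexpansive and the iterates are pure gradient steps, $\|x_t(0)-x_{t-1}(0)\|\le \xi_t\|\nabla_x f\|\le \xi_t G$ by Assumption~\ref{ass: d <beta (x-y)^2/2}(i). Summing $\frac{\beta}{2}\xi_t^2G^2$ within one epoch with $\xi_t=\frac{4}{\alpha j}$ gives $O(\tfrac{\beta G^2}{\alpha^2})$ per epoch, since $\sum_j 1/j^2$ is bounded; multiplying by the number of epochs $\lceil T/\Delta\rceil\approx\sqrt{V_TT/2}$ produces the $\frac{\beta G^2}{\alpha^2}\sqrt{V_TT}$ contribution to $C_1$.

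The main work is the dynamic-regret term, which I would handle epoch by epoch following the restarted-OGD analysis of \cite{besbes2015non}. Within each epoch I run the standard strongly-convex OGD telescoping on $\|x_t(0)-z\|^2$, but with the inexact gradient $\nabla_x f(\cdot;\theta_{\cdot\mid t-1})$; Assumption~\ref{ass: grad lip cont} bounds the gradient error by $h\|\delta_\tau(\min(W,T))\|$, and a Young's-inequality split of the resulting cross term produces both the usual $O(\tfrac{G^2}{\alpha}\log\Delta)$ static regret against the best fixed point of the epoch and the aggregate prediction-error penalty $\frac{h^2}{\alpha}\|\bm\delta(\min(W,T))\|^2$. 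I would then pass from the fixed comparator $z_k$ to the moving minimizers $\{x_t^o\}$ by charging the gap $\sum_{t\in\text{epoch}}\big[f(z_k;\theta_t)-f(x_t^o;\theta_t)\big]$ to the within-epoch variation, so that summing over epochs contributes $O(\Delta V_T)$. Choosing $\Delta=\lceil\sqrt{2T/V_T}\rceil$ balances the $\frac{T}{\Delta}\cdot\frac{G^2}{\alpha}\log\Delta$ static term against the $\Delta V_T$ variation term, both of order $\sqrt{V_TT}$ up to the logarithm $\log(1+\sqrt{T/V_T})$.

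The hardest step is the passage from static to dynamic regret through the variation budget: one must choose the epoch comparator so that the variation charge telescopes cleanly into $V_T$ and so that the constants assemble into $C_1=\frac{4\sqrt2 G^2}{\alpha}+\frac{32\sqrt2\beta G^2}{\alpha^2}+20$. The inexact-gradient bookkeeping and the switching-cost bound are routine by comparison; the one point requiring care there is verifying that the prediction errors aggregating across all stages reconstitute exactly $\|\bm\delta(\min(W,T))\|^2$ rather than a larger quantity.
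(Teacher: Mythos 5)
Your proposal is correct and follows essentially the same route as the paper's proof: the identical decomposition (dropping the comparator's switching cost via $d\ge 0$, comparing to per-stage minimizers, epoch-wise static regret against the epoch minimizer with a $2\Delta V_T$ variation charge), the same per-epoch switching-cost bound $\frac{\beta G^2}{2}\sum_j\xi_j^2=O(\beta G^2/\alpha^2)$ from nonexpansiveness of $\Pi_{\X}$, and the same Young's-inequality handling of the inexact gradients via Assumption~\ref{ass: grad lip cont}, which is exactly how the paper's per-epoch static-regret lemma produces the additive $\frac{h^2}{\alpha}\|\bm\delta(\min(W,T))\|^2$ term before balancing with $\Delta=\lceil\sqrt{2T/V_T}\rceil$. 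No gaps.
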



Notice that restarted OGD's regret bound \eqref{equ: ogd regret} consists of two terms: the first term $C_1 \sqrt{V_T T}\log(1+\sqrt{T/V_T})$ is consistent with the original regret bound in \cite{besbes2015non} for strongly convex costs, which increases with the environment's variation  $V_T$; the second term depends on the $\min(W,T)$-step prediction error, which is intuitive since OGD \eqref{equ: OGD initialization} in our setting only has access to the inexact gradient $\nabla_{x_{s-1}} f(x_{s-1}(0); \theta_{s-1\mid s-W-1})$ predicted by the $\min(W,T)$-step-ahead prediction $\theta_{s-1\mid s-W-1}$.\footnote{We have this   error term  because we do not impose the stochastic structures of the gradient errors   in \cite{besbes2015non}.}

\begin{corollary}[RHIG with restarted OGD initialization]\label{cor: RHIG-OGD determine}
    	Under the conditions in Theorem  \ref{thm: general} and \ref{thm: ogd dyn regret}, RHIG with $\phi$ based on restarted OGD  satisfies
		\begin{align*}
	\textup{Reg}(RHIG)\leq \,&\underbrace{\rho^W \frac{2L}{\alpha} C_1\sqrt{V_T T}\log(1+\sqrt{T/V_T})}_{\textup{\blue{Part I}}}\\
	&+\! \underbrace{{\frac{2L}{\alpha}}\frac{h^2}{\alpha}\rho^{W}\!\|\bm\delta(\min(W,T))\|^2\!+\! \! \sum_{k=1}^{\min(W,T)}\! \!\zeta\rho^{k-1}\|\bm \delta(k)\|^2\!+\!\one_{(W>T)} \frac{\rho^T\!-\!\rho^W}{1-\rho}\zeta \|\bm \delta(T)\|^2}_{\textup{\blue{Part II}}}.
	\end{align*}
	where  $\rho=1-\frac{\alpha}{4L}$, $\zeta=\frac{h^2}{\alpha}+\frac{h^2}{2L}$, and $C_1$ is defined in Theorem \ref{thm: ogd dyn regret}.
\end{corollary}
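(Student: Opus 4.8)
The plan is to obtain Corollary \ref{cor: RHIG-OGD determine} as a direct consequence of the two preceding results rather than through any new analysis. I would instantiate the general RHIG bound of Theorem \ref{thm: general} with the particular initialization oracle $\phi$ taken to be the restarted-OGD scheme, and then substitute the oracle-regret estimate $\textup{Reg}(\phi)=\textup{Reg}(OGD)$ supplied by Theorem \ref{thm: ogd dyn regret}. Before substituting, I would confirm that the hypotheses of both theorems hold simultaneously under Assumptions \ref{ass: strong convexity smooth}--\ref{ass: d <beta (x-y)^2/2}, and in particular that restarted OGD is an admissible oracle in the sense required by Theorem \ref{thm: general}: it produces the warm-start iterate $\bm x(0)$ from the available inexact information exactly as described in Section \ref{subsec: RHIG}, so $\textup{Reg}(\phi)=C(\bm x(0);\bm\theta)-C(\bm x^*;\bm\theta)$ is precisely the quantity bounded in Theorem \ref{thm: ogd dyn regret}.

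The substitution itself is the core step. Theorem \ref{thm: general} with $\eta=\frac{1}{2L}$ gives
\begin{equation*}
\textup{Reg}(RHIG)\leq \frac{2L}{\alpha}\rho^{W}\textup{Reg}(\phi)+\zeta\sum_{k=1}^{\min(W,T)}\rho^{k-1}\|\bm\delta(k)\|^2+\one_{(W>T)}\frac{\rho^{T}-\rho^{W}}{1-\rho}\zeta\|\bm\delta(T)\|^2.
\end{equation*}
Plugging the two-term bound of Theorem \ref{thm: ogd dyn regret} in for $\textup{Reg}(\phi)$ and distributing the factor $\frac{2L}{\alpha}\rho^{W}$, the environment-variation term $C_1\sqrt{V_T T}\log(1+\sqrt{T/V_T})$ yields Part I, while the prediction-error term $\frac{h^2}{\alpha}\|\bm\delta(\min(W,T))\|^2$ becomes the leading summand $\frac{2L}{\alpha}\frac{h^2}{\alpha}\rho^{W}\|\bm\delta(\min(W,T))\|^2$ of Part II. The remaining two terms, carried over verbatim from Theorem \ref{thm: general}, complete Part II and match the claimed expression term by term. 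The factor $\rho=1-\frac{\alpha}{4L}$ and the constant $\zeta=\frac{h^2}{\alpha}+\frac{h^2}{2L}$ are inherited unchanged.

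Because the analytic content is already contained in Theorems \ref{thm: general} and \ref{thm: ogd dyn regret}, there is no genuine obstacle; the only point requiring care is verifying compatibility of the two parameter settings. Theorem \ref{thm: general} fixes the RHIG gradient stepsize at $\eta=\frac{1}{2L}$, whereas Theorem \ref{thm: ogd dyn regret} specifies the internal OGD stepsize schedule $\xi_t=\frac{4}{\alpha j}$ and epoch length $\Delta=\ceil{\sqrt{2T/V_T}}$ together with $T>2$ and $1\leq V_T\leq T$. Since these govern disjoint components of the algorithm, namely the inexact gradient updates of Line 7 versus the initialization in Line 5, they impose no conflicting constraints, and I would verify this separation explicitly. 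I would close by noting that the resulting bound holds for every $W\geq 0$, so that the two regimes $W\leq T$ and $W>T$ are simultaneously covered by the single expression through the indicator $\one_{(W>T)}$.
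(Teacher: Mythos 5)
Your proposal is correct and follows essentially the same route as the paper, whose proof of Corollary \ref{cor: RHIG-OGD determine} is precisely the substitution of the restarted-OGD bound from Theorem \ref{thm: ogd dyn regret} into the general bound of Theorem \ref{thm: general} and distribution of the factor $\frac{2L}{\alpha}\rho^{W}$. Your additional checks --- that restarted OGD is an admissible oracle producing $\bm x(0)$ from inexact information, and that the stepsize $\eta=\frac{1}{2L}$ and the OGD schedule $\xi_t$ govern disjoint parts of the algorithm --- are sound and merely make explicit what the paper leaves implicit.
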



{\nbf{The order of the regret bound.} The regret bound in Corollary \ref{cor: RHIG-OGD determine} consists of two parts: Part I involves the variation of the environment $V_T$; while Part II consists of the prediction errors $\{\bm \delta(k)\}_{k=1}^{\min(W,T)}$. The regret bound   can be written as  $\tilde O(\rho^W\sqrt{V_T T}+\sum_{k=1}^{\min(W,T)}\rho^{k-1} \|\bm \delta(k)\|^2)$. The prediction errors $\|\bm \delta(k)\|^2$ can be either larger or smaller than $V_T$ as mentioned in \cite{jadbabaie2015online}. When $V_T=o(T)$ and $\|\bm \delta(k)\|^2=o(T)$ for  $k\leq W$, the regret bound is $o(T)$. As a simple example of sublinear regrets, consider $\theta_{t-1}$ as the prediction of  $\theta_{t+k}$ ($k\geq 0$) at time $t$, then $\|\bm \delta(k)\|^2=O(V_T)$ under proper assumptions, so when $V_T=o(T)$, the regret  is $o(T)$.}

{\nbf{Impact of $V_T$.} The environment  variation $V_T$ only shows up in the Part I of the regret bound in Corollary \ref{cor: RHIG-OGD determine}. Fixing $V_T$, notice that Part I decays exponentially with the lookahead window $W$. This suggests that the impact of the environment variation $V_T$ on the regret bound decays exponentially when one considers a larger lookahead window $W$, which is intuitive since long-term thinking/planning allows early preparation for changes in the future and thus mitigates the negative impact of the environment variation.}

\nbf{Impact of $\bm \delta(k)$.} Part II in Corollary \ref{cor: RHIG-OGD determine}  includes the prediction error terms in \eqref{equ: ogd regret} and in Theorem \ref{thm: general}. Notably, for both $W\leq T$ and $W\geq T$, the factor in front of $\| \bm \delta(k)\|^2$ is dominated by $\rho^{k-1}$ for  $1\leq k \leq \min(W,T)$, which decays exponentially with $k$ since $0\leq \rho<1$. {This suggests that the impact of the total $k$-step-ahead prediction error $\|\bm \delta(k)\|^2$ decays exponentially with $k$, which also indicates that our RHIG (implicitly) focuses more on the short-term predictions than the long-term ones. This property benefits RHIG's performance in practice since short-term predictions are usually more accurate and reliable than the long-term ones.}



\nbf{Choices of $W$.}  The optimal choice of $W$ depends on the trade-off between $V_T$ and the prediction errors. 
For more insightful discussions, we consider non-decreasing $k$-step-ahead prediction errors, i.e.  $\|\bm \delta(k)\|\geq \|\bm \delta(k-1)\|$ for $1\leq k \leq T$ (in practice, longer-term predictions usually suffer worse quality).
 It can be shown that Part I increases with $V_T$ and Part II increases with the prediction errors. Further, as $W$ increases, Part I decreases  but  Part II increases.\footnote{All the monotonicity claims above are verified in the supplementary file and omitted here for brevity.} Thus, when Part I dominates the regret bound, i.e. $V_T$ is large when compared with the prediction errors, selecting a  large $W$  reduces the regret bound. On the contrary, when Part II dominates the regret bound, i.e. the prediction errors are large when compared with $V_T$, a small $W$ is preferred.
The choices of $W$  above are quite intuitive: when the  environment is drastically changing while  the predictions roughly follow the trends, one should use more predictions to prepare for future changes; however, with poor predictions and slowly changing  environments, one can ignore most predictions and rely   on the understanding of the current environment. Lastly, though we only consider RHIG with restarted OGD, the discussions  provide insights for other $\phi$.

\nbf{An upper and a lower bound in a special case.} Next, we consider a special case when $V_T$ is much larger than the prediction errors. It can be shown that the optimal regret is obtained when $W\to +\infty$.

%
%
%
%
%
%
%

\begin{corollary}\label{cor: special case}
	Consider non-decreasing  $k$-step-ahead prediction errors, i.e. $\|\bm \delta(k)\|^2\geq \|\bm \delta(k-1)\|^2$ for $1\leq k \leq T$.  When $ \sqrt{V_T T}\log(1+\sqrt{T/V_T}) \geq \frac{{2L} h^2\rho+{\alpha^2} \zeta}{2L C_1 (1-\rho){\alpha}}\|\bm \delta(
	T)\|^2 $, the  regret bound  is minimized by letting $W\to +\infty$. Further, when $W\to +\infty$, RHIG's regret can be bounded below.
	\begin{align*}
	\textup{Reg}(RHIG)\leq \frac{\zeta}{1-\rho} \sum_{k=1}^{T} \rho^{k-1}\|\bm \delta(k)\|^2.
	\end{align*}
\end{corollary}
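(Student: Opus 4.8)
The plan is to regard the right-hand side of Corollary~\ref{cor: RHIG-OGD determine} as an explicit function $B(W)$ of the lookahead horizon and establish two facts: (i) under the stated hypothesis $B(W)$ is non-increasing in $W$, so its infimum is attained as $W\to+\infty$, and (ii) that limiting value obeys the claimed closed form. Throughout I write $S:=\sqrt{V_T T}\log(1+\sqrt{T/V_T})$ for the environment term and keep $\rho=1-\frac{\alpha}{4L}\in[0,1)$ and $\zeta=\frac{h^2}{\alpha}+\frac{h^2}{2L}$ as in the statement. The two facts together give $B(W)\ge B(\infty)$ for every $W$, which is exactly the assertion that the bound is minimized in the limit, and the final display.

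For step (i) I would split at $W=T$. On the range $0\le W\le T-1$ the indicator term vanishes and $\min(W,T)=W$, so the only error contributions are $\frac{2Lh^2}{\alpha^2}\rho^W\|\bm\delta(W)\|^2$ and $\zeta\sum_{k=1}^W\rho^{k-1}\|\bm\delta(k)\|^2$. I would form the discrete increment $B(W+1)-B(W)$, factor out the common $\rho^W>0$, and collect terms: the environment term contributes $-(1-\rho)\frac{2L}{\alpha}C_1 S$, while the error terms contribute $\frac{2Lh^2}{\alpha^2}\bigl(\rho\|\bm\delta(W+1)\|^2-\|\bm\delta(W)\|^2\bigr)+\zeta\|\bm\delta(W+1)\|^2$. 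Discarding the nonpositive summand $-\frac{2Lh^2}{\alpha^2}\|\bm\delta(W)\|^2$ and invoking the non-decreasing assumption $\|\bm\delta(W+1)\|^2\le\|\bm\delta(T)\|^2$ bounds the error part by $\frac{2Lh^2\rho+\alpha^2\zeta}{\alpha^2}\|\bm\delta(T)\|^2$; requiring this to be dominated by $(1-\rho)\frac{2L}{\alpha}C_1 S$ reproduces precisely the hypothesis $S\ge\frac{2Lh^2\rho+\alpha^2\zeta}{2LC_1(1-\rho)\alpha}\|\bm\delta(T)\|^2$, so $B(W+1)\le B(W)$ on this range.

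For $W\ge T$ I would instead compare directly against the limit. Since $\min(W,T)=T$, $\rho^W\to 0$, and the indicator term equals $\frac{\rho^T-\rho^W}{1-\rho}\zeta\|\bm\delta(T)\|^2$, one reads off $B(\infty):=\lim_{W\to\infty}B(W)=\zeta\sum_{k=1}^T\rho^{k-1}\|\bm\delta(k)\|^2+\frac{\rho^T}{1-\rho}\zeta\|\bm\delta(T)\|^2$. Subtracting gives $B(\infty)-B(W)=\rho^W\bigl[\frac{\zeta}{1-\rho}\|\bm\delta(T)\|^2-\frac{2L}{\alpha}C_1 S-\frac{2Lh^2}{\alpha^2}\|\bm\delta(T)\|^2\bigr]$, whose bracket is nonpositive whenever $S\ge\frac{\alpha}{2LC_1}\bigl(\frac{\zeta}{1-\rho}-\frac{2Lh^2}{\alpha^2}\bigr)\|\bm\delta(T)\|^2$; a one-line comparison shows this threshold falls short of the hypothesis threshold by $\frac{h^2}{C_1\alpha(1-\rho)}\|\bm\delta(T)\|^2>0$, so the stated condition already forces $B(\infty)\le B(W)$ for all $W\ge T$. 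Combining the two regimes yields $B(W)\ge B(T)\ge B(\infty)$ when $W\le T$ and $B(W)\ge B(\infty)$ when $W\ge T$, hence the minimizer is $W\to+\infty$.

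Finally, step (ii) is a purely algebraic simplification of $B(\infty)$: I would verify the identity $\frac{\zeta}{1-\rho}\sum_{k=1}^T\rho^{k-1}\|\bm\delta(k)\|^2-B(\infty)=\frac{\zeta}{1-\rho}\sum_{k=1}^{T-1}\rho^{k}\|\bm\delta(k)\|^2\ge 0$, which delivers the claimed upper bound and, notably, requires no monotonicity. The main obstacle is the bookkeeping in step (i): one must match the discrete-increment estimate to the exact constant in the hypothesis, which hinges on discarding precisely the right nonpositive term and applying the non-decreasing assumption at the right place, and one must separately confirm that the $W\ge T$ regime—where the indicator term alters the algebra—is governed by a strictly weaker condition so that no extra hypothesis is needed.
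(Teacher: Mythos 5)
Your proposal is correct and takes essentially the same route as the paper: the paper likewise treats the bound of Corollary~\ref{cor: RHIG-OGD determine} as a function $R(W)$, splits at $W=T$, computes the discrete increment, discards the same nonpositive term $-\tfrac{2Lh^2}{\alpha^2}\rho^{W}\|\bm\delta(W)\|^2$, and matches the hypothesis constant exactly as you do, finishing with the same geometric-series simplification of the limit (which indeed needs no monotonicity). The only cosmetic difference is in the regime $W\ge T$, where the paper again shows $R(W)\le R(W-1)$ under a strictly weaker condition, while you compare $B(W)$ directly to $B(\infty)$ via the identity $B(\infty)-B(W)=\rho^{W}\bigl[\tfrac{\zeta}{1-\rho}\|\bm\delta(T)\|^2-\tfrac{2L}{\alpha}C_1 S-\tfrac{2Lh^2}{\alpha^2}\|\bm\delta(T)\|^2\bigr]$ --- an equivalent and equally valid computation.
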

Since $ \sqrt{V_T T}\log(1+\sqrt{T/V_T})$ increases with $V_T$, the condition in Corollary \ref{cor: special case}  essentially states that $V_T$ is much larger in comparison to all the prediction errors. Interestingly,  the bound in Corollary \ref{cor: special case} is not affected by $V_T$, but all prediction errors $\{\|\bm \delta(k)\|^2\}_{k=1}^T$  are involved, though the factor of $\|\bm \delta(k)\|^2$ exponentially decays with $k$. Next, we show that such dependence on $\|\bm \delta(k)\|^2$  is unavoidable.





\begin{theorem}[Lower bound for a special case]\label{thm: lower bdd} 
	 For any online algorithm $\A$, there exists nontrivial $\sum_t f(x_t;\theta_t)+d(x_t,x_{t-1})$ and predictions $\theta_{t\mid t-k}$ satisfying the condition in Corollary \ref{cor: special case}, with  parameters $\rho_0=(\frac{\sqrt{L}-\sqrt{\alpha}}{\sqrt L+\sqrt \alpha})^2$, $\zeta_0=(\frac{h(1-\sqrt{\rho_0})}{\alpha+\beta})^2\frac{\alpha(1-2\rho_0)}{2}>0$, such that 
	 the  regret  satisfies:
	\begin{align*}
	\textup{Reg}(\A) \geq \frac{\zeta_0}{(1-\rho_0)}\sum_{k=1}^T \rho_0^{k-1} \|\bm \delta(k)\|^2.
	\end{align*}
\end{theorem}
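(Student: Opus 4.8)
The plan is to exhibit a single quadratic instance on a large box $\X$ and to lower bound the regret of an arbitrary (possibly randomized) algorithm by a Yao-type averaging argument over a random choice of $\bm\theta$ and predictions. Concretely, I would take $f(x;\theta)=\frac{\alpha}{2}\|x\|^2-h\,x^\top\theta$ and $d(x,x')=\frac{\beta}{2}\|x-x'\|^2$; this $f$ is $\alpha$-strongly convex and $\alpha$-smooth with $\nabla_x f$ being $h$-Lipschitz in $\theta$, and $d$ attains the upper envelope in Assumption~\ref{ass: d <beta (x-y)^2/2}, so all assumptions hold and, by Lemma~\ref{lem: property C(x,theta)}, $C(\cdot\,;\bm\theta)$ is $\alpha$-strongly convex and $(\alpha+4\beta)$-smooth, i.e. $L=\alpha+4\beta$. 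The offline optimum then solves a linear tridiagonal system, $\bm x^*=h\,A^{-1}\bm\theta$ with $A=\alpha I+\beta M$ and $M$ the symmetric path second-difference operator, and on a large enough box $\bm x^*$ is interior, so $\nabla_{\bm x}C(\bm x^*;\bm\theta)=0$.

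The reduction to the offline minimizer uses strong convexity: $\textup{Reg}(\A)=C(\bm x^\A;\bm\theta)-C(\bm x^*;\bm\theta)\ge\frac{\alpha}{2}\|\bm x^\A-\bm x^*\|^2$. Let $\mathcal F_t$ collect everything available when $x_t$ is chosen (the true past $\theta_1,\dots,\theta_{t-1}$ and all predictions). Since $x_t^\A$ is $\mathcal F_t$-measurable while $\bm x^*$ depends on the whole random $\bm\theta$, for every algorithm $\E[(x_t^\A-x_t^*)^2]\ge\E[\operatorname{Var}(x_t^*\mid\mathcal F_t)]$, the minimizing predictor being the conditional mean. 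Hence $\E[\textup{Reg}(\A)]\ge\frac{\alpha}{2}\sum_{t}\E[\operatorname{Var}(x_t^*\mid\mathcal F_t)]$, and it suffices to lower bound the right-hand side, after which a standard averaging step produces a deterministic instance attaining the bound.

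Next I would specify the randomness through mutually independent mean-zero innovations $e_{\tau,j}$ ($j\le\tau$), where $e_{\tau,j}$ is the information about $\theta_\tau$ revealed at stage $j$, so that $\theta_{\tau\mid t-1}=\sum_{j\le t-1}e_{\tau,j}$ and $\delta_\tau(k)=\sum_{j=\tau-k+1}^{\tau}e_{\tau,j}$. Independence makes the conditional variance diagonal, $\operatorname{Var}(x_t^*\mid\mathcal F_t)=h^2\sum_{\tau\ge t}(A^{-1})_{t,\tau}^2\,\E[\delta_\tau(\tau-t+1)^2]$. Writing $k=\tau-t+1$, summing over $t$, swapping the order of summation and reindexing $s=\tau$, one recognizes $\sum_t\E[\delta_{t+k-1}(k)^2]$ as (a tail of) $\E\|\bm\delta(k)\|^2$. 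A geometric lower bound $(A^{-1})_{t,t+k-1}^2\ge c^2\rho_0^{\,k-1}$, together with a short telescoping of the saturated terms $\delta_s(s)$ (using the convention $\delta_s(k)=\delta_s(s)$ for $k\ge s$ and the standing hypothesis $\rho_0<\tfrac12$), collapses the double sum into $c^2(1-\rho_0)\sum_{k}\rho_0^{k-1}\E\|\bm\delta(k)\|^2$; matching the resulting prefactor $\frac{\alpha h^2c^2(1-\rho_0)}{2}$ against $\frac{\zeta_0}{1-\rho_0}$ is exactly where the factor $(1-2\rho_0)$ in $\zeta_0$ is consumed.

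The decisive and most delicate step is the geometric lower bound on the finite-horizon Green's function $(A^{-1})_{t,\tau}$. The characteristic equation $\beta r^2-(\alpha+2\beta)r+\beta=0$ has reciprocal roots, the stable one being $r_-=\frac{\sqrt L-\sqrt\alpha}{\sqrt L+\sqrt\alpha}=\sqrt{\rho_0}$, which is precisely why $\rho_0$ appears; on the whole line $(A^{-1})_{t,\tau}=\frac{r_-^{|t-\tau|}}{\sqrt{\alpha L}}$. The difficulty is that the finite path introduces boundary reflections, so I must ensure the entries do not decay faster than $c\,r_-^{|t-\tau|}$ for a constant $c$ compatible with the $(\alpha+\beta)$ appearing in $\zeta_0$. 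I would obtain this by noting that $A$ is a symmetric M-matrix (positive diagonal, nonpositive off-diagonals, diagonally dominant), so $A^{-1}$ is entrywise positive, and then invoking the explicit cofactor formula for the inverse of a tridiagonal matrix to bound each $(A^{-1})_{t,\tau}$ from below by a clean geometric expression. Finally I would verify the hypothesis of Corollary~\ref{cor: special case}: since the prediction errors are governed by the innovation variances whereas $V_T$ is driven by the stage-to-stage change of $\theta_t$, superimposing a large, perfectly predicted deterministic drift on $\bm\theta$ inflates $V_T$ without altering any $\delta_\tau(k)$ or the regret difference, forcing $\sqrt{V_TT}\log(1+\sqrt{T/V_T})$ above the required threshold for every realization.
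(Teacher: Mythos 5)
Your proposal follows essentially the same route as the paper's proof: the paper also constructs the one-dimensional quadratic instance $f(x_t;\theta_t)=\frac{\alpha}{2}(x_t^2-2\theta_t x_t)$, $d=\frac{\beta}{2}\|x_t-x_{t-1}\|^2$ with independent per-stage innovations $e_j^\tau$ revealed over time (plus a bounded alternating drift $\mu_t=(-1)^t/4$ that inflates $V_T$ while leaving $\bm\delta(k)$ unchanged, exactly your ``perfectly predicted drift'' device), reduces regret via $\alpha$-strong convexity to $\frac{\alpha}{2}\E\|\bm x^{\A}-\bm x^*\|^2$, lower bounds this by the conditional variance of $x_t^*=\sum_\tau a_{t,\tau}\theta_\tau$ given the filtration $\F_t$, diagonalizes by independence, and performs the same sum swap with the saturated $\delta_t(t)$ terms consuming the factor $1-2\rho_0$ under $\rho_0<\frac12$. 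The only divergence is at the Green's-function step: the paper simply cites Lemma 5 of \cite{li2018online} for $a_{t,\tau}^2\geq c_2\rho_0^{\tau-t}$ with $c_2=(\frac{\alpha}{\alpha+\beta})^2(1-\sqrt{\rho_0})^2$, whereas you propose re-deriving that entrywise bound from scratch via M-matrix positivity and the tridiagonal cofactor formula (correctly identifying $\sqrt{\rho_0}$ as the stable characteristic root and the whole-line kernel $r_-^{|t-\tau|}/\sqrt{\alpha L}$), which is a self-contained substitute for the cited fact rather than a different argument.
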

In  Theorem \ref{thm: lower bdd}, the influence of $\|\bm \delta(k)\|^2$ also decreases exponentially with $k$, though with a  smaller decay factor $\rho_0$. It is left as future work to close the gap between $\rho$ and $\rho_0$ (and between $\zeta$ and $\zeta_0$). 


\section{Stochastic Prediction Errors}\label{sec: stochastic}

%
%
%
%
%
%
%
%

In many applications, prediction errors are usually correlated. For example, the predicted market price of tomorrow usually relies on the predicted price of today, which also   depends  on the price predicted yesterday. 
Motivated by this, we adopt an insightful and general stochastic model on prediction errors, which was originally proposed in \cite{chen2015online}:
\begin{equation}\label{equ: delta stochastic model}
\delta_{t}(k)= \theta_{t}-\theta_{t\mid t-k}= \sum_{s=t-k+1}^{t}P(t-s)e_s, \quad \forall\, 1\leq k\leq t
\end{equation}
where $P(s)\in \R^{p \times q}$,   $e_1, \dots, e_T\in \R^q$ are  independent with zero mean and covariance  $R_e$. Model \eqref{equ: delta stochastic model} captures the correlation patterns described above:
the errors  $\delta_t(k)$ of different  predictions on the same parameter $\theta_t$ are correlated by sharing  common random vectors from $\{e_t, \dots, e_{t-k+1}\}$; and  the prediction errors  generated at the same stage, i.e.  $\theta_{t+k}-\theta_{t+k\mid t-1}$ for $k\geq 0$,  are correlated by sharing common random vectors from $\{e_t, \dots, e_{t+k}\}$. Notably, the coefficient matrix $P(k)$ represents the degree of correlation between the  $\delta_t(1)$ and  $ \delta_t(k)$ and between $\theta_{t}-\theta_{t\mid t-1}$ and $\theta_{t+k}-\theta_{t+k\mid t-1}$. 

As discussed in \cite{chen2015online,chen2016using}, the stochastic model \eqref{equ: delta stochastic model}  enjoys many applications, e.g. Wiener filters, Kalman filters \cite{kailath2000linear}. For instance, suppose the parameter follows a stochastic linear  system: $\theta_{t}= \gamma \theta_{t-1}+e_t$ with a given $\theta_0$ and random noise $e_t\sim N(0, 1)$.  Then $\theta_t=\gamma^k \theta_{t-k} + \sum_{s=t-k+1}^t \gamma^{t-s} e_s$,  the optimal prediction of $\theta_t$ based on  $\theta_{t-k}$ is $\theta_{t\mid t-k}= \gamma^k \theta_{t-k}$, the  prediction error $\delta_{t}(k)$ satisfies the model \eqref{equ: delta stochastic model} with $P(t-s)=\gamma^{t-s}$. A large $\gamma$ causes strong correlation among prediction errors.


Our next theorem bounds the expected regret of RHIG by the degree of correlation $\|P(k)\|_F$.

%

\begin{theorem}[Expected regret bound]\label{thm: expected regret bounds} Under Assumption \ref{ass: strong convexity smooth}-\ref{ass: grad lip cont},   $W\geq 0$,   $\eta=1/L$ and  initialization  $\phi$,  
	\begin{equation*}
	\begin{aligned}
	\E[\textup{Reg}(RHIG)]
	& \leq 	\frac{2L}{\alpha}\rho^W \E[\textup{Reg}(\phi)]+ \sum_{t=0}^{\min(W,T)-1} \zeta\|R_e\|_2 (T-t)\|P(t)\|_F^2 \frac{\rho^t-\rho^W}{1-\rho}
	\end{aligned}
	\end{equation*}
where the expectation is taken with respect to $\{e_t\}_{t=1}^T$,  $\rho=1-\frac{\alpha}{4L}$, $\zeta=\frac{h^2}{\alpha}+\frac{h^2}{2L}$.
	\end{theorem}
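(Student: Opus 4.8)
The plan is to combine the deterministic regret bound of Theorem~\ref{thm: general} with a direct second-moment computation for the stochastic error model \eqref{equ: delta stochastic model}. The crucial observation is that the bound \eqref{equ: general regret bdd} holds \emph{pathwise}, i.e. for every realization of the prediction errors $\bm\delta(k)$; so I would take expectations of both sides and use linearity. This reduces the entire problem to evaluating $\E[\|\bm\delta(k)\|^2]$ under the model and then reorganizing the resulting nested sums into the claimed weights. (I would first confirm that the stepsize used here, $\eta=1/L$, yields the same $\rho=1-\frac{\alpha}{4L}$ and $\zeta$ as in Theorem~\ref{thm: general}, so that the pathwise bound I invoke is the correct one.)

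First I would compute the per-stage second moment. Since $\|\bm\delta(k)\|^2=\sum_{t=1}^T\|\delta_t(k)\|^2$ and $\delta_t(k)=\sum_{s=t-k+1}^{t}P(t-s)e_s$ with the $e_s$ independent and zero-mean, every cross term vanishes in expectation, leaving $\E[\|\delta_t(k)\|^2]=\sum_{s}\E[e_s^\top P(t-s)^\top P(t-s)e_s]=\sum_{s}\operatorname{tr}(P(t-s)^\top P(t-s)R_e)$. Using $\operatorname{tr}(MR_e)\le\|R_e\|_2\operatorname{tr}(M)$ for $M=P(t-s)^\top P(t-s)\succeq 0$ and $\operatorname{tr}(P^\top P)=\|P\|_F^2$, this is at most $\|R_e\|_2\sum_j\|P(j)\|_F^2$ after substituting $j=t-s$. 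Here I must track the edge effect: for $t<k$ the sum over $s$ only runs from $1$ to $t$ (recall $\delta_t(k)=\delta_t(t)$ for $k\ge t$), so $j$ ranges over $0,\dots,\min(k,t)-1$. Summing over $t=1,\dots,T$ and swapping the order of summation, each index $j$ contributes exactly for the $t\ge j+1$, producing the factor $(T-j)$ and the clean bound $\E[\|\bm\delta(k)\|^2]\le\|R_e\|_2\sum_{j=0}^{k-1}(T-j)\|P(j)\|_F^2$.

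Next I would substitute this into the expectation of \eqref{equ: general regret bdd} and collapse the double sums. For $W\le T$ the indicator term is absent, and interchanging the $k$- and $j$-summations in $\zeta\sum_{k=1}^{W}\rho^{k-1}\sum_{j=0}^{k-1}(\cdot)$ produces the inner geometric sum $\sum_{k=j+1}^{W}\rho^{k-1}=\frac{\rho^j-\rho^W}{1-\rho}$, which is precisely the weight in the claimed bound. For $W>T$ I would handle the two error terms of Theorem~\ref{thm: general} jointly: the same interchange applied to the second term gives weights $\frac{\rho^j-\rho^T}{1-\rho}$, while bounding $\E[\|\bm\delta(T)\|^2]$ in the indicator term by the same $\|R_e\|_2\sum_{j=0}^{T-1}(T-j)\|P(j)\|_F^2$ contributes the weight $\frac{\rho^T-\rho^W}{1-\rho}$; the two weights add and telescope to $\frac{\rho^j-\rho^W}{1-\rho}$, matching the statement with $\min(W,T)=T$. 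The first term passes through unchanged up to replacing $\textup{Reg}(\phi)$ by $\E[\textup{Reg}(\phi)]$.

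The second-moment expansion and the geometric-series algebra are routine. The step needing the most care is the trace bound $\operatorname{tr}(P^\top P\,R_e)\le\|R_e\|_2\|P\|_F^2$ together with the bookkeeping of the edge effect that converts $T$ into $(T-j)$: getting the index ranges right after swapping the summation order, and verifying that the $W>T$ indicator term combines by \emph{telescoping} rather than by a cruder bound, is where an error would most easily slip in. I expect this reindexing, rather than any deep estimate, to be the main obstacle.
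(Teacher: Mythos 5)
Your proposal is correct and matches the paper's proof essentially step for step: both take expectations of the pathwise bound in Theorem~\ref{thm: general}, bound $\E[\|\bm\delta(k)\|^2]\le\|R_e\|_2\sum_{t=0}^{\min(k,T)-1}(T-t)\|P(t)\|_F^2$ via the trace inequality $\operatorname{tr}(M\,R_e)\le\|R_e\|_2\operatorname{tr}(M)$ (the paper organizes this through the stacked block matrix $\bm M_k$ with $\bm\delta(k)=\bm M_k\bm e$, which is the same bookkeeping you carry out stage-by-stage, including the $(T-t)$ edge-effect count), and then swap the $k$- and $t$-summations so the geometric sums yield the weights $\frac{\rho^t-\rho^W}{1-\rho}$, with the $W>T$ indicator term telescoping exactly as you describe. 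Your caution about the stepsize is warranted: the theorem statement says $\eta=1/L$ while Theorem~\ref{thm: general} uses $\eta=\frac{1}{2L}$, and the paper's proof silently applies the $\eta=\frac{1}{2L}$ bound, so this is a typo in the statement rather than a gap in your argument.
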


The first term in Theorem \ref{thm: expected regret bounds} represents the influence of $\phi$ while the second term captures the effects of the correlation. We note that the $t$-step correlation $\|P(t)\|_F^2$ decays exponentially with $t$ in the regret bound, indicating that RHIG  efficiently handles the strong correlation among prediction errors.


Next, we provide a regret bound when RHIG employs the  restarted OGD oracle as in Section \ref{sec: regret general}. Similarly, we consider a known $\E[V_T]$ and $1\leq V_T\leq T$ for technical simplicity.

\begin{corollary}[RHIG with restarted OGD]\label{cor: RHIG-OGD-stoch}
		Under Assumption \ref{ass: strong convexity smooth}-\ref{ass: d <beta (x-y)^2/2}, consider the restarted OGD  with  $\Delta=\ceil{\sqrt{2T/\E[V_T]}}$, we obtain $$
	\E[\textup{Reg}(RHIG)]\!\leq \!	\rho^W C_2\sqrt{\E[ V_T] T}\log(1\!+\!\sqrt{T/\E[V_T]})+ \!  \sum_{t=0}^{\min(W,T)-1}\! \zeta\|R_e\|_2 (T-t)\|P(t)\|_F^2 \frac{\rho^t}{1-\rho},$$ where we define ${C_2=\frac{2L C_1}{\alpha}}$ and $C_1$ is defined in Theorem \ref{thm: ogd dyn regret}.
	
\end{corollary}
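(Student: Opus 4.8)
The plan is to combine the general expected bound of Theorem~\ref{thm: expected regret bounds} with an expectation version of the restarted OGD guarantee in Theorem~\ref{thm: ogd dyn regret}, and then to fold the resulting OGD prediction-error term into the correlation sum. Concretely, applying Theorem~\ref{thm: expected regret bounds} with $\phi$ equal to restarted OGD immediately gives
\begin{align*}
\E[\textup{Reg}(RHIG)] \leq \frac{2L}{\alpha}\rho^W\,\E[\textup{Reg}(\phi)] + \sum_{t=0}^{\min(W,T)-1}\zeta\|R_e\|_2(T-t)\|P(t)\|_F^2\,\frac{\rho^t-\rho^W}{1-\rho},
\end{align*}
so the task reduces to (i) bounding $\E[\textup{Reg}(\phi)]$ and (ii) showing that the pieces assemble into the claimed bound. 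For (i) I would take the expectation of Theorem~\ref{thm: ogd dyn regret}, but with epoch size $\Delta=\ceil{\sqrt{2T/\E[V_T]}}$ rather than $\ceil{\sqrt{2T/V_T}}$, obtaining
\begin{align*}
\E[\textup{Reg}(\phi)] \leq C_1\sqrt{\E[V_T]\,T}\,\log\!\big(1+\sqrt{T/\E[V_T]}\big) + \frac{h^2}{\alpha}\,\E\big[\|\bm\delta(\min(W,T))\|^2\big].
\end{align*}

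The first summand, multiplied by $\frac{2L}{\alpha}\rho^W$, reproduces Part~I of the stated bound with $C_2=\frac{2LC_1}{\alpha}$. For the second, I would evaluate the stochastic model~\eqref{equ: delta stochastic model}: since $e_1,\dots,e_T$ are independent, zero mean, with covariance $R_e$, expanding $\delta_\tau(\min(W,T))=\sum_s P(\tau-s)e_s$ gives $\E[\|\delta_\tau(\min(W,T))\|^2]=\sum_s\operatorname{tr}\!\big(P(\tau-s)R_eP(\tau-s)^\top\big)\leq\|R_e\|_2\sum_{j=0}^{\min(W,T,\tau)-1}\|P(j)\|_F^2$, using $\operatorname{tr}(AR_e)\leq\|R_e\|_2\operatorname{tr}(A)$ for the PSD matrix $A=P(j)^\top P(j)$. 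Summing over $\tau=1,\dots,T$ and swapping the order of summation turns $\E[\|\bm\delta(\min(W,T))\|^2]$ into $\|R_e\|_2\sum_{j=0}^{\min(W,T)-1}(T-j)\|P(j)\|_F^2$, which matches exactly the index weighting $(T-t)\|P(t)\|_F^2$ of the correlation sum. It then remains to verify the absorption inequality $\frac{2L}{\alpha}\cdot\frac{h^2}{\alpha}\leq\frac{\zeta}{1-\rho}$, i.e. $\frac{2Lh^2}{\alpha^2}\leq\frac{4Lh^2}{\alpha^2}+\frac{2h^2}{\alpha}$, which holds trivially; this is precisely what upgrades the factor $\frac{\rho^t-\rho^W}{1-\rho}$ from Theorem~\ref{thm: expected regret bounds} to $\frac{\rho^t}{1-\rho}$ once the OGD error term $\frac{2L}{\alpha}\rho^W\frac{h^2}{\alpha}\E[\|\bm\delta(\min(W,T))\|^2]$ is added back in.

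The main obstacle I anticipate is step (i): Theorem~\ref{thm: ogd dyn regret} is stated with $\Delta$ already tuned to the realized $V_T$, so I cannot simply take the expectation of its right-hand side. Instead I would revisit the restarted OGD analysis at a generic epoch size $\Delta$, where the dynamic regret splits into a per-epoch static-regret contribution that is deterministic given $\Delta$ (scaling like $\frac{T\log\Delta}{\Delta}$) and a tracking contribution that is affine in the functional variation $V_T$ for fixed $\Delta$. Only the latter is random, so linearity of expectation replaces $V_T$ by $\E[V_T]$, after which choosing $\Delta=\ceil{\sqrt{2T/\E[V_T]}}$ optimizes the two contributions and yields the $\sqrt{\E[V_T]T}\log(1+\sqrt{T/\E[V_T]})$ order; one must also check that tuning $\Delta$ to $\E[V_T]$ rather than to the realized $V_T$ only inflates constants, which can be absorbed into $C_1$. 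The restarted OGD prediction-error term is handled in expectation by the same stochastic-model computation above, so no extra assumption is needed, and the remaining work (collecting constants and merging the two summations) is routine bookkeeping.
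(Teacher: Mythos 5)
Your proposal is correct and follows essentially the same route as the paper's own proof: the paper likewise re-runs the restarted OGD analysis in expectation with $\Delta=\ceil{\sqrt{2T/\E[V_T]}}$ (noting, as you do, that the per-epoch static-regret and switching-cost bounds hold pathwise while the tracking term $2\Delta V_T$ is linear in $V_T$, so linearity of expectation replaces $V_T$ by $\E[V_T]$ with the same constant $C_1$, no inflation needed), bounds $\E[\|\bm\delta(\min(W,T))\|^2]\leq \|R_e\|_2\sum_{t=0}^{\min(W,T)-1}(T-t)\|P(t)\|_F^2$ (via a block lower-triangular matrix $\bm M_k$ and a trace inequality, equivalent to your stage-wise expansion), and then absorbs the $\rho^W\frac{2Lh^2}{\alpha^2}$ prediction-error term into the correlation sum term-by-term using exactly your inequality $\frac{2Lh^2}{\alpha^2}\leq\frac{\zeta}{1-\rho}$, which upgrades $\frac{\rho^t-\rho^W}{1-\rho}$ to $\frac{\rho^t}{1-\rho}$. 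Your anticipated obstacle is real and is precisely what the paper addresses (it cannot cite \cite{besbes2015non} directly because $\theta_t$ is random, the gradient estimates are biased, and the noises are correlated), and your resolution coincides with the paper's.
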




  Notice that large $W$ is preferred with a large environment variation  and  weakly correlated  prediction errors, and vice versa. 

Next, we discuss the concentration property.  For simplicity, we consider  Gaussian vectors $\{e_t\}_{t=1}^T$.\footnote{Similar results can be obtained for sub-Gaussian random vectors.}

\begin{theorem}[Concentration bound]\label{thm: concentration bdd}
Consider Assumption \ref{ass: strong convexity smooth}-\ref{ass: d <beta (x-y)^2/2} and the conditions  in Corollary \ref{cor: RHIG-OGD-stoch}.  Let $\E[{\textup{RegBdd}}]$ denote the  expected regret bound in Corollary \ref{cor: RHIG-OGD-stoch} when $\E[V_T]=T$, then we have
	\begin{align*}
	\Pb(\textup{Reg}(RHIG)\geq \E[\textup{Regbdd}]+b)\leq \exp\left(-c\min\left(\frac{b^2}{K^2}, \frac{b}{K}\right)\right), \quad \forall\ b>0,
	\end{align*}
	where $K=\zeta\sum_{t=0}^{\min(T,W)-1} \|R_e\|_2 (T-t)\|P(t)\|_F^2 \frac{\rho^t}{1-\rho}$ and
	 $c$ is an absolute constant.

	%
	%
	%
\end{theorem}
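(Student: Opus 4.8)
The plan is to establish the concentration bound as a consequence of the expected regret bound in Corollary~\ref{cor: RHIG-OGD-stoch} together with a Hanson--Wright-type concentration inequality for quadratic forms in Gaussian vectors. The starting point is the general regret bound in Theorem~\ref{thm: general}, which when combined with the restarted OGD analysis gives a deterministic bound of the form $\textup{Reg}(RHIG)\leq \rho^W\frac{2L}{\alpha}\textup{Reg}(\phi)+\zeta\sum_{k=1}^{\min(W,T)}\rho^{k-1}\|\bm\delta(k)\|^2+(\text{tail term})$. The key structural observation is that under the stochastic model \eqref{equ: delta stochastic model}, each $\bm\delta(k)$ is a \emph{linear} function of the underlying independent Gaussian vectors $\bm e=(e_1^\top,\dots,e_T^\top)^\top$, so $\|\bm\delta(k)\|^2$ is a quadratic form $\bm e^\top M_k \bm e$ for an appropriate positive semidefinite matrix $M_k$ built from the coefficient matrices $P(\cdot)$. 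Summing over $k$, the entire prediction-error contribution to the regret is a single quadratic form $\bm e^\top M\bm e$ in the Gaussian vector $\bm e$, up to the deterministic $\rho^W\textup{Reg}(\phi)$ term whose expectation is controlled by the $\sqrt{\E[V_T]\,T}\log(\cdots)$ factor.

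First I would write the random part of the regret as $\bm e^\top M \bm e$ and identify $\E[\bm e^\top M\bm e]=\operatorname{tr}(M\,\E[\bm e\bm e^\top])$; setting $\E[V_T]=T$ collapses the Part~I factor $\rho^W\sqrt{\E[V_T]T}\log(1+\sqrt{T/\E[V_T]})$ into the constant piece of $\E[\textup{Regbdd}]$, so that $\E[\textup{Regbdd}]$ matches exactly the expected regret bound of the corollary. The deviation $\textup{Reg}(RHIG)-\E[\textup{Regbdd}]$ is then bounded above by the centered quadratic form $\bm e^\top M\bm e-\E[\bm e^\top M\bm e]$. Next I would invoke the Hanson--Wright inequality (or equivalently the Bernstein-type bound for Gaussian chaos of order two), which gives exactly the mixed sub-exponential/sub-Gaussian tail $\exp(-c\min(b^2/\|M\|_F^2,\,b/\|M\|_2))$. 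The remaining work is to identify the constant $K$ in the statement with the relevant operator-norm-type quantity governing the tail, namely $K=\zeta\sum_{t=0}^{\min(T,W)-1}\|R_e\|_2(T-t)\|P(t)\|_F^2\frac{\rho^t}{1-\rho}$, and to verify that both $\|M\|_F$ and $\|M\|_2$ are controlled by $K$ so that the single-parameter tail in the theorem is valid.

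The main obstacle I expect is the bookkeeping that expresses $\sum_{k=1}^{\min(W,T)}\rho^{k-1}\|\bm\delta(k)\|^2$ as a clean quadratic form and then bounds its operator and Frobenius norms by the scalar $K$. Because the same noise vectors $e_s$ appear in $\delta_t(k)$ for many different $t$ and $k$ (through the convolution structure $\delta_t(k)=\sum_{s=t-k+1}^t P(t-s)e_s$), the matrix $M$ is not block-diagonal, and one must carefully rearrange the double sum over stages and lookahead steps to extract the factor $(T-t)\|P(t)\|_F^2$ that appears in $K$. The cleanest route is to switch the order of summation so that the contribution of each noise term $e_s$ and each lag $t$ is collected together, producing the exponentially weighted sum $\sum_t \rho^t(T-t)\|P(t)\|_F^2$; the factor $\|R_e\|_2$ enters because $\E[\bm e\bm e^\top]$ has blocks $R_e$, so $\|M\|_2\le \|R_e\|_2\cdot(\text{norm of the deterministic convolution operator})$. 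Once this identification is done, applying Hanson--Wright with the operator norm bounded by $K$ (and noting $\|M\|_F\le \|M\|_2^{1/2}\,\operatorname{tr}(M)^{1/2}$ or a direct Frobenius estimate) yields the stated tail, with $c$ an absolute constant inherited from the Hanson--Wright constant.
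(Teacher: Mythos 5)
Your proposal follows essentially the same route as the paper's proof: bound the regret pointwise by the deterministic Corollary~\ref{cor: RHIG-OGD determine} bound (using $V_T\leq T$ to collapse Part~I to $\rho^W \frac{2L}{\alpha}C_1 T\log 2$), express the entire prediction-error contribution---including the $\|\bm\delta(\min(W,T))\|^2$ term from the OGD oracle---as a single quadratic form $\bm u^\top \bm A_W \bm u$ in the whitened Gaussian vector $\bm u = \bm{R_e}^{-1/2}\bm e$, apply Hanson--Wright, and bound $\|\bm A_W\|_F$ (hence also $\|\bm A_W\|_2\leq \|\bm A_W\|_F$) by $K$ via exactly the summation exchange you describe, extracting $\|R_e\|_2(T-t)\|P(t)\|_F^2 \frac{\rho^t}{1-\rho}$. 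The only cosmetic difference is that the paper uses the simple inequality $\|\bm A_W\|_2\leq\|\bm A_W\|_F$ rather than your $\|M\|_F\leq\|M\|_2^{1/2}\operatorname{tr}(M)^{1/2}$ remark, so both tail parameters are controlled by the single constant $K$.
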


Theorem \ref{thm: concentration bdd} shows that the probability of the regret being larger than the expected regret by $b>0$ decays  exponentially with $b$, indicating a nice concentration property of RHIG. Further, the concentration effect is stronger (i.e. a larger $1/K$) with a smaller degree of correlation $\|P(t)\|_F^2$.

\section{Numerical Experiments}\label{sec: simulation}



We consider online quadrotor tracking of a vertically moving target \cite{fisac2018general}. We consider (i) a high-level planning problem which is  purely online optimization  without modeling the physical dynamics; and (ii) a physical tracking problem where  simplified quadrotor dynamics are considered \cite{fisac2018general}. 
\begin{figure}
    \vspace{-0.3cm}
	\begin{subfigure}[b]{0.32\textwidth}
		\includegraphics[width=\textwidth]{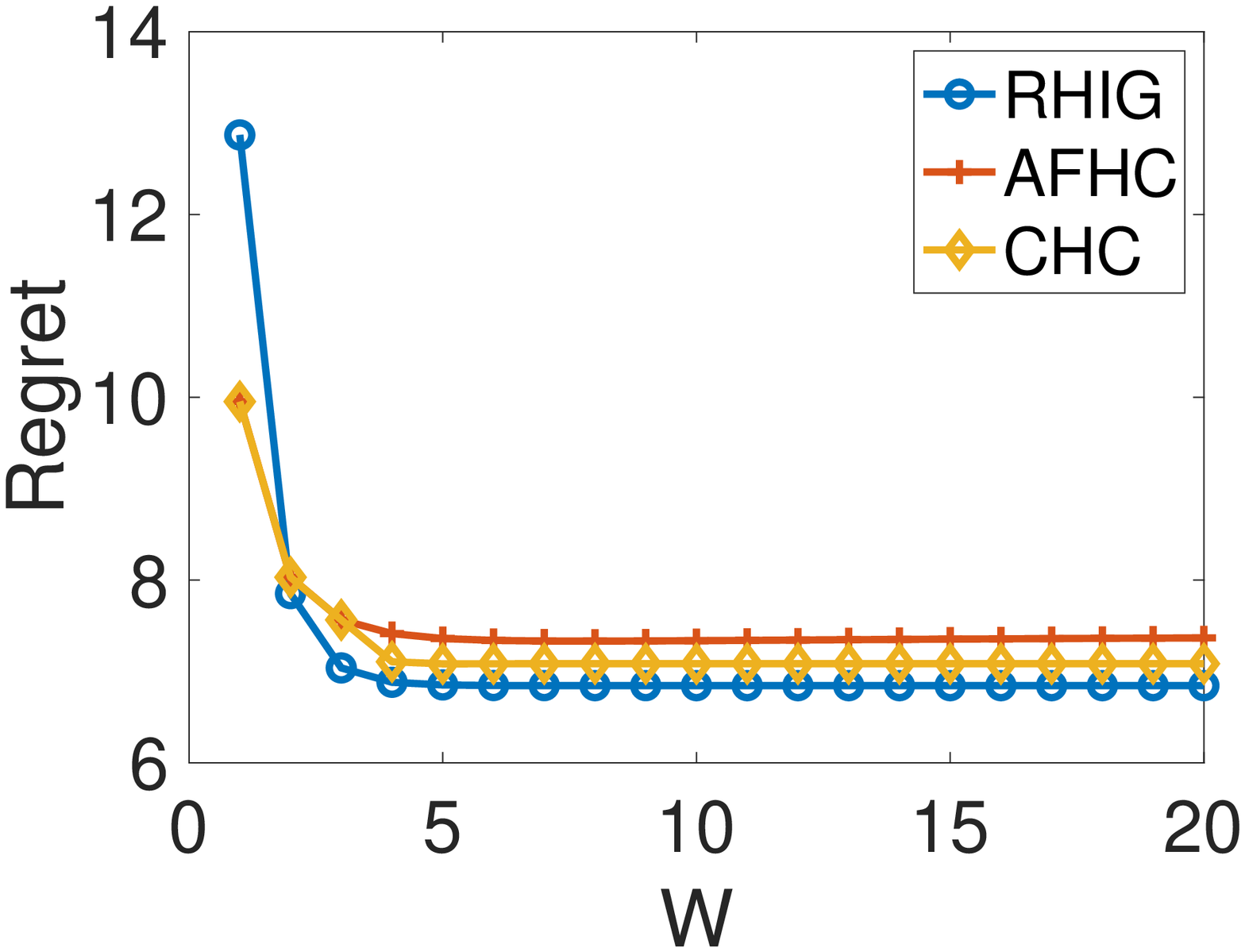}
		\caption{Small prediction errors}
	\end{subfigure} 
	\hfill
	\begin{subfigure}[b]{0.32\textwidth}
		\includegraphics[width=\textwidth]{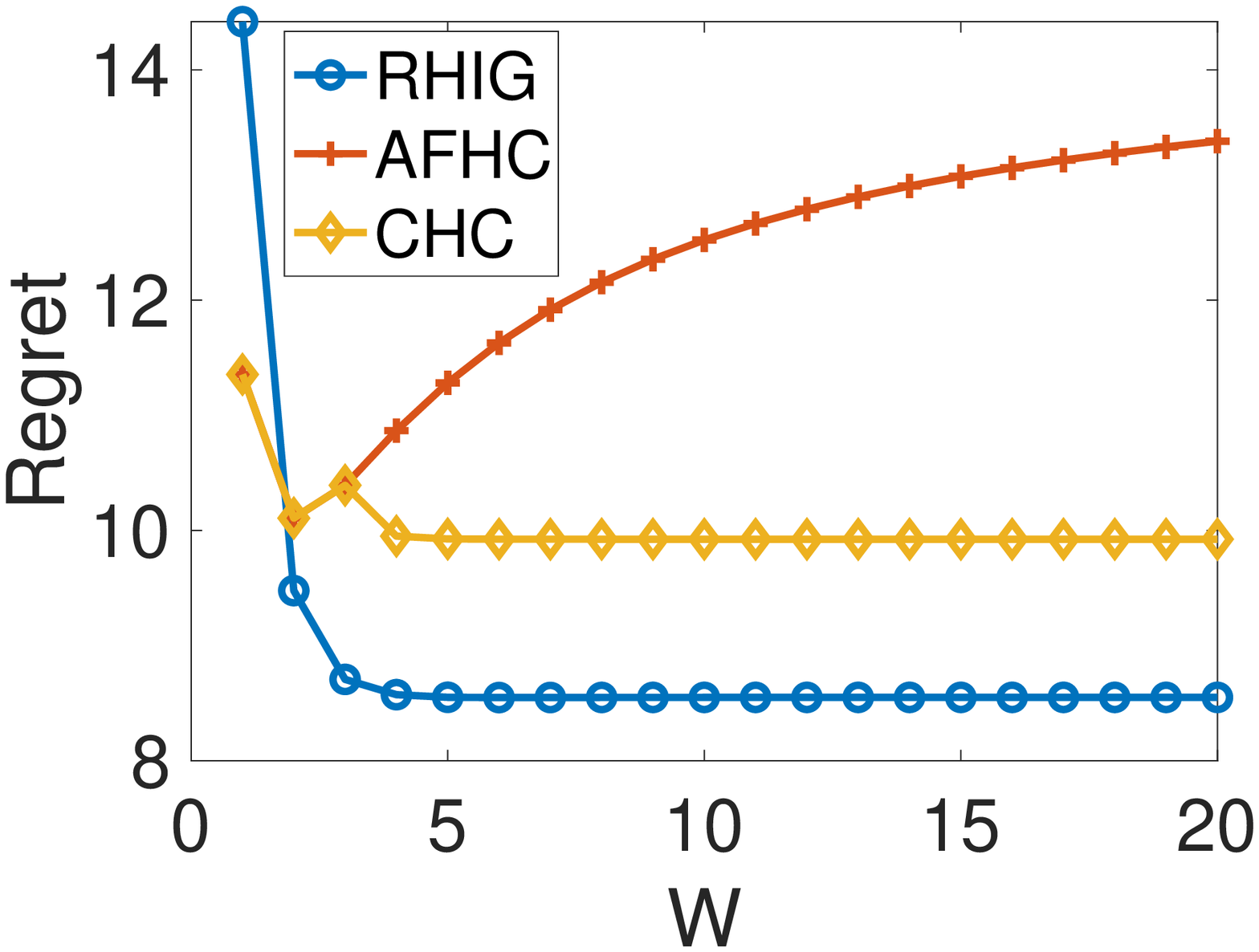}
		\caption{Large prediction errors}
	\end{subfigure}
	\hfill
\begin{subfigure}[b]{0.32\textwidth}
	\includegraphics[width=\textwidth]{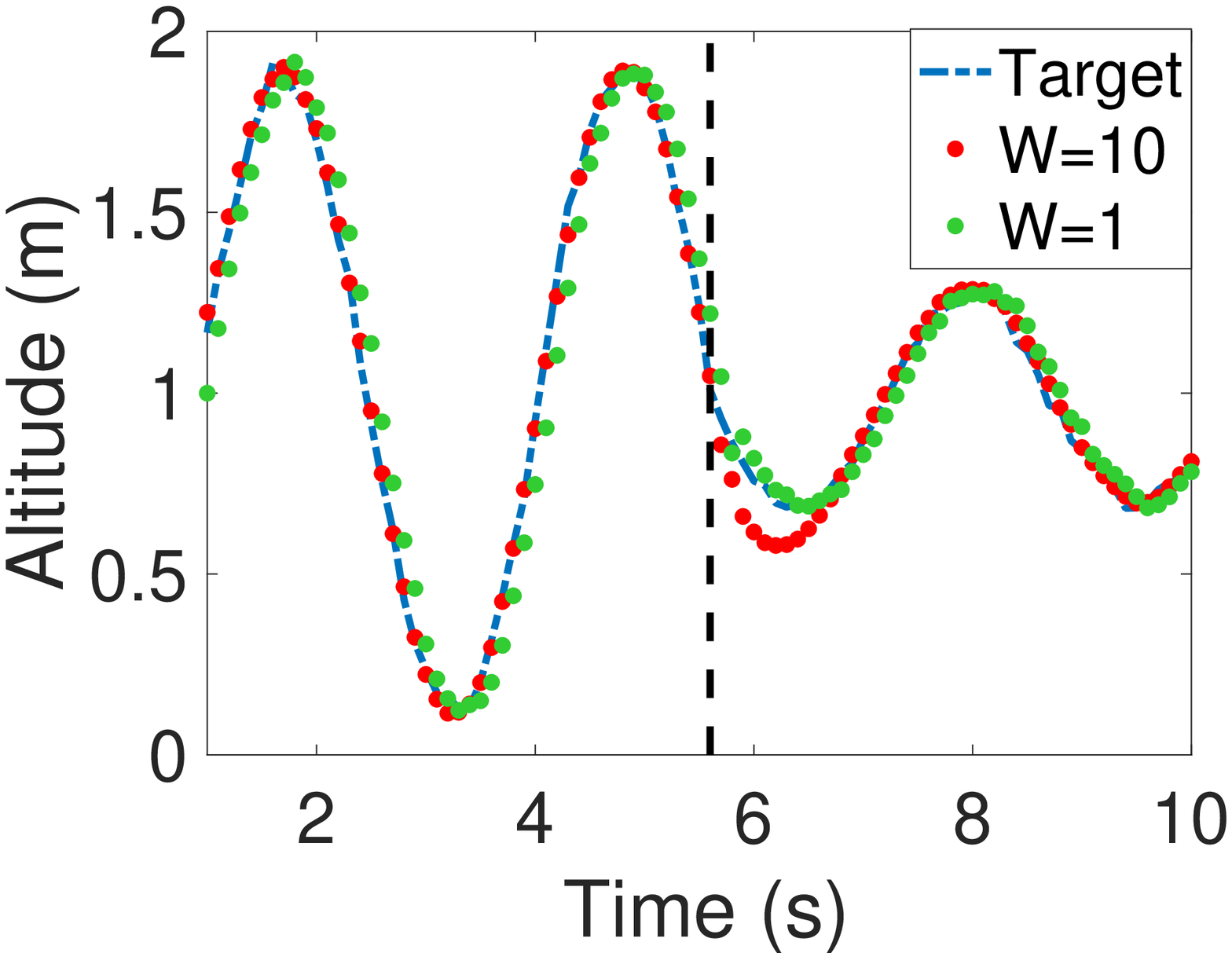}
	\caption{Quadrotor tracking}
\end{subfigure}
\setlength{\belowcaptionskip}{-10pt}
	\caption{(a) and (b):  the regrets of RHIG, AFHC  and CHC. (c): RHIG's tracking trajectories.}
	\label{fig: regret}
\end{figure}

In (i), we consider SOCO: $\min\sum_{t=1}^T\frac{1}{2}(\alpha(x_t-\theta_t)^2 + \beta(x_t-x_{t-1})^2)$, where $x_t$ is quadrotor's altitude, $\theta_t$ is  target's altitude, and $(x_t-x_{t-1})^2$ penalizes a sudden change in the quadrotor's altitude. The  target  $\theta_t$ follows: $\theta_t= y_t + d_t$, where $y_t=\gamma y_{t-1}+e_t$ is an autoregressive process with noise $e_t$ \cite{hamilton1994time}  and $d_t= a \sin(\omega t)$ is a periodic signal.  The predictions  are the sum of  $d_t$ and the optimal predictions of $y_t$. Notice that a  large  $\gamma $ indicates  worse long-term predictions. We  consider both a small $\gamma=0.3$ and a large $\gamma=0.7$ for  different levels of  errors. We  compare RHIG with AFHC \cite{lin2012online,chen2015online} and CHC  \cite{chen2016using}. (See the supplementary material for more details.) 
Figure \ref{fig: regret}(a) shows that with small  prediction errors, the three algorithms perform similarly well and  RHIG is slightly better. Figure \ref{fig: regret}(b) shows that with  large prediction errors,
 RHIG significantly outperforms AFHC and CHC. {Some intuitive explanations are provided below.  Firstly,  AFHC and CHC are optimization-based methods, while our RHIG is based on gradient descent, which is known to be more robust to errors. Secondly,  RHIG implicitly reduces the impact of the (poorer-quality) long-term predictions and focuses more on the (better) short-term ones by using long-term predictions in the first several updates and then using short-term ones in later updates to refine the decisions;  while AFHC and CHC treat  predictions more equally by taking averages of the optimal solutions computed by both long-term and short-term predictions (see \cite{chen2015online,chen2016using} for more details). These two intuitive reasons may explain the better numerical performance of our RHIG when compared with AFHC and CHC.}

In (ii), we consider  a simplified second-order  model of  quadrotor vertical
flight: $ \ddot x=k_1 u-g+k_2$,
where $x, \dot x, \ddot x$ are  the  altitude, velocity and  acceleration respectively, $u$ is the control input (motor thrust command),  $g$ is the gravitational acceleration, $k_1$ and $k_2$ are physical parameters. We consider  time discretization and cost function $\sum_{t=1}^T\frac{1}{2}(\alpha(x_t-\theta_t)^2 + \beta u_t^2)$.   The target $\theta_t$  follows the  process in (i), but with a sudden  change in  $d_t$ at $t_c=5.6$s, causing large prediction errors at around $t_c$, which is unknown until  $t_c$. Figure \ref{fig: regret}(c) plots the quadrotor's trajectories  generated by RHIG with $W=1,10$ and shows RHIG's nice tracking performance even when considering   physical dynamics. 
$W=10$ performs better first by using  more predictions. However, right after $t_c$, $W=1$ performs better since the poor prediction quality there  degrades the performance. Lastly, the trajectory with $W=10$  quickly returns to the desired one after  $t_c$, showing the robustness of RHIG to     prediction error shocks.

\vspace{-0.2cm}
\section{Conclusion}
\vspace{-0.1cm}
This paper studies how to leverage multi-step-ahead noisy predictions in smoothed online convex optimization. We design a gradient-based  algorithm RHIG and analyze its dynamic regret under  general prediction errors and a stochastic  prediction error model.  RHIG effectively reduces the  impact of multi-step-ahead  prediction errors. Future work includes: 1) closing the gap between the upper  and the lower bound in Section \ref{sec: regret general}; 2)  lower bounds for general cases; 3)  online control problems; 4) the convex case analysis without the strong convexity assumption;  etc.

%
%
%
%
%
%
%


\printbibliography

\appendix
\section*{Appendices}
The appendices provide   additional discussions and proofs of the theoretical results. In particular,  Appendix \ref{append: additional discussion} provides additional discussions on RHIG when $W>T$;  next, Appendix \ref{append: lem 1} provides a proof of Lemma \ref{lem: property C(x,theta)}; Appendix \ref{append: thm 1} provides a proof of Theorem \ref{thm: general}; Appendix \ref{append: ogd} provides a proof of Theorem \ref{thm: ogd dyn regret}, a proof of Corollary \ref{cor: RHIG-OGD determine}, and also proves the claimed properties of the regret bound in Section \ref{sec: regret general}; Appendix \ref{append: special case} discusses the special case and proves Corollary \ref{cor: special case} and Theorem \ref{thm: lower bdd}; Appendix \ref{append: stochastic} considers the stochastic prediction errors and proves Theorem \ref{thm: expected regret bounds}, Corollary \ref{cor: RHIG-OGD-stoch} and Theorem \ref{thm: concentration bdd}; finally, Appendix \ref{append: numerical} provides additional discussions on the numerical experiments.

\section{Additional Discussions on RHIG when W is larger than T}\label{append: additional discussion}

As mentioned in Section \ref{sec: online alg}, in RHIG, one can select the look-ahead horizon $W>T$. To further illustrate this case, we provide an example of RHIG for $T=3$ and $W=5$ below.

\begin{figure}[htp]
	\centering
	\includegraphics[scale=0.5]{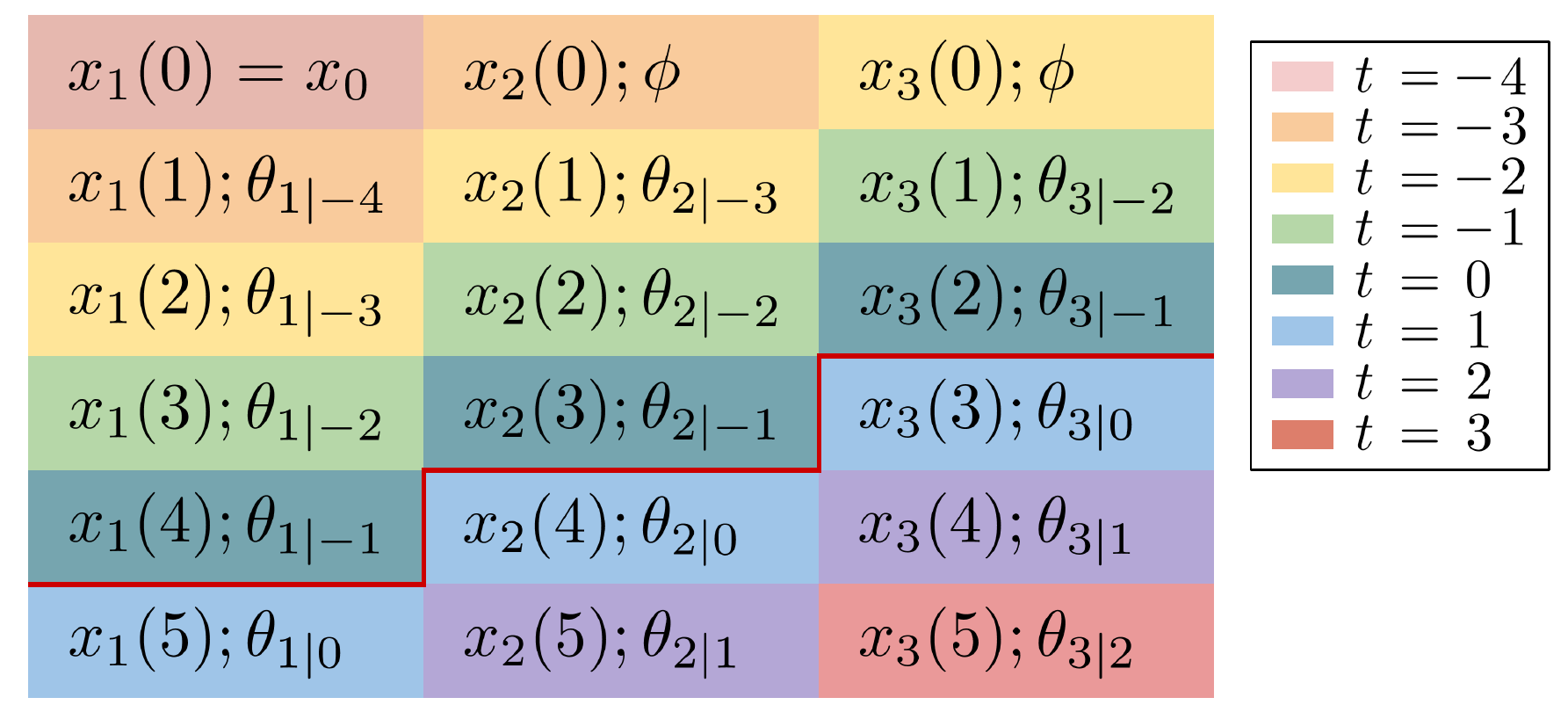}
	\caption{Example of RHIG   when   $W=5> T=3$. (Pink) At $t=1-W=-4$,  let $x_1(0)=x_0$. (Orange) At $t=-3$,   initialize $x_2(0)$ by $\phi$,  then compute $x_1(1)$ by inexact offline GD \eqref{equ: inexact offline GD} with prediction $\theta_{1\mid t-1 }=\theta_{1\mid -4}=\theta_{1\mid 0}$. (Yellow) At $t=-2$,  initialize $x_3(0)$ by $\phi$, and update $x_2(1)$ and  $x_1(2)$ by \eqref{equ: inexact offline GD} with $\theta_{2\mid -3}=\theta_{2\mid 0}$ and $\theta_{1\mid -3}=\theta_{1\mid 0}$ respectively. (Green) At $t=-1$, update $x_3(1)$, $x_2(2), x_1(3)$ by inexact offline GD \eqref{equ: inexact offline GD} with  $\theta_{3\mid -2}=\theta_{3\mid 0}$, $\theta_{2\mid -2}=\theta_{2\mid 0}$, and $\theta_{1\mid -2}=\theta_{1\mid 0}$ respectively. (Dark green) At $t=0$, update $x_3(2), x_2(3), x_1(4)$ by inexact offline GD \eqref{equ: inexact offline GD} with  $\theta_{3\mid -1}=\theta_{3\mid 0}$, $\theta_{2\mid -1}=\theta_{2\mid 0}$, and $\theta_{1\mid 12}=\theta_{1\mid 0}$ respectively. (Blue) At $t=1$, update $x_3(3), x_2(4), x_1(5)$ by inexact offline GD \eqref{equ: inexact offline GD} with  $\theta_{3\mid 0}$, $\theta_{2\mid 0}$, and $\theta_{1\mid 0}$ respectively. Then output $x_1(5)$. (Purple) At $t=2$, update $x_3(4)$ and $x_2(5)$ by inexact offline GD \eqref{equ: inexact offline GD} with  $\theta_{3\mid 1}$, $\theta_{2\mid 1}$ respectively and output $x_2(5)$. (Red) At $t=3$, update $x_3(5)$ by inexact offline GD \eqref{equ: inexact offline GD} with  $\theta_{3\mid 2}$ and output $x_3(5)$.\\
By recalling that $\theta_{t\mid \tau}=\theta_{t\mid 0}$ when $\tau<0$, we note that all the computation at $t\leq 0$ (above the red lines) is based on initial predictions $\{\theta_{1\mid 0}, \theta_{2\mid 0}, \theta_{3\mid 0}\}$. Therefore, the computed variables $x_1(4), x_2(3), x_3(2)$ at $t=0$ can be viewed as the iterated variables of offline (exact) gradient descent \eqref{equ: offline GD} under parameters $\{\theta_{1\mid 0}, \theta_{2\mid 0}, \theta_{3\mid 0}\}$ after 4,3,2 iterations respectively. 
} 

	\label{fig: illustrationW}
\end{figure} 

Further, we explain the case when $W\to +\infty$ by using the example in Figure \ref{fig: illustrationW} for $T=3$.  Similar to the discussion for Figure \ref{fig: illustrationW}, for general $W>T$,  it can be verified that the computed variables at $t=0$ are $x_1(W-1), x_2(W-2), x_3(W-3)$, which can be viewed as the iterated variables of offline (exact) gradient descent \eqref{equ: offline GD} under parameters $(\theta_{1\mid 0}, \theta_{2\mid 0}, \theta_{3\mid 0})$ after $W-1$, $W-2$, $W-3$ iterations respectively. For $W\to +\infty$,  $(x_1(W-1), x_2(W-2), x_3(W-3))$ converges to the optimal solution to $\min_{\bm x\in \X^T}C(\bm x; (\theta_{1\mid 0}, \theta_{2\mid 0}, \theta_{3\mid 0}))$. Then, at $t=1$, RHIG conducts one inexact gradient update for each $x_3(W-2), x_2(W-1), x_1(W)$ based on predictions $(\theta_{1\mid 0}, \theta_{2\mid 0}, \theta_{3\mid 0})$ and outputs $x_1(W)$. (When $W\to +\infty$, $x_1(W)$ is not updated at $t=1$ since it has converged to  $\min_{\bm x\in \X^T}C(\bm x; (\theta_{1\mid 0}, \theta_{2\mid 0}, \theta_{3\mid 0}))$.) At $t=2$,  RHIG conducts one inexact gradient update for both $x_3(W-1)$ and $x_2(W)$ based on \textit{new} prediction $\theta_{3\mid 1}$ and $\theta_{2\mid 2}$. At $t=3$, RHIG conducts one exact gradient update for $x_3(W)$ based on \textit{new} prediction $\theta_{3\mid 2}$. This explains the scenario when $W\to +\infty$.

\section{Proof of Lemma \ref{lem: property C(x,theta)}}\label{append: lem 1}
Firstly, we prove the strong convexity. Since $f(x_t;\theta_t)$ is $\alpha$-strongly convex with respect to $x_t$, $\sum_{t=1}^T f(x_t;\theta_t)$ is $\alpha$-strongly convex with respect to $\bm x=(x_1^\top, \dots, x_T^\top)^\top$. Since $d(x_t, x_{t-1})$ is convex with respect to $(x_t, x_{t-1})$, $\sum_{t=1}^T d(x_t,x_{t-1})$ is also convex with respect to $\bm x$. Consequently, $C(\bm x;\bm \theta)=\sum_{t=1}^T\left( f(x_t;\theta_t)+ d(x_t,x_{t-1})\right)$ is $\alpha$-strongly convex with respect to $\bm x$.

Next, we prove the smoothness. For any $x_t, y_t \in \X$, by the $l_f$-smoothness of $f(x_t;\theta_t)$ for all $t$, we have
\begin{align*}
 f(y_t)\leq  f(x_t)+ \langle \nabla_{x_t} f(x_t;\theta_t) , y_t-x_t\rangle + \frac{l_f}{2}\| x_t- y_t\|^2
\end{align*}
By the $l_d$-smoothness of $d(x_t, x_{t-1})$, for any $x_t, y_t, x_{t-1}, y_{t-1}\in \X$, we have
\begin{align*}
d(y_t, y_{t-1})\leq \ &d(x_t, x_{t-1}) + \langle 
\nabla_{x_t}d(x_t, x_{t-1}), 
y_t-x_t \rangle +\langle 
\nabla_{x_{t-1}}d(x_t, x_{t-1}), y_{t-1}-x_{t-1}
\rangle\\
&+ \frac{l_d}{2} (\left\| 
y_t-x_t \|^2+\|y_{t-1}-x_{t-1}
\right\|^2)
\end{align*}
for $t\geq 2$ and $d(y_1, x_0)\leq d(x_1, x_0)+ \langle \nabla_{x_1} d(x_1, x_0), y_1-x_1\rangle + \frac{l_d}{2}\|x_1-y_1\|^2$ for $t=1$.

Therefore, for any $\bm x, \bm y \in \X^T$, by summing the  smoothness inequalities above over $t=1,\dots, T$, we obtain
\begin{align*}
C(\bm y; \bm \theta)\leq C(\bm x;\bm \theta)+\langle \nabla_{\bm x} C(\bm x;\bm \theta), \bm y -\bm x\rangle + \frac{l_f+2l_d}{2}\|\bm x-\bm y\|^2
\end{align*}
where we used the fact that $\nabla_{\bm x} C(\bm x;\bm \theta)$ is composed of partial gradients $\nabla_{x_t} C(\bm x;\bm \theta)= \nabla_{x_t} f(x_t;\theta_t)+ \nabla_{x_t} d(x_t,x_{t-1})+ \one_{(t<T)}\cdot \nabla_{x_t}d(x_{t+1}, x_t)$. 



\section{Proof of Theorem \ref{thm: general}}\label{append: thm 1}
Consider the offline  optimization with parameter $\bm \theta$, i.e.
$ \min_{\bm x \in \X^T} C(\bm x;\bm \theta)$.
As mentioned in Section \ref{sec: online alg}, RHIG can be interpreted as projected gradient descent on $C(\bm x;\bm \theta)$ with inexact  gradients:
\begin{equation}\label{equ: inexact gd appendix}
\bm x(k+1)=\Pi_{\X^T} \left[ \bm x(k)-\eta \nabla_{\bm x} C(\bm x(k);\bm \theta- \bm \delta(W-k))\right]
\end{equation} 
where the exact gradient should be $ \nabla_{\bm x} C(\bm x(k);\bm \theta)$ but the parameter prediction error $\bm \delta(W-k)$ results in inexact gradient $\nabla_{\bm x} C(\bm x(k);\bm \theta- \bm \delta(W-k))$. Notice that when $W-k>T$, by our definition, we have $\bm \delta(W-k)=\bm \delta(T)$.

Consequently, the regret bound of RHIG can be proved based on the convergence analysis of the projected gradient descent with inexact gradients. We note that unlike the classic inexact gradient where the gradient errors are uniformly bounded, RHIG's inexact gradients \eqref{equ: inexact gd appendix} have different gradient errors at different iterations, thus calling for slightly different convergence analysis.

In the following, we first provide some supportive lemmas, then provide a rigorous proof of Theorem \ref{thm: general}. 

\subsection{Supportive Lemmas}
Firstly, we provide a bound on the gradient errors with respect to the errors on the parameters.
\begin{lemma}[Gradient prediction error bound]\label{lem: grad pred err bdd}
	For any true parameter $\bm \theta\in \Theta^T$ and the predicted parameter $\bm \theta' \in \Theta^T$, the error of the predicted gradient can be bounded below.
	\begin{align*}
	\| \nabla_{\bm x} C(\bm x; \bm \theta' )- \nabla_{\bm x} C(\bm x;\bm \theta)\|^2 \leq h^2\| \bm \theta'-\bm \theta\|^2, \quad \forall \ \bm x \in \Theta^T
	\end{align*}
\end{lemma}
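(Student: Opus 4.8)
The plan is to exploit the additive structure of $C(\bm x;\bm\theta)$ together with the fact that only the stage costs $f$ depend on the parameter, whereas the switching costs $d$ do not. The whole lemma should follow from Assumption~\ref{ass: grad lip cont} applied stage by stage, plus the block decomposition of the gradient already used in Lemma~\ref{lem: property C(x,theta)}.

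First I would write out the partial gradient of $C$ with respect to each block $x_t$, namely
$$\nabla_{x_t} C(\bm x;\bm\theta)=\nabla_{x_t} f(x_t;\theta_t)+\nabla_{x_t} d(x_t,x_{t-1})+\one_{(t<T)}\nabla_{x_t} d(x_{t+1},x_t).$$
Subtracting the gradient evaluated at $\bm\theta$ from the one evaluated at $\bm\theta'$, the two switching-cost terms cancel exactly, because $d$ carries no dependence on the parameter, leaving
$$\nabla_{x_t} C(\bm x;\bm\theta')-\nabla_{x_t} C(\bm x;\bm\theta)=\nabla_{x_t} f(x_t;\theta_t')-\nabla_{x_t} f(x_t;\theta_t).$$
This cancellation is the crux of the argument: it reduces a statement about the full coupled cost to a statement about the decoupled stage costs.

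Next I would invoke Assumption~\ref{ass: grad lip cont}, which gives $\|\nabla_{x_t} f(x_t;\theta_t')-\nabla_{x_t} f(x_t;\theta_t)\|\le h\|\theta_t'-\theta_t\|$ for every $t$, so that the squared block error is at most $h^2\|\theta_t'-\theta_t\|^2$. Finally, since $\nabla_{\bm x} C$ is the stacking of the blocks $\nabla_{x_t} C$, its squared Euclidean norm equals the sum of the squared block norms; summing the per-stage bounds over $t=1,\dots,T$ yields $\sum_{t=1}^T h^2\|\theta_t'-\theta_t\|^2=h^2\|\bm\theta'-\bm\theta\|^2$, which is exactly the claimed bound.

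I do not anticipate a genuine obstacle here: the result is essentially a direct consequence of parameter-Lipschitzness of $\nabla_x f$. The only points requiring a little care are the bookkeeping of the indicator term $\one_{(t<T)}$ and the boundary cases $t=1$ and $t=T$, but since none of the switching-cost terms involve $\bm\theta$, they drop out uniformly regardless of the index, so the cancellation step is clean.
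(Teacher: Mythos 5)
Your proposal is correct and matches the paper's proof essentially step for step: the same block decomposition of $\nabla_{\bm x} C$, the same cancellation of the parameter-independent switching-cost terms, the stagewise application of Assumption~\ref{ass: grad lip cont}, and the final summation of squared block norms. No gaps.
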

\begin{proof}
	Firstly, we consider the gradient with respect to each stage variable $x_t$, which is provided by
	$$ \nabla_{x_t} C(\bm x; \bm \theta)= \nabla_{x_t} f(x_t; \theta_t)+ \nabla_{x_t} d(x_t, x_{t-1})+ \nabla_{x_t} d(x_{t+1}, x_t)\one_{(t\leq T-1)}$$
	Noticing that $d(x_t, x_{t-1})$ does not depend on the parameter $\bm \theta$, we obtain the prediction error bound of gradient with respect to $x_t$ as follows.
	\begin{align*}
	\|	\nabla_{x_t} C(\bm x; \bm \theta')- \nabla_{x_t} C(\bm x; \bm \theta)\|& = \| \nabla_{x_t} f(x_t; \theta_t')-\nabla_{x_t} f(x_t; \theta_t)\|\leq h\| \theta_t'-\theta_t\|
	\end{align*}
	Therefore, the prediction error of the full gradient can be bounded as follows,
	\begin{align*}
	\| \nabla_{\bm x} C(\bm x; \bm \theta' )- \nabla_{\bm x} C(\bm x;\bm \theta)\|^2& = \sum_{t=1}^T \| 	\nabla_{x_t} C(\bm x; \bm \theta')- \nabla_{x_t} C(\bm x; \bm \theta)\|^2 \leq h^2 \sum_{t=1}^T \|\theta_t' -\theta_t\|^2= h^2 \|\bm \theta'-\bm \theta\|^2
	\end{align*}
	which completes the proof.
\end{proof}

Next, we provide an equivalent characterization of the projected gradient update with respect to inexact parameters.

\begin{lemma}[A representation of inexact projected gradient updates]\label{lem: project GD= min}
	For any predicted parameter $\bm \theta'$ and any stepsize $\eta$, the  projected gradient descent with predicted parameter $ \bm x(k+1)=\Pi_{\X^T} \left[ \bm x(k)-\eta \nabla_{\bm x} C(\bm x(k);\bm \theta')\right]$ is equivalent to the following representation.
	$$ \bm x(k+1)= \argmin_{\bm x\in \X^T} \left\{ \langle \nabla_{\bm x} C(\bm x(k);\bm \theta'), \bm x-\bm x(k)\rangle + \frac{1}{2\eta}\|\bm x-\bm x(k)\|^2\right\}$$
	
\end{lemma}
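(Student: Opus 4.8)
The plan is to unfold the variational definition of the Euclidean projection and then complete the square, reducing both expressions to the minimizer of the same strongly convex quadratic. First I would recall that for a nonempty closed convex set, the projection $\Pi_{\X^T}[\bm z]$ is \emph{by definition} the unique minimizer of $\frac{1}{2}\|\bm x-\bm z\|^2$ over $\bm x\in\X^T$. Applying this with $\bm z=\bm x(k)-\eta\nabla_{\bm x}C(\bm x(k);\bm\theta')$, the left-hand update is rewritten as
\[
\bm x(k+1)=\argmin_{\bm x\in\X^T}\ \frac{1}{2}\left\|\bm x-\bigl(\bm x(k)-\eta\nabla_{\bm x}C(\bm x(k);\bm\theta')\bigr)\right\|^2.
\]

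Next I would manipulate the right-hand objective. Writing $g:=\nabla_{\bm x}C(\bm x(k);\bm\theta')$ for brevity and multiplying the objective by the positive constant $\eta$ (which leaves the $\argmin$ unchanged), I obtain $\eta\langle g,\bm x-\bm x(k)\rangle+\frac{1}{2}\|\bm x-\bm x(k)\|^2$. Completing the square in the variable $\bm x-\bm x(k)$ gives $\frac{1}{2}\|\bm x-\bm x(k)+\eta g\|^2-\frac{\eta^2}{2}\|g\|^2$, where the final term does not depend on $\bm x$. Discarding this constant, the $\argmin$ of the right-hand side is exactly
\[
\argmin_{\bm x\in\X^T}\ \frac{1}{2}\left\|\bm x-\bigl(\bm x(k)-\eta g\bigr)\right\|^2,
\]
which coincides with the projection formula displayed above.

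Finally, since $\X$ (hence the product $\X^T$) is a nonempty closed convex set and $\|\cdot\|^2$ is strongly convex, the minimizer in each expression is unique, so the two characterizations define the same point $\bm x(k+1)$. I do not anticipate any genuine obstacle: the statement is a routine reformulation of a single projected-gradient step as a proximal/linearized step, and beyond elementary algebra the only ingredients are the variational definition of projection onto a convex set and uniqueness of the minimizer of a strongly convex function over a convex set.
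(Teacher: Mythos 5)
Your proposal is correct and follows essentially the same route as the paper's proof: both unfold the projection as $\argmin_{\bm x\in\X^T}\|\bm x-(\bm x(k)-\eta\nabla_{\bm x}C(\bm x(k);\bm\theta'))\|^2$ and then match it to the linearized objective by expanding or completing the square and discarding the $\bm x$-independent term $\eta^2\|\nabla_{\bm x}C(\bm x(k);\bm\theta')\|^2$. The only cosmetic difference is direction (you complete the square starting from the proximal form, the paper expands the square starting from the projection form), plus your explicit remark on uniqueness via strong convexity, which the paper leaves implicit.
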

\begin{proof}
	By the definition of projection, the projected gradient descent with predicted parameter is equivalent to the following.
	\begin{align*}
	\bm x(k+1)& = \argmin_{\bm x \in \X^T} \left\{ \| \bm x- \bm x(k)+\eta \nabla_{\bm x} C(\bm x(k);\bm \theta')\|^2\right\}\\
	& =  \argmin_{\bm x\in \X^T} \left\{\| \bm x-\bm x(k)\|^2+ \eta^2 \| \nabla_{\bm x} C(\bm x(k);\bm \theta')\|^2 + 2\eta\langle \nabla_{\bm x} C(\bm x(k);\bm \theta'), \bm x-\bm x(k)\rangle \right\}\\
	& = \argmin_{\bm x\in \X^T} \left\{\frac{1}{2\eta}\| \bm x-\bm x(k)\|^2+  \langle \nabla_{\bm x} C(\bm x(k);\bm \theta'), \bm x-\bm x(k)\rangle \right\}
	\end{align*}
	where the last equality uses the fact that $  \eta^2 \| \nabla_{\bm x} C(\bm x(k);\bm \theta')\|^2 $ does not depend on $\bm x$.
\end{proof}

Lastly, we provide a strong-convexity-type inequality and a smoothness-type inequality under inexact gradients. Both inequalities suffer from additional error terms caused by the  parameter prediction error.

%
%

\begin{lemma}[Strong convexity inequality with errors]\label{lem: strong convex with error C}
Consider optimization $\min_{\bm x \in \X^T} C(\bm x;\bm \theta)$. For any $\bm x, \bm y \in \X^T$,  for  any inexact parameter $\bm \theta'$ and the resulting  inexact gradient $\nabla_{\bm x} C(\bm x;\bm \theta')$, we have
\begin{align*}
C(\bm y;\bm \theta)\geq C(\bm x;\bm \theta)+ \langle \nabla_{\bm x} C(\bm x;\bm \theta'), \bm y-\bm x\rangle + \frac{\alpha}{4} \|\bm x-\bm y\|^2 - \frac{h^2}{\alpha}\|\bm \theta'-\bm \theta\|^2
\end{align*}
\end{lemma}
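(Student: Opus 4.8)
The plan is to prove the strong-convexity inequality with errors by starting from the \emph{exact} strong convexity of $C(\bm x;\bm\theta)$ established in Lemma~\ref{lem: property C(x,theta)}, and then accounting for the discrepancy between the exact gradient $\nabla_{\bm x}C(\bm x;\bm\theta)$ and the inexact gradient $\nabla_{\bm x}C(\bm x;\bm\theta')$ that actually appears in the claimed bound. Since $C(\cdot;\bm\theta)$ is $\alpha$-strongly convex in $\bm x$, we have for any $\bm x,\bm y\in\X^T$
\begin{align*}
C(\bm y;\bm\theta)\geq C(\bm x;\bm\theta)+\langle\nabla_{\bm x}C(\bm x;\bm\theta),\bm y-\bm x\rangle+\frac{\alpha}{2}\|\bm x-\bm y\|^2.
\end{align*}
The only obstacle is to replace the exact gradient $\nabla_{\bm x}C(\bm x;\bm\theta)$ by the inexact one $\nabla_{\bm x}C(\bm x;\bm\theta')$ while controlling the resulting error term.

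To do this, I would write $\langle\nabla_{\bm x}C(\bm x;\bm\theta),\bm y-\bm x\rangle=\langle\nabla_{\bm x}C(\bm x;\bm\theta'),\bm y-\bm x\rangle+\langle\nabla_{\bm x}C(\bm x;\bm\theta)-\nabla_{\bm x}C(\bm x;\bm\theta'),\bm y-\bm x\rangle$ and bound the cross term. The key tool is the gradient prediction error bound of Lemma~\ref{lem: grad pred err bdd}, which gives $\|\nabla_{\bm x}C(\bm x;\bm\theta')-\nabla_{\bm x}C(\bm x;\bm\theta)\|\leq h\|\bm\theta'-\bm\theta\|$. Applying Cauchy--Schwarz to the cross term and then Young's inequality in the form $\langle a,b\rangle\geq-\frac{1}{\alpha}\|a\|^2-\frac{\alpha}{4}\|b\|^2$ (splitting the constant so that the $\frac{\alpha}{4}\|\bm x-\bm y\|^2$ penalty is absorbed by exactly half of the strong-convexity quadratic term) yields
\begin{align*}
\langle\nabla_{\bm x}C(\bm x;\bm\theta)-\nabla_{\bm x}C(\bm x;\bm\theta'),\bm y-\bm x\rangle\geq-\frac{1}{\alpha}\|\nabla_{\bm x}C(\bm x;\bm\theta)-\nabla_{\bm x}C(\bm x;\bm\theta')\|^2-\frac{\alpha}{4}\|\bm x-\bm y\|^2.
\end{align*}

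Combining these pieces, the $-\frac{\alpha}{4}\|\bm x-\bm y\|^2$ from Young's inequality cancels against half of the $\frac{\alpha}{2}\|\bm x-\bm y\|^2$ strong-convexity term, leaving the desired $\frac{\alpha}{4}\|\bm x-\bm y\|^2$; and the squared gradient-error term, bounded via Lemma~\ref{lem: grad pred err bdd} by $\frac{h^2}{\alpha}\|\bm\theta'-\bm\theta\|^2$, produces exactly the claimed error term $-\frac{h^2}{\alpha}\|\bm\theta'-\bm\theta\|^2$. I expect no genuine difficulty here; the only point requiring care is choosing the right coefficient split in Young's inequality so that the residual quadratic coefficient matches $\frac{\alpha}{4}$ and the gradient-error constant comes out as $\frac{1}{\alpha}$ (rather than some other constant), which then dovetails cleanly with the $h^2$ factor from the Lipschitz bound.
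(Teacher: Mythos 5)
Your proposal is correct and matches the paper's own proof essentially line for line: both start from the exact $\alpha$-strong convexity of $C(\cdot;\bm\theta)$, split the inner product into the inexact-gradient term plus a cross term, bound the cross term via Cauchy--Schwarz and Young's inequality with the split $\|a\|\|b\|\leq\frac{1}{\alpha}\|a\|^2+\frac{\alpha}{4}\|b\|^2$, and finish with the gradient prediction error bound of Lemma~\ref{lem: grad pred err bdd}. No gaps; this is the paper's argument.
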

\begin{proof}
	By the strong convexity of $C(\bm x;\bm \theta)$, for any $\bm x, \bm y\in \X^T$ and any $\bm \theta, \bm \theta'\in \Theta^T$, we obtain the following.
	\begin{align*}
	C(\bm y; \bm \theta) &\geq C(\bm x; \bm \theta)+ \langle \nabla_{\bm x}C(\bm x;\bm \theta), \bm y-\bm x\rangle + \frac{\alpha}{2}\|\bm y -\bm x\|^2\\
	& = C(\bm x; \bm \theta)+ \langle \nabla_{\bm x}C(\bm x;\bm \theta'), \bm y-\bm x\rangle - \langle  \nabla_{\bm x}C(\bm x;\bm \theta')- \nabla_{\bm x}C(\bm x;\bm \theta), \bm y-\bm x\rangle + \frac{\alpha}{2}\|\bm y -\bm x\|^2\\
	& \geq  C(\bm x; \bm \theta)+ \langle \nabla_{\bm x}C(\bm x;\bm \theta'), \bm y-\bm x\rangle - \|  \nabla_{\bm x}C(\bm x;\bm \theta')- \nabla_{\bm x}C(\bm x;\bm \theta)\| \| \bm y-\bm x\|+ \frac{\alpha}{2}\|\bm y -\bm x\|^2\\
	& \geq C(\bm x; \bm \theta)+ \langle \nabla_{\bm x}C(\bm x;\bm \theta'), \bm y-\bm x\rangle - \frac{1}{\alpha}\|  \nabla_{\bm x}C(\bm x;\bm \theta')- \nabla_{\bm x}C(\bm x;\bm \theta)\|^2+ \frac{\alpha}{4}\|\bm y -\bm x\|^2\\
	& \geq C(\bm x; \bm \theta)+ \langle \nabla_{\bm x}C(\bm x;\bm \theta'), \bm y-\bm x\rangle - \frac{h^2}{\alpha}\| \bm \theta'-\bm \theta\|^2+ \frac{\alpha}{4}\|\bm y -\bm x\|^2
	\end{align*}
\end{proof}

\begin{lemma}[Smoothness inequality with errors]\label{lem: smooth with error C}
	Consider optimization $\min_{\bm x \in \X^T} C(\bm x;\bm \theta)$. For any $\bm x, \bm y \in \X^T$,  for  any inexact parameter $\bm \theta'$ and the resulting  inexact gradient $\nabla_{\bm x} C(\bm x;\bm \theta')$, we have
	\begin{align*}
	C(\bm y;\bm \theta)\leq C(\bm x;\bm \theta)+ \langle \nabla_{\bm x} C(\bm x;\bm \theta'), \bm y-\bm x\rangle + L \|\bm x-\bm y\|^2 + \frac{h^2}{2L}\|\bm \theta'-\bm \theta\|^2
	\end{align*}
\end{lemma}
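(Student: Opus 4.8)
The plan is to reduce this to the ordinary $L$-smoothness inequality for $C(\bm x;\bm\theta)$ and then absorb the error incurred by replacing the exact gradient $\nabla_{\bm x}C(\bm x;\bm\theta)$ with the inexact gradient $\nabla_{\bm x}C(\bm x;\bm\theta')$. This is the exact mirror of the proof of Lemma~\ref{lem: strong convex with error C}, only using the upper smoothness bound in place of the lower strong-convexity bound.

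First I would invoke Lemma~\ref{lem: property C(x,theta)}, which gives that $C(\bm x;\bm\theta)$ is $L$-smooth in $\bm x$, so the standard descent inequality holds:
\begin{align*}
C(\bm y;\bm\theta)\leq C(\bm x;\bm\theta)+\langle\nabla_{\bm x}C(\bm x;\bm\theta),\bm y-\bm x\rangle+\frac{L}{2}\|\bm x-\bm y\|^2.
\end{align*}
Next I would write $\nabla_{\bm x}C(\bm x;\bm\theta)=\nabla_{\bm x}C(\bm x;\bm\theta')+\big(\nabla_{\bm x}C(\bm x;\bm\theta)-\nabla_{\bm x}C(\bm x;\bm\theta')\big)$, splitting the linear term into the desired inexact inner product plus a cross term $\langle\nabla_{\bm x}C(\bm x;\bm\theta)-\nabla_{\bm x}C(\bm x;\bm\theta'),\bm y-\bm x\rangle$.

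The key step is to control this cross term. I would bound it by Cauchy--Schwarz and then apply Young's inequality with the coefficient chosen so that the resulting $\|\bm x-\bm y\|^2$ term carries weight exactly $\frac{L}{2}$; concretely, $ab\leq\frac{1}{2L}a^2+\frac{L}{2}b^2$ with $a=\|\nabla_{\bm x}C(\bm x;\bm\theta)-\nabla_{\bm x}C(\bm x;\bm\theta')\|$ and $b=\|\bm y-\bm x\|$. This produces the bound $\frac{1}{2L}\|\nabla_{\bm x}C(\bm x;\bm\theta)-\nabla_{\bm x}C(\bm x;\bm\theta')\|^2+\frac{L}{2}\|\bm x-\bm y\|^2$, and the second piece combines with the $\frac{L}{2}\|\bm x-\bm y\|^2$ already present to yield the claimed $L\|\bm x-\bm y\|^2$.

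Finally I would invoke Lemma~\ref{lem: grad pred err bdd} to replace $\|\nabla_{\bm x}C(\bm x;\bm\theta)-\nabla_{\bm x}C(\bm x;\bm\theta')\|^2$ by its upper bound $h^2\|\bm\theta'-\bm\theta\|^2$, so that the gradient-error term becomes exactly $\frac{h^2}{2L}\|\bm\theta'-\bm\theta\|^2$. Collecting the terms yields the stated inequality. There is no real obstacle here; the only thing requiring care is the calibration of the Young's inequality constant so that the two $\|\bm x-\bm y\|^2$ contributions add up to precisely $L\|\bm x-\bm y\|^2$ (rather than some larger multiple), which is what keeps the smoothness constant equal to $L$ and the error coefficient equal to $\frac{h^2}{2L}$.
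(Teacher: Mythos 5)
Your proposal is correct and matches the paper's proof step for step: the same $L$-smoothness descent inequality from Lemma~\ref{lem: property C(x,theta)}, the same gradient decomposition, Cauchy--Schwarz followed by Young's inequality calibrated as $ab\leq\frac{1}{2L}a^2+\frac{L}{2}b^2$, and the final application of Lemma~\ref{lem: grad pred err bdd}. Nothing is missing.
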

\begin{proof}
	By the smoothness of $C(\bm x;\bm \theta)$, for any $\bm x, \bm y\in \X^T$ and any $\bm \theta, \bm \theta'\in \Theta^T$, we obtain the following.
	\begin{align*}
	C(\bm y; \bm \theta) &\leq C(\bm x; \bm \theta)+ \langle \nabla_{\bm x}C(\bm x;\bm \theta), \bm y-\bm x\rangle + \frac{L}{2}\|\bm y -\bm x\|^2\\
	& = C(\bm x; \bm \theta)+ \langle \nabla_{\bm x}C(\bm x;\bm \theta'), \bm y-\bm x\rangle + \langle  \nabla_{\bm x}C(\bm x;\bm \theta)- \nabla_{\bm x}C(\bm x;\bm \theta'), \bm y-\bm x\rangle + \frac{L}{2}\|\bm y -\bm x\|^2\\
	& \leq  C(\bm x; \bm \theta)+ \langle \nabla_{\bm x}C(\bm x;\bm \theta'), \bm y-\bm x\rangle + \|  \nabla_{\bm x}C(\bm x;\bm \theta')- \nabla_{\bm x}C(\bm x;\bm \theta)\| \| \bm y-\bm x\|+ \frac{L}{2}\|\bm y -\bm x\|^2\\
	& \leq C(\bm x; \bm \theta)+ \langle \nabla_{\bm x}C(\bm x;\bm \theta'), \bm y-\bm x\rangle +\frac{1}{2L}\|  \nabla_{\bm x}C(\bm x;\bm \theta')- \nabla_{\bm x}C(\bm x;\bm \theta)\|^2+ L\|\bm y -\bm x\|^2\\
	& \leq C(\bm x; \bm \theta)+ \langle \nabla_{\bm x}C(\bm x;\bm \theta'), \bm y-\bm x\rangle + \frac{h^2}{2L}\| \bm \theta'-\bm \theta\|^2+ L\|\bm y -\bm x\|^2
	\end{align*}
\end{proof}

\subsection{Proof of Theorem \ref{thm: general}}

	According to Algorithm \ref{alg:RHIG} and the definition of the regret, we have $\text{Reg}(RHIG)= C(\bm x(W); \bm \theta)-C(\bm x^*; \bm \theta)$ and $\text{Reg}(\phi)= C(\bm x(0); \bm \theta)-C(\bm x^*; \bm \theta)$, where $\bm x^*=\argmin_{\X^T} C(\bm x;\bm \theta)$. For notational simplicity, we denote $r_k= \|\bm x(k)-\bm x^*\|^2$. 
	
	\nit{Step 1: bound $\textup{Reg}(RHIG)$ with $r_{W-1}$.} 
	\begin{align*}
	C(\bm x(W); \bm \theta) \leq \ & C(\bm x(W-1); \bm \theta)+ \langle \nabla_{\bm x}C(\bm x(W-1); \bm \theta-\bm \delta(1)), \bm x(W)-\bm x(W-1)\rangle\\
	& +L \|\bm x(W)-\bm x(W-1)\|^2 + \frac{h^2}{2L}\|\bm \delta(1)\|^2\\
	= \ & \min_{\bm x \in \X^T}\left\{\langle \nabla_{\bm x}C(\bm x(W-1); \bm \theta-\bm \delta(1)), \bm x-\bm x(W-1)\rangle+L \|\bm x-\bm x(W-1)\|^2  \right\}\\
	& +C(\bm x(W-1); \bm \theta)+ \frac{h^2}{2L}\|\bm \delta(1)\|^2\\
	\leq \ & \langle \nabla_{\bm x}C(\bm x(W-1); \bm \theta-\bm \delta(1)), \bm x^*-\bm x(W-1)\rangle+L \|\bm x^*-\bm x(W-1)\|^2 \\
	& +C(\bm x(W-1); \bm \theta)+ \frac{h^2}{2L}\|\bm \delta(1)\|^2\\
	\leq \ & C(\bm x^*;\bm \theta)+ (L-\frac{\alpha}{4})r_{W-1}+\left(\frac{h^2}{\alpha}+  \frac{h^2}{2L}\right)\|\bm \delta(1)\|^2
		\end{align*}
		where we used Lemma \ref{lem: smooth with error C} in the first inequality, Lemma \ref{lem: project GD= min} and $\eta=\frac{1}{2L}$ in the first equality, Lemma \ref{lem: strong convex with error C} in the last inequality.
		By rearranging terms, we obtain
		\begin{equation}\label{equ: bdd Reg by rW-1}
	\text{Reg}(RHIG)= C(\bm x(W); \bm \theta)-C(\bm x^*; \bm \theta)\leq L\rho r_{W-1}+ \zeta \|\bm \delta(1)\|^2
		\end{equation}
			where $\rho=1-\frac{\alpha}{4L}$, $\zeta=\frac{h^2}{\alpha}+\frac{h^2}{2L}$.

\nit{Step 2: a recursive inequality between $r_{k+1}$ and $r_{k}$.}\\
In the following, we will show that
\begin{equation}
r_{k+1}\leq \rho r_k + \frac{\zeta}{L} \|\bm \delta(W-k)\|^2, \quad \forall\ 0 \leq k \leq W-1 \label{equ: recursive rk+1<rho rk}
\end{equation} 

Firstly, by \eqref{equ: inexact gd appendix}, $\eta=\frac{1}{2L}$, Lemma \ref{lem: project GD= min} and its first-order optimality condition, we have 
$$ \langle \nabla_{\bm x} C(\bm x(k); \bm \theta-\delta(W-k))+ 2L(\bm x(k+1)-\bm x(k)), \bm x-\bm x(k+1)\rangle \geq 0, \quad \forall \ \bm x\in \X^T$$
By substituting $\bm x=\bm x^*$ and rearranging terms, we obtain
\begin{equation}\label{equ: first order cond PGD}
\frac{1}{2L}\langle  \nabla_{\bm x} C(\bm x(k); \bm \theta-\delta(W-k)), \bm x^*-\bm x(k+1)\rangle \geq \langle \bm x(k+1)-\bm x(k), \bm x(k+1)-\bm x^*\rangle
\end{equation} 
Next, we will derive the recursive inequality \eqref{equ: recursive rk+1<rho rk} by using \eqref{equ: first order cond PGD}.
\begin{align*}
r_{k+1}= \ & \| \bm x(k+1)-\bm x^*\|^2 = \| \bm x(k+1)-\bm x(k)+ \bm x(k)-\bm x^*\|^2\\
= \ & r_k - \| \bm x(k+1)-\bm x(k)\|^2 + 2 \langle \bm x(k+1)-\bm x(k), \bm x(k+1)-\bm x^*\rangle \\
\leq \ & r_k - \| \bm x(k+1)-\bm x(k)\|^2 + \frac{1}{L}\langle  \nabla_{\bm x} C(\bm x(k); \bm \theta-\bm \delta(W-k)), \bm x^*-\bm x(k+1)\rangle\\
= \ & r_k - \| \bm x(k+1)-\bm x(k)\|^2 + \frac{1}{L}\langle  \nabla_{\bm x} C(\bm x(k); \bm \theta-\bm \delta(W-k)), \bm x^*-\bm x(k)\rangle\\
& +  \frac{1}{L}\langle  \nabla_{\bm x} C(\bm x(k); \bm \theta-\bm \delta(W-k)), \bm x(k)-\bm x(k+1)\rangle\\
= \ & r_k+  \frac{1}{L}\langle  \nabla_{\bm x} C(\bm x(k); \bm \theta-\bm \delta(W-k)), \bm x^*-\bm x(k)\rangle\\
&- \frac{1}{L}\left( \langle  \nabla_{\bm x} C(\bm x(k); \bm \theta-\bm \delta(W-k)), \bm x(k+1)-\bm x(k)\rangle + L \| \bm x(k+1)-\bm x(k)\|^2 \right)\\
\leq \ & r_k+  \frac{1}{L}\langle  \nabla_{\bm x} C(\bm x(k); \bm \theta-\bm \delta(W-k)), \bm x^*-\bm x(k)\rangle\\
& - \frac{1}{L}\left(C(\bm x(k+1);\bm \theta)-C(\bm x(k);\bm \theta)-\frac{h^2}{2L}\| \bm \delta(W-k)\|^2\right)\\
\leq  \ & r_k+  \frac{1}{L}\langle  \nabla_{\bm x} C(\bm x(k); \bm \theta-\bm \delta(W-k)), \bm x^*-\bm x(k)\rangle \\
&-\frac{1}{L}\left(C(\bm x^*;\bm \theta)-C(\bm x(k);\bm \theta)\right)+\frac{h^2}{2L^2}\|\bm \delta(W-k)\|^2\\
= \ & r_k -\frac{1}{L}\left(C(\bm x^*;\bm \theta)-C(\bm x(k);\bm \theta)+\langle  \nabla_{\bm x} C(\bm x(k); \bm \theta-\bm ), \bm x(k)-\bm x^*\rangle\right)\\
&+\frac{h^2}{2L^2}\|\bm \delta(W-k)\|^2\\
\leq \ & r_k -\frac{1}{L}\left( \frac{\alpha}{4}\| \bm x(k)-\bm x^*\|^2-\frac{h^2}{\alpha} \|\bm \delta(W-k)\|^2\right)+\frac{h^2}{2L^2}\|\bm \delta(W-k)\|^2\\
= \ & \rho r_k + \frac{\zeta}{L} \|\bm \delta(W-k)\|^2
\end{align*}
which completes the proof of \eqref{equ: recursive rk+1<rho rk}.

\nit{Step 3: completing the proof by \eqref{equ: recursive rk+1<rho rk} and \eqref{equ: bdd Reg by rW-1}.}\\
By summing \eqref{equ: recursive rk+1<rho rk} over $k=0, \dots, W-2$, we obtain
\begin{align*}
r_{W-1}&\leq \rho^{W-1} r_0 + \frac{\zeta}{L}  \left(\|\bm \delta(2)\|^2 + \rho \|\bm \delta(3)\|^2 + \dots + \rho^{W-2}\|\bm \delta(W)\|^2\right)\\
& \leq  \rho^{W-1} \frac{2}{\alpha} (C(\bm x(0);\bm \theta)-C(\bm x^*; \bm \theta))+ \frac{\zeta}{L} \sum_{k=2}^W \rho^{k-2}\|\bm\delta(k)\|^2
\end{align*}
By \eqref{equ: bdd Reg by rW-1}, we obtain the regret bound in Theorem \ref{thm: general}:
\begin{align*}
\text{Reg}(RHIG)&\leq L\rho ( \rho^{W-1} \frac{2}{\alpha} (C(\bm x(0);\bm \theta)-C(\bm x^*; \bm \theta))+ \frac{\zeta}{L}  \sum_{k=2}^W \rho^{k-2}\|\bm\delta(k)\|^2)+ \zeta \|\bm \delta(1)\|^2\\
& = \frac{2L}{\alpha}\rho^W\text{Reg}(\phi) +  \zeta  \sum_{k=1}^W \rho^{k-1}\|\bm\delta(k)\|^2\\
& =  \frac{2L}{\alpha}\rho^W\text{Reg}(\phi) +  \zeta  \sum_{k=1}^{\min(W,T)} \rho^{k-1}\|\bm\delta(k)\|^2+  \zeta  \one_{(W>T)}\sum_{k=T+1}^W \rho^{k-1}\|\bm\delta(T)\|^2\\
& =  \frac{2L}{\alpha}\rho^W\text{Reg}(\phi) +  \zeta  \sum_{k=1}^{\min(W,T)} \rho^{k-1}\|\bm\delta(k)\|^2+\zeta  \one_{(W>T)}\frac{\rho^T-\rho^{W}}{1-\rho}\|\bm\delta(T)\|^2
\end{align*}
where we used the fact that $\|\bm \delta(k)\|=\|\bm \delta(T)\|$ when $k>T$.

\section{Proofs of Theorem \ref{thm: ogd dyn regret}, Corollary \ref{cor: RHIG-OGD determine} and the claimed properties of the regret bound in Section \ref{sec: regret general}}\label{append: ogd}

In this section, we provide a dynamic regret bound for the restarted OGD initialization rule in Section \ref{sec: regret general}, based on which we prove Corollary \ref{cor: RHIG-OGD determine}. To achieve this, we will first establish a static regret bound for OGD initialization \eqref{equ: OGD initialization}. The proof is inspired by \cite{besbes2015non}.

For notational simplicity,  we slightly abuse the notation and let $x_t$ denote $x_t(0)$ generated by OGD. Further, by the definition of the prediction errors $\delta_{t-1}(W)$ for $W\geq 1$, we can write the initialization rule \eqref{equ: OGD initialization} as the following, which can be interpreted as  OGD with inexact gradients:
\begin{align}\label{equ: restarted ogd appendix}
x_t=\Pi_{\X}[x_{t-1}-\xi_t \nabla_{x} f(x_{t-1}; \theta_{t-1}-\delta_{t-1}(\min(W,T)))], \quad t\geq 2;
\end{align}
and $x_1=x_0$.
Here, we used the facts that $\theta_{t-1\mid t-W-1}=\theta_{t-1}-\delta_{t-1}(W)$ and $\delta_{t-1}(W)=\delta_{t-1}(T)$ for $W>T$.

\subsection{Static regret bound for  OGD with inexact gradients}
In this section, we consider the  OGD with inexact gradients \eqref{equ: restarted ogd appendix} with diminishing stepsize ${\xi_t= \frac{4}{\alpha t}}$ for $t\geq 1$. We will prove its static regret bound below.
\begin{theorem}[Static regret of OGD with inexact gradients]\label{thm: OGD static regret}
Consider the  OGD with inexact gradients \eqref{equ: restarted ogd appendix} with diminishing stepsize ${\xi_t= \frac{4}{\alpha t}}$ for $t\geq 1$ and any $x_0$. Then, for  $z^*=\argmin_{z \in \X}\sum_{t=1}^T f(z;\theta_t) $, we have the following static regret bound:
	\begin{align*}
	\sum_{t=1}^T[f(x_t;\theta_t)-f(z^*;\theta_t)]\leq \frac{2G^2}{\alpha}\log(T+1)+ \sum_{t=1}^T\frac{h^2}{\alpha}\| \delta_t(\min(W,T))\|^2
	\end{align*}
	Further, the total switching cost can be bounded by:
	$$ \sum_{t=1}^T d(x_t, x_{t-1}) \leq \frac{16G^2\beta}{\alpha^2}$$
\end{theorem}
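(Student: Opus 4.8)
The plan is to read \eqref{equ: restarted ogd appendix} as ordinary projected online gradient descent for the $\alpha$-strongly convex, $G$-Lipschitz stage losses $f(\cdot;\theta_t)$, carrying one extra additive term because each step uses the inexact gradient $\tilde g_t := \nabla_x f(x_t;\theta_t-\delta_t(\min(W,T)))$ rather than the exact one $g_t:=\nabla_x f(x_t;\theta_t)$. The starting point is the single-stage analogue of Lemma~\ref{lem: strong convex with error C}: by $\alpha$-strong convexity of $f(\cdot;\theta_t)$, Cauchy--Schwarz on the cross term $\langle \tilde g_t-g_t,\,x_t-z^*\rangle$, Young's inequality, and Assumption~\ref{ass: grad lip cont} (which gives $\|\tilde g_t-g_t\|\le h\|\delta_t(\min(W,T))\|$), I would obtain
\begin{align*}
\langle \tilde g_t,\,x_t-z^*\rangle \ \ge\ f(x_t;\theta_t)-f(z^*;\theta_t) + \frac{\alpha}{4}\|x_t-z^*\|^2 - \frac{h^2}{\alpha}\|\delta_t(\min(W,T))\|^2,
\end{align*}
where half of the strong-convexity modulus is deliberately spent to absorb the gradient-error cross term (this is exactly why a factor $\tfrac{\alpha}{4}$, not $\tfrac{\alpha}{2}$, survives).

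Next I would combine this with the usual descent step. Writing $D_t:=\|x_t-z^*\|^2$, using nonexpansiveness of $\Pi_\X$ and $\|\tilde g_t\|\le G$ (Assumption~\ref{ass: d <beta (x-y)^2/2}(i), since $\theta_t-\delta_t(\min(W,T))\in\Theta$), the update $x_{t+1}=\Pi_\X[x_t-\xi_{t+1}\tilde g_t]$ gives $D_{t+1}\le D_t-2\xi_{t+1}\langle \tilde g_t,x_t-z^*\rangle+\xi_{t+1}^2G^2$. Substituting the displayed inequality and rearranging yields, for each $1\le t\le T$,
\begin{align*}
f(x_t;\theta_t)-f(z^*;\theta_t)\ \le\ \frac{1}{2\xi_{t+1}}(D_t-D_{t+1})-\frac{\alpha}{4}D_t+\frac{h^2}{\alpha}\|\delta_t(\min(W,T))\|^2+\frac{\xi_{t+1}G^2}{2}.
\end{align*}
Here I bound the stage-$t$ regret using the step that produces $x_{t+1}$, so the iteration is run for one extra (unused) step producing $x_{T+1}$.

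The crux is summing the distance terms. With $\xi_{t+1}=\tfrac{4}{\alpha(t+1)}$ one has $\tfrac{1}{2\xi_{t+1}}=\tfrac{\alpha(t+1)}{8}$, and a short regrouping shows that in $\sum_{t=1}^T[\tfrac{1}{2\xi_{t+1}}(D_t-D_{t+1})-\tfrac{\alpha}{4}D_t]$ the coefficient of $D_1$ is exactly $0$, the coefficient of $D_t$ equals $-\tfrac{\alpha}{8}<0$ for $2\le t\le T$, and $D_{T+1}$ carries a negative coefficient; hence the whole distance contribution is $\le 0$. In particular the bound is independent of the arbitrary initialization $x_0$, which is precisely why retaining $\tfrac{\alpha}{4}$ above and choosing the constant $4$ in the stepsize are the matched choices. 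What remains is $\sum_{t=1}^T\tfrac{h^2}{\alpha}\|\delta_t(\min(W,T))\|^2+\tfrac{G^2}{2}\sum_{t=1}^T\xi_{t+1}$, and $\tfrac{G^2}{2}\sum_{t=1}^T\tfrac{4}{\alpha(t+1)}=\tfrac{2G^2}{\alpha}\sum_{j=2}^{T+1}\tfrac1j\le\tfrac{2G^2}{\alpha}\log(T+1)$, giving the first claim.

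For the switching-cost bound I would again use nonexpansiveness together with $x_{t-1}\in\X$: since $\Pi_\X[x_{t-1}]=x_{t-1}$, the step gives $\|x_t-x_{t-1}\|=\|\Pi_\X[x_{t-1}-\xi_t\tilde g_{t-1}]-\Pi_\X[x_{t-1}]\|\le\xi_t\|\tilde g_{t-1}\|\le\xi_tG$ for $t\ge2$, while $d(x_1,x_0)=d(x_0,x_0)=0$ by Assumption~\ref{ass: d <beta (x-y)^2/2}(ii). Then that same assumption yields $\sum_{t=1}^T d(x_t,x_{t-1})\le\tfrac{\beta}{2}\sum_{t\ge2}\xi_t^2G^2\le\tfrac{\beta G^2}{2}\cdot\tfrac{16}{\alpha^2}\sum_{t\ge1}t^{-2}\le\tfrac{16\beta G^2}{\alpha^2}$, using $\sum_{t\ge1}t^{-2}=\pi^2/6<2$. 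The main obstacle is the first part: one must align the stepsize index with the regret index so that the telescope of the $D_t$ terms stays nonpositive even though only $\tfrac{\alpha}{4}$ (rather than $\tfrac{\alpha}{2}$) of the strong convexity is kept; once that bookkeeping is set up correctly, everything else is routine.
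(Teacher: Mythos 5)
Your proof is correct and takes essentially the same route as the paper's own: the identical one-step descent inequality under nonexpansive projection, strong convexity with the cross term $\langle \tilde g_t - g_t, x_t - z^*\rangle$ absorbed via Cauchy--Schwarz, Young's inequality and Assumption~\ref{ass: grad lip cont} so that exactly $\tfrac{\alpha}{4}$ survives, the same stepsize bookkeeping (coefficient of $D_1$ vanishing, coefficient $-\tfrac{\alpha}{8}$ on $D_t$ for $2\le t\le T$, negative coefficient on $D_{T+1}$), the same harmonic-sum bound giving $\tfrac{2G^2}{\alpha}\log(T+1)$, and the same $\|x_t - x_{t-1}\|\le \xi_t G$ argument with $\sum_{t\ge 1} t^{-2} < 2$ for the switching costs. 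The only (cosmetic) differences are that you package the strong-convexity-with-error step as a standalone single-stage inequality, mirroring Lemma~\ref{lem: strong convex with error C}, and you treat the $t=1$ switching term via $x_1 = x_0$ slightly more explicitly than the paper does.
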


\begin{proof}
	Firstly, we prove the static regret bound. Define $q_t=\|x_t-z^*\|^2$. Then, for $t\geq 1$, we have the following.
	\begin{align*}
	q_{t+1}=&\|x_{t+1}-z^*\|^2\leq \| x_t- \xi_{t+1} \nabla_x f(x_t;\theta_t-\delta_t(\min(W,T)))-z^*\|^2\\
	=& q_t + \xi_{t+1}^2\| \nabla_x f(x_t;\theta_t-\delta_t(\min(W,T)))\|^2-2\xi_{t+1} \langle x_t-z^*, \nabla_x f(x_t;\theta_t-\delta_t(\min(W,T)))\rangle\\
	\leq &q_t + \xi_{t+1}^2 G^2\! -\!2\xi_{t+1} \langle x_t-z^*,\! \nabla_x f(x_t;\theta_t)\rangle\\
	&-2\xi_{t+1} \langle x_t-z^*, \nabla_x f(x_t;\theta_t-\delta_t(\min(W,T)))
	-\nabla_x f(x_t;\theta_t)\rangle
	\end{align*}
	where the last inequality uses Assumption \ref{ass: d <beta (x-y)^2/2}(i).
	By rearranging terms, we obtain
	\begin{align}\label{equ: <xt-z*,d f>}
 \langle x_t-z^*, \nabla_x f(x_t;\theta_t)\rangle \leq \frac{q_t-q_{t+1}}{2\xi_{t+1}}+ \frac{\xi_{t+1}}{2}G^2- \langle x_t-z^*, \nabla_x f(x_t;\theta_t-\delta_t(\min(W,T)))-\nabla_x f(x_t;\theta_t)\rangle
	\end{align}
	
	By the strong convexity of $f(x;\theta_t)$, we have $f(z^*;\theta_t)\geq f(x_t;\theta_t)+\langle z^*-x_t, \nabla_x f(x_t;\theta_t)\rangle + \frac{\alpha}{2}\|z^*-x_t\|^2$. By rearranging terms and by \eqref{equ: <xt-z*,d f>}, we obtain
	\begin{align*}
&f(x_t;\theta_t)-	f(z^*;\theta_t)\leq \langle x_t-z^*, \nabla_x f(x_t;\theta_t)\rangle - \frac{\alpha}{2}\|z^*-x_t\|^2\\
\leq & \frac{q_t-q_{t+1}}{2\xi_{t+1}}+ \frac{\xi_{t+1}}{2}G^2- \langle x_t-z^*, \nabla_x f(x_t;\theta_t-\delta_t(\min(W,T)))-\nabla_x f(x_t;\theta_t)\rangle- \frac{\alpha}{2}\|z^*-x_t\|^2\\
 \leq& \frac{q_t-q_{t+1}}{2\xi_{t+1}}+ \frac{\xi_{t+1}}{2}G^2 + \| x_t-z^*\|\|\nabla_x f(x_t;\theta_t-\delta_t(\min(W,T)))-\nabla_x f(x_t;\theta_t)\| - \frac{\alpha}{2}\|z^*-x_t\|^2\\
 \leq & \frac{q_t-q_{t+1}}{2\xi_{t+1}}+ \frac{\xi_{t+1}}{2}G^2 +\frac{1}{\alpha} \|\nabla_x f(x_t;\theta_t-\delta_t(\min(W,T)))-\nabla_x f(x_t;\theta_t)\|^2-\frac{\alpha}{4}\|z^*-x_t\|^2\\
\leq &\frac{q_t-q_{t+1}}{2\xi_{t+1}}+ \frac{\xi_{t+1}}{2}G^2 + \frac{h^2}{\alpha}\|\delta_t(\min(W,T))\|^2 -\frac{\alpha}{4}q_t
	\end{align*}
	where we used $ab \leq \frac{\epsilon}{2}a^2+ \frac{1}{2\epsilon} b^2$ for any $a, b\in \R$ and any $\epsilon>0$ in the second last inequality and Assumption \ref{ass: grad lip cont} in the last inequality. 
	By summing over $t=1,\dots, T$, we obtain
	\begin{align*}
	\sum_{t=1}^T[f(x_t;\theta_t)-	f(z^*;\theta_t)]\leq \ &\sum_{t=2}^T\left(\frac{1}{2\xi_{t+1}}-\frac{1}{2\xi_t}-\frac{\alpha}{4}\right)q_t + \left(\frac{1}{2\xi_2}-\frac{\alpha}{4}\right)q_1-\frac{1}{\xi_{T+1}}q_{T+1} \\
	&+ \sum_{t=1}^T \frac{\xi_{t+1}}{2}G^2+ \sum_{t=1}^T\frac{h^2}{\alpha}\| \delta_t(\min(W,T))\|^2\\
	 \leq\ &\log(T+1) \frac{2G^2}{\alpha}+ \sum_{t=1}^T\frac{h^2}{\alpha}\| \delta_t(\min(W,T))\|^2
		\end{align*}
		which completes the proof of the static regret bound.
		
		Next, we bound the switching costs. By Assumption \ref{ass: d <beta (x-y)^2/2}(ii), we have
		\begin{align*}
		\sum_{t=1}^T d(x_t, x_{t-1})&\leq \sum_{t=1}^T \frac{\beta}{2}\|x_t-x_{t-1}\|^2 \\
		&\leq \sum_{t=1}^T \frac{\beta}{2} \| \xi_t \nabla_{x} f(x_{t-1};\theta_{t-1}-\delta_{t-1}(\min(W,T)))\|^2\\
		&\leq \frac{\beta G^2}{2}\sum_{t=1}^T \xi_t^2\leq \frac{16\beta G^2}{\alpha^2}
		\end{align*}

\end{proof}

\subsection{Proof of Theorem \ref{thm: ogd dyn regret}: dynamic regret bound for restarted OGD with inexact gradients}

	We denote the set of stages in epoch $k$ as $\mathcal T_k=\{k\Delta+1,\dots, \min(k\Delta+\Delta,T)\}$ for $k=0, \dots, \ceil{T/\Delta}-1$. We introduce $z_k^*= \argmin_{z\in \X}\sum_{t\in \mathcal T_k}[f(z;\theta_t)]$ for all $k$; $y_t^*=\argmin_{x_t\in \X}f(x_t;\theta_t)$ for all $t$; and $\bm x^*=\argmin_{\bm x\in \X^T} \sum_{t=1}^T[f(x_t;\theta_t)+ d(x_t, x_{t-1})]$. The dynamic regret of the restarted OGD with inexact gradients can be bounded as follows.
	\begin{align*}
	\text{Reg}(OGD)& = \sum_{t=1}^T [f(x_t;\theta_t)+ d(x_t, x_{t-1})]- \sum_{t=1}^T [f(x^*_t;\theta_t)+ d(x^*_t, x^*_{t-1})]\\
	& \leq  \sum_{t=1}^T [f(x_t;\theta_t)+ d(x_t, x_{t-1})]- \sum_{t=1}^T [f(x^*_t;\theta_t)]\\
	& \leq \sum_{t=1}^T [f(x_t;\theta_t)+ d(x_t, x_{t-1})]- \sum_{t=1}^T [f(y^*_t;\theta_t)]\\
	& = \sum_{k=0}^{\ceil{T/\Delta}-1}\sum_{t\in \mathcal T_k} [f(x_t;\theta_t)+ d(x_t, x_{t-1})- f(y^*_t;\theta_t)]\\
	& =   \sum_{k=0}^{\ceil{T/\Delta}-1}\sum_{t\in \mathcal T_k} [f(x_t;\theta_t)- f(z_k^*;\theta_t)]+\sum_{k=0}^{\ceil{T/\Delta}-1}\sum_{t\in \mathcal T_k} d(x_t, x_{t-1})\\
	& \quad \quad  + \sum_{k=0}^{\ceil{T/\Delta}-1}\sum_{t\in \mathcal T_k} [  f(z_k^*;\theta_t)-f(y^*_t;\theta_t)]\\
	& \leq  \ceil{T/\Delta}\log(\Delta+1)\frac{2G^2}{\alpha}+\frac{h^2}{\alpha} \|\bm\delta(\min(W,T))\|^2+ \ceil{T/\Delta} \frac{16\beta G^2}{\alpha^2}\\
	& \quad \quad +\sum_{k=0}^{\ceil{T/\Delta}-1}\sum_{t\in \mathcal T_k} [  f(z_k^*;\theta_t)-f(y^*_t;\theta_t)]
	\end{align*}
	where the first inequality uses Assumption \ref{ass: d <beta (x-y)^2/2}, the second inequality uses the optimality of $y_t^*$, the last inequality uses Theorem \ref{thm: OGD static regret} and the fact that the OGD considered here restarts at the beginning of each epoch $k$ and repeats the stepsizes defined in Theorem \ref{thm: OGD static regret}, thus satisfying the static regret bound and the switching cost bound in Theorem \ref{thm: OGD static regret} within each epoch. 
	
	Now, it suffices to bound $\sum_{k=0}^{\ceil{T/\Delta}-1}\sum_{t\in \mathcal T_k} [  f(z_k^*;\theta_t)-f(y^*_t;\theta_t)]$. By the optimality of $z_k^*$, we have:
	\begin{equation}\label{equ: star}
	    \sum_{k=0}^{\ceil{T/\Delta}-1}\sum_{t\in \mathcal T_k} [  f(z_k^*;\theta_t)-f(y^*_t;\theta_t)]\leq  \sum_{k=0}^{\ceil{T/\Delta}-1}\sum_{t\in \mathcal T_k} [  f(y^*_{k\Delta+1};\theta_t)-f(y^*_t;\theta_t)].
	\end{equation}
	We define $V^k = \sum_{t\in \mathcal T_k} \sup_{x\in \X} | f(x;\theta_t)-f(x;\theta_{t-1})|$. Then, for any $t\in \mathcal T_k$,  we obtain
	\begin{align*}
	f(y^*_{k\Delta+1};\theta_t)-f(y^*_t;\theta_t) = \ &	f(y^*_{k\Delta+1};\theta_t)-f(y^*_{k\Delta+1};\theta_{k\Delta+1}) + f(y^*_{k\Delta+1};\theta_{k\Delta+1})-f(y_t^*;\theta_{k\Delta+1})\\
	&+ f(y_t^*;\theta_{k\Delta+1})-f(y^*_t;\theta_t)\\
	\leq \ & V^k+0+V^k=2V^k
	\end{align*}
	By summing over $t\in \mathcal T_k$ and $k=0, \dots, \ceil{T/\Delta}-1$ and by the inequality \eqref{equ: star}, we obtain
	$$ \sum_{k=0}^{\ceil{T/\Delta}-1}\sum_{t\in \mathcal T_k} [  f(z_k^*;\theta_t)-f(y^*_t;\theta_t)]\leq \sum_{k=0}^{\ceil{T/\Delta}-1}\sum_{t\in \mathcal T_k} 2V^k= \sum_{k=0}^{\ceil{T/\Delta}-1} 2\Delta V^k= 2\Delta V_T$$
	Combining the bounds above yields the desired bound on the dynamic regret of OGD below by letting $\Delta=\ceil{\sqrt{2T/V_T}}$:
	\begin{align*}
	 \text{Reg}(OGD) &\leq \ceil{T/\Delta}\log(\Delta+1)\frac{2G^2}{\alpha}+\frac{h^2}{\alpha} \|\bm\delta(\min(W,T))\|^2+ \ceil{T/\Delta} \frac{16\beta G^2}{\alpha^2} +2\Delta V_T \\
	 & \leq \left(\sqrt{\frac{V_T T }{2} }+1\right)\log(2+\sqrt{2T/V_T})\frac{2G^2}{\alpha}+ \frac{h^2}{\alpha} \|\bm\delta(\min(W,T))\|^2+\left(\sqrt{\frac{V_T T}{2} }+1\right) \frac{16\beta G^2}{\alpha^2}\\
	 &\quad \ + 2(\sqrt{2V_T T}+V_T)\\
	 & \leq (\sqrt{V_T T /2 }+1)\log(2+\sqrt{2T/V_T}) \left( \frac{2G^2}{\alpha}+ \frac{16\beta G^2}{\alpha^2}+ 2(2+\sqrt 2)\right)+ \frac{h^2}{\alpha} \|\bm\delta(\min(W,T))\|^2\\
	 & \leq \sqrt{2V_T T} \log(2+\sqrt{2T/V_T})\left( \frac{2G^2}{\alpha}+ \frac{16\beta G^2}{\alpha^2}+ 2(2+\sqrt 2)\right)+ \frac{h^2}{\alpha} \|\bm\delta(\min(W,T))\|^2\\
	  & \leq \sqrt{V_T T} \log(1+\sqrt{T/V_T})\left( \frac{4\sqrt 2 G^2}{\alpha}+ \frac{32\sqrt 2\beta G^2}{\alpha^2}+ 8(1+\sqrt 2)\right)+ \frac{h^2}{\alpha} \|\bm\delta(\min(W,T))\|^2\\
	  &\leq \sqrt{V_T T} \log(1+\sqrt{T/V_T})\left( \frac{4\sqrt 2G^2}{\alpha}+ \frac{32\sqrt 2\beta G^2}{\alpha^2}+ 20\right)+ \frac{h^2}{\alpha} \|\bm\delta(\min(W,T))\|^2
\end{align*}
where we used the facts that $\ceil{x}\leq x+1$, $1\leq V_T\leq T$, $T>2$, $\log(2+\sqrt{2T/V_T})\leq 2\log(1+\sqrt{T/V_T})$, and $8(1+\sqrt 2)<20$.
	
	
%
%
%
%

\subsection{Proof of Corollary \ref{cor: RHIG-OGD determine}}
The proof is straightforward by substituting restarted OGD's regret bound in Theorem \ref{thm: ogd dyn regret} into the general regret bound in Theorem \ref{thm: general}, that is,
\begin{align*}
\textup{Reg}(RHIG)\leq \,&{\rho^W \frac{2L}{\alpha} C_1\sqrt{V_T T}\log(1+\sqrt{T/V_T})}\\
&+\! \frac{2L}{\alpha}\frac{h^2}{\alpha}\rho^{W}\!\|\bm\delta(\min(W,T))\|^2\!+\! \! \sum_{k=1}^{\min(W,T)}\! \!\zeta\rho^{k-1}\|\bm \delta(k)\|^2\!+\!\one_{(W>T)} \frac{\rho^T\!-\!\rho^W}{1-\rho}\zeta \|\bm \delta(T)\|^2.
\end{align*}

\subsection{Proofs of the monotonicity claims in the discussion of Corollary \ref{cor: RHIG-OGD determine}.}

In Section \ref{sec: regret general}, when discussing \nbf{Choices of $W$}, we claim that ``Part I increases with $V_T$ and Part II increases with the prediction errors. Further, as $W$ increases, Part I decreases  but  Part II increases.'' For completeness, we prove this claim below.

\paragraph{Properties of Part I $\rho^W \frac{2L}{\alpha} C_1\sqrt{V_T T}\log(1+\sqrt{T/V_T})$:}  Since $0<\rho <1$, it is straightforward that Part I monotonically decreases with $W$. Next, consider function $p(x)= x\log(1+ \frac{b}{x})$ for $x, b>0$. Since $p'(x)=\frac{x}{x+b}- 1 - \log(\frac{x}{x+b})\geq 0$ by $y-1 \geq \log(y)$ for any $y>0$, function $p(x)$ monotonically increases with $x$. Therefore, for any fixed $W$, Part I monotonically increases with  $\sqrt{V_T}$ and thus $V_T$.

\paragraph{Properties of Part II $\frac{2L h^2}{\alpha^2}\rho^{W}\!\|\bm\delta(\min(W,T))\|^2\!+\!\!  \sum_{k=1}^{\min(W,T)}\! \!\zeta\rho^{k-1}\|\bm \delta(k)\|^2\!+\!\one_{(W>T)} \frac{\rho^T\!-\!\rho^W}{1-\rho}\zeta \|\bm \delta(T)\|^2$:}  It is straightforward that Part II   monotonically increases with $\{\|\bm \delta(k)\|^2\}_{k=1}^W$. Next, we discuss the monotonicty with respect to $W$. We first consider $W\leq T$. In this case, Part II is equal to $\text{Part II}(W)\coloneqq \frac{2Lh^2}{\alpha^2}\rho^{W}\!\|\bm\delta(W)\|^2\!+\!\sum_{k=1}^{W}\!\zeta\rho^{k-1}\|\bm \delta(k)\|^2$. Notice that
\begin{align*}
\text{Part II}(W)-\text{Part II}(W-1)&= \frac{2Lh^2}{\alpha^2} \rho^W \|\bm \delta(W)\|^2 + \zeta \rho^{W-1} \|\bm \delta(W)\|^2-\frac{2Lh^2}{\alpha^2} \rho^{W-1} \|\bm \delta(W-1)\|^2\\
& \geq \left( \frac{2Lh^2}{\alpha^2} \rho+\zeta -\frac{2Lh^2}{\alpha^2} \right)\rho^{W-1} \|\bm \delta(W-1)\|^2\\
& = \left(\frac{h^2}{2\alpha}+\frac{h^2}{2L}\right)\rho^{W-1} \|\bm \delta(W-1)\|^2 > 0
\end{align*}
where we used $\|\bm \delta(W)\|^2 \geq \|\bm \delta(W-1)\|^2$, $\rho=1-\frac{\alpha}{4L}$, $\zeta=\frac{h^2}{\alpha}+\frac{h^2}{2L}$. Therefore, Part II is monotonically increasing with $W$ for $W\leq T$. Besides, we consider $W>T$. In this case, Part II is equal to  $\text{Part II}(W)\coloneqq \frac{2Lh^2}{\alpha^2}\rho^{W}\!\|\bm\delta(T))\|^2\!+\!\sum_{k=1}^{T}\!\zeta\rho^{k-1}\|\bm \delta(k)\|^2+ \frac{\rho^T-\rho^W}{1-\rho}\zeta \|\bm \delta(T)\|^2$. Notice that, when $W>T$, we have
\begin{align*}
\text{Part II}(W)-\text{Part II}(W-1)&= \frac{2Lh^2}{\alpha^2} (\rho^W-\rho^{W-1}) \|\bm \delta(T)\|^2 + \frac{\rho^{W-1}-\rho^W}{1-\rho}\zeta \|\bm \delta(T)\|^2\\
& = \left( \frac{2Lh^2}{\alpha^2} (\rho-1)+\zeta  \right)\rho^{W-1} \|\bm \delta(T)\|^2\\
& = \left(\frac{h^2}{2\alpha}+\frac{h^2}{2L}\right)\rho^{W-1} \|\bm \delta(W-1)\|^2 > 0
\end{align*}
In conclusion, Part II increases with $W$ for $W\geq 1$.

\section{Analysis on the special case in Section \ref{sec: regret general}}\label{append: special case}
\subsection{Proof of Corollary \ref{cor: special case}}

For notational simplicity, let $R(W)$ denote the regret bound in Corollary \ref{cor: RHIG-OGD determine} given lookahead horizon $W$, i.e.
\begin{align*}
R(W)= \,&{\rho^W \frac{2L}{\alpha} C_1\sqrt{V_T T}\log(1+\sqrt{T/V_T})}\\
&+\! \frac{2L}{\alpha}\frac{h^2}{\alpha}\rho^{W}\!\|\bm\delta(\min(W,T))\|^2\!+\! \! \sum_{k=1}^{\min(W,T)}\! \!\zeta\rho^{k-1}\|\bm \delta(k)\|^2\!+\!\one_{(W>T)} \frac{\rho^T\!-\!\rho^W}{1-\rho}\zeta \|\bm \delta(T)\|^2.
\end{align*}
We will show that $R(W)\leq R(W-1)$ for $W\geq 1$. Firstly, we consider $W\leq T$. In this case, we have $R(W)= \rho^W \frac{2L}{\alpha} C_1\sqrt{V_T T}\log(1+\sqrt{T/V_T})+\frac{2L}{\alpha}\frac{h^2}{\alpha}\rho^{W} \|\bm \delta(W)\|^2 + \sum_{k=1}^{W}\zeta\rho^{k-1}\|\bm \delta(k)\|^2$. Notice that
\begin{align*}
R(W)-R(W-1)=\ & ( \rho^W -\rho^{W-1})\frac{2L}{\alpha} C_1\sqrt{V_T T}\log(1+\sqrt{T/V_T})+\frac{2L}{\alpha}\frac{h^2}{\alpha}\rho^{W} \|\bm \delta(W)\|^2 \\
&+ \zeta\rho^{W-1}\|\bm \delta(W)\|^2-\frac{2L}{\alpha}\frac{h^2}{\alpha}\rho^{W-1} \|\bm \delta(W-1)\|^2\\
 \leq& \rho^{W-1}\left( (\frac{2L}{\alpha}\frac{h^2}{\alpha}\rho+\zeta )\|\bm \delta(W)\|^2-(1-\rho) \frac{2L}{\alpha} C_1\sqrt{V_T T}\log(1+\sqrt{T/V_T})\right)\\
 \leq& 0
\end{align*}
when the following condition holds for any $W\leq T$.
\begin{equation}\label{equ: condition appendix}
(\frac{2L}{\alpha}\frac{h^2}{\alpha}\rho+\zeta )\|\bm \delta(W)\|^2\leq (1-\rho) \frac{2L}{\alpha} C_1\sqrt{V_T T}\log(1+\sqrt{T/V_T})
\end{equation} 

Next, we consider $W>T$. In this case, we have $R(W)= \rho^W \frac{2L}{\alpha} C_1\sqrt{V_T T}\log(1+\sqrt{T/V_T})+ \frac{2L}{\alpha}\frac{h^2}{\alpha}\rho^{W}\|\bm \delta(T)\|^2 + \sum_{k=1}^{T}\! \!\zeta\rho^{k-1}\|\bm \delta(k)\|^2+\frac{\rho^T\!-\!\rho^W}{1-\rho}\zeta \|\bm \delta(T)\|^2.$ Therefore, 
\begin{align*}
R(W)-R(W-1) =& (\rho^W-\rho^{W-1}) \frac{2L}{\alpha} C_1\sqrt{V_T T}\log(1+\sqrt{T/V_T})+ \frac{2L}{\alpha}\frac{h^2}{\alpha}(\rho^{W}-\rho^{W-1})\|\bm \delta(T)\|^2 \\
& + \frac{\rho^{W-1}-\!\rho^W}{1-\rho}\zeta \|\bm \delta(T)\|^2\\
\leq  & \rho^{W-1} \left( \zeta \|\bm \delta(T)\|^2- (1-\rho ) \frac{2L}{\alpha} C_1\sqrt{V_T T}\log(1+\sqrt{T/V_T}) \right)\\
\leq & 0
\end{align*}
given the condition \eqref{equ: condition appendix}.

In conclusion, we have $R(W)\leq R(W-1)$ for $W\geq 1$ and the $R(W)$ is minimized  by letting $W\to +\infty$. Further, when $W\to +\infty$, we have the following bound. 
\begin{align*}
\lim_{W\to+\infty} R(W)&=\sum_{k=1}^{T}\zeta\rho^{k-1}\|\bm \delta(k)\|^2+\frac{\rho^T}{1-\rho}\zeta \|\bm \delta(T)\|^2  \leq \frac{\zeta}{1-\rho }\sum_{k=1}^{T}\rho^{k-1}\|\bm \delta(k)\|^2 
\end{align*}

\subsection{Proof of Theorem \ref{thm: lower bdd}}

Without loss of generality, we consider $n=1$. It is straightforward to generalize the proof to $n>1$ cases. The proof is based on constructing a special cost function where the lower bound holds.

Consider cost function $f(x_t;\theta_t)=\frac{\alpha}{2}(x_t^2-2\theta_t x_t )$ and $d(x_t,x_{t-1})=\frac{\beta}{2}\|x_t-x_{t-1}\|^2$ on $\X= [-1/2, 1/2]$, where  $l_f=\alpha$, $l_d=2\beta$, $L=\alpha+4\beta$ and $h=\alpha$. Let $\alpha>1$ and $\frac{\beta}{\alpha}<4+3\sqrt 2$ so that $\rho_0<1/2$. Let   $\theta_t \in \X$ for all $t$,  then we have $G=\sup_{x\in \X} \| \alpha(x-\theta_t)\|=\alpha$. Let $x_0=0$.

Consider a random $\theta_t$:
$$\theta_t=\mu_t +e_1^t+ \dots + e_t^t, \quad \forall \ 1\leq t \leq T,$$
where $e_{\tau}^t$ are independent variables across $1\leq t \leq T$ and $1\leq \tau \leq t$. Let the support of $e_{\tau}^t$ be $[-\frac{1}{8t}, \frac{1}{8t} ]$ and let $\mu_t=(-1)^t\frac{1}{4}$, so $\theta_t\in \X$ is $\X$ and $\frac{1}{8} \leq \theta_t \leq \frac{3}{8}$ if $t$ is even and $\frac{-3}{8} \leq \theta_t \leq \frac{-1}{8}$ if $t$ is odd. Consider predictions at time $\tau$ as $\theta_{t\mid \tau}=\mu_t+ e_1^t+ \dots + e_{\tau}^t$ for any $0\leq \tau \leq t$. Therefore, $\delta_t(t-\tau)= e_{\tau+1}^t + \dots + e_{t}^t$ and $\|\delta_t(t-\tau)\|\leq \frac{t-\tau}{8t}\leq 1/8$.
According to our construction, we have that $V_T= \sum_{t=1}^T \sup_{x\in \X} \alpha\|(\theta_t-\theta_{t-1})x\|\geq  \alpha T/8$; and $\|\bm \delta(k)\|^2 \leq \frac{T}{64}$ for any $k\geq 1$. Then, it is straightforward to verify  that the constructed cost functions and predictions satisfy $ \sqrt{V_T T}\log(1+\sqrt{T/V_T}) \geq \frac{{2L} h^2\rho+{\alpha^2} \zeta}{2L C_1 (1-\rho){\alpha}}\|\bm \delta(k)\|^2 $ for any $k\geq 1$.


 Notice that knowing $\theta_{t\mid 0}, \dots, \theta_{t\mid \tau}$ is equivalent with knowing $\mu_t, e_1^t, \dots, e_{\tau}^t$. Therefore, let filtration $\F_t$ denote all the information at $t$ provide by the predictions and the history, then $\F_t$ is generated by $\theta_1, \dots, \theta_{t-1}$ and $\mu_s, e_1^{s}, \dots, e_{t-1}^s$ for $s\geq t$. Notice that $\E[\theta_{\tau}\mid \F_t]=\theta_{\tau \mid t-1}$ for $\tau\geq t$. Besides, for any online algorithm $\A$, we have that $x_t^{\A}$ is measurable in $\F_t$.

Since $\theta_t \in \X$ for all $t$, it can be shown that the optimal solution $\bm x^*=\argmin_{\bm x \in \X^T} C(\bm x;\bm \theta)$  is an interior point of $\X^T$ and thus satisfies the first-order optimality condition $\bm x^*=A \bm \theta$, where $A$ is the inverse of the Hessian matrix of $C(\bm x;\bm \theta)$. Equivalently, we have $x_t^*=\sum_{\tau=1}^T a_{t,\tau} \theta_{\tau}$. Further, Lemma 5 in \cite{li2018online} shows that $a_{t,\tau}^2 \geq c_2\rho_0^{\tau-t}$ for $\tau\geq t$, where $c_2=(\frac{\alpha}{\alpha+\beta})^2(1-\sqrt{\rho_0})^2$. 

Since $x_t^{\A}$ is measurable in $\F_t$, by the projection theory, we have
\begin{align*}
\E[\|x_t^{\A}-x_t^*\|^2] &\geq \E[\|\E[x_t^*\mid \F_t]-x_t^*\|^2].
\end{align*}
Notice that \begin{align*}
\E[x_t^*\mid \F_t]&=a_{t,1}\theta_1+ \dots + a_{t,t-1}\theta_{t-1} + a_{t,t}\E[\theta_t\mid \F_t]+ \dots + a_{t,T}\E[\theta_T\mid \F_t]\\
& = a_{t,1}\theta_1+ \dots + a_{t,t-1}\theta_{t-1} + a_{t,t} \theta_{t\mid t-1} + a_{t,T} \theta_{T\mid t-1}
\end{align*}
Therefore, 
\begin{align*}
\E[\|\E[x_t^*\mid \F_t]-x_t^*\|^2]&= \E[\| a_{t,t}\delta_t(1)+\dots + a_{t,T}\delta_T(T-t+1)\|^2]\\
& = a_{t,t}^2 \E[\|\delta_t(1)\|^2]+ \dots + a_{t,T}^2 \E[\|\delta_{T}(T-t+1)\|^2]\\
& \geq c_2( \E[\|\delta_t(1)\|^2]+ \dots + \rho_0^{T-t}\E[\|\delta_{T}(T-t+1)\|^2])
\end{align*}
where we used the independence among the prediction errors and $a_{t,\tau}^2 \geq c_2\rho_0^{\tau-t}$ for $\tau \geq t$.

Summing over $t$ leads to the following.
\begin{align*}
\sum_{t=1}^T \E[\|x_t^{\A}-x_t^*\|^2] &\geq \sum_{t=1}^T  \E[\|\E[x_t^*\mid \F_t]-x_t^*\|^2]\\
& \geq \sum_{t=1}^T (a_{t,t}^2 \E[\|\delta_t(1)\|^2]+ \dots + a_{t,T}^2 \E[\|\delta_{T}(T-t+1)\|^2])\\
& \geq \sum_{t=1}^T c_2\sum_{k=1}^{T-t+1}  \rho_0^{k-1}\E[\|\delta_{k+t-1}(k)\|^2]\\
& =c_2 \sum_{k=1}^T  \rho_0^{k-1} \sum_{t=1}^{T+1-k}\E[\|\delta_{k+t-1}(k)\|^2]\\
& = c_2 \sum_{k=1}^T  \rho_0^{k-1} \sum_{t=1}^{T}\E[\|\delta_{t}(k)\|^2]- c_2 \sum_{k=1}^T  \rho_0^{k-1} \sum_{t=1}^{k-1}\E[\|\delta_{t}(k)\|^2]\\
& =c_2 \sum_{k=1}^T  \rho_0^{k-1} \sum_{t=1}^{T}\E[\|\delta_{t}(k)\|^2]- c_2 \sum_{k=1}^T  \rho_0^{k-1} \sum_{t=1}^{k-1}\E[\|\delta_{t}(t)\|^2]\\
& = c_2 \sum_{k=1}^T  \rho_0^{k-1} \sum_{t=1}^{T}\E[\|\delta_{t}(k)\|^2]-  c_2 \sum_{t=1}^{T-1} \E[\|\delta_{t}(t)\|^2]\sum_{k=t+1}^T \rho_0^{k-1}\\
& \geq c_2 \sum_{k=1}^T  \rho_0^{k-1} \sum_{t=1}^{T}\E[\|\delta_{t}(k)\|^2]-  c_2 \sum_{k=1}^{T-1} \E[\|\delta_{k}(k)\|^2] \frac{\rho_0^k}{1-\rho_0}\\
& \geq c_2 \sum_{k=1}^T  \rho_0^{k-1} \sum_{t=1}^{T}\E[\|\delta_{t}(k)\|^2]\frac{1-2\rho_0}{1-\rho_0}
\end{align*}
where we used  $\delta_t(k)=\delta_t(t)$ for $k\geq t$ in the third equality and change the counting index from $t$ to $k$ in the second last inequality. 

By strong convexity, we have $\E[\text{Reg}(\A)]\geq \frac{\alpha}{2}\E\|\bm x^{\A}-\bm x^*\|^2 \geq c_2\frac{\alpha}{2}\frac{1-2\rho_0}{1-\rho_0} \sum_{k=1}^T  \rho_0^{k-1} \E[\| \bm \delta(k)\|^2] $. Therefore, there must exist a scenario such that  $\text{Reg}(\A) \geq c_2\frac{\alpha}{2}\frac{1-2\rho_0}{1-\rho_0} \sum_{k=1}^T  \rho_0^{k-1} \| \bm \delta(k)\|^2 $. Since $h=\alpha$ in our construction, we  complete the proof.

%
%
%
%
%
%
%
%
%
%

\section{Stochastic Regret Analysis}\label{append: stochastic}

\subsection{Proof of Theorem \ref{thm: expected regret bounds}}

By taking expectation on both sides of the regret bound in Theorem \ref{thm: general}, we have
\begin{equation}\label{equ: expected general proof}
\begin{aligned}
\E[\text{Reg}(RHIG)]\leq \frac{2L}{\alpha}\rho^W \E[\text{Reg}(\phi)]+ \zeta \sum_{k=1}^{\min(W,T)}\rho^{k-1}\E[\|\bm \delta(k)\|^2]+ \one_{(W>T)} \frac{\rho^T-\rho^W}{1-\rho}\zeta\E[ \|\bm \delta(T)\|^2],
\end{aligned}
\end{equation}
Therefore, it suffices to bound $\E[\|\bm \delta(k)\|^2]$ for $1\leq k \leq T$. By $\bm \delta(k)=(\delta_1(k)^\top, \dots, \delta_T(k)^\top)^\top$, $\delta_t(k)= \theta_t-\theta_{t\mid t-k}=P(0)e_t+ \dots + P(k-1)e_{t-k+1}$ for $k\leq t$ and $\delta_t(k)=\delta_t(t)$ for $k>t$, we have
\begin{align}\label{equ: delta(k)=M e}
\bm \delta(k)= \bm M_k \bm e,\quad 1\leq k \leq T
\end{align}
where we define $\bm e=(e_1^\top, \dots, e_T^\top)^\top\in \R^{qT}$ and
{\small$$ \bm M_k=\begin{bmatrix}
	P(0) & 0 & \dots & \dots & \dots  & 0\\
	P(1) & P(0) &  \dots & \dots &\dots   & 0\\
	\vdots & \ddots & \ddots &\ddots  &\ddots   & 0\\
	P(k-1) & \dots & P(1) & P(0) &\dots  & 0\\
	\vdots & \ddots &  & \ddots & \ddots   & 0\\
	0 & \dots   & P(k-1) & \dots & P(1) & P(0)
	\end{bmatrix}.$$}

Let $\bm{R_e}$ denote the covariance matrix of $\bm e$, i.e. 
$$ \bm{R_e}=\begin{bmatrix}
R_e & 0 & \dots & 0\\
0 & R_e & \dots & 0\\
\vdots & \ddots & \ddots & \vdots\\
0 & \dots & \dots & R_e
\end{bmatrix}$$

Then, for $k\leq T$, we have
\begin{align}
\E[\|\bm \delta(k)\|^2]& = \E[\bm e^\top \bm M_k^\top \bm M_k \bm e] = \E[\text{tr}(\bm e \bm e^\top  \bm M_k^\top \bm M_k)]\notag\\
& = \text{tr}\left(\bm{R_e} \bm M_k^\top \bm M_k \right)\notag\\
& \leq  \|R_e\|_2 \|\bm M_k\|_F^2\notag= \|R_e\|_2 \sum_{t=0}^{k-1}(T-t)\|P(t)\|_F^2 \notag
\end{align}
where the first inequality is by $\text{tr}(AB)\leq \|A\|_2 \text{tr}(B)$ for any symmetrix matrices $A, B$, and  $\| \text{diag}(R_e, \dots, R_e)\|_2=\|R_e\|_2$ and $\text{tr}(A^\top A)=\|A\|_F^2$ for any matrix $A$. In addition, for $k\geq T$, we have 
$ \E[\|\bm \delta(k)\|^2]\leq \|R_e\|_2 \sum_{t=0}^{T-1}(T-t)\|P(t)\|_F^2$. In conclusion, for any $k\geq 1$, we have
\begin{align}\label{equ: bound on E(delta(k))}
\E[\|\bm \delta(k)\|^2]\leq \|R_e\|_2 \sum_{t=0}^{\min(k,T)-1}(T-t)\|P(t)\|_F^2
\end{align}

When $W\leq T$, substituting the bounds on $\E[\|\bm \delta(k)\|^2]$  into \eqref{equ: expected general proof}  yields the  bound on the expected regret below. 
\begin{align*}
\E[\text{Reg}(RHIG)]&\leq  \frac{2L}{\alpha}\rho^W \E[\text{Reg}(\phi)]+\zeta\sum_{k=1}^{W}\rho^{k-1}\|R_e\|_2 \sum_{t=0}^{k-1}(T-t)\|P(t)\|_F^2\\
& =  \frac{2L}{\alpha}\rho^W \E[\text{Reg}(\phi)]+\zeta\sum_{t=0}^{W-1} \|R_e\|_2 (T-t)\|P(t)\|_F^2 \sum_{k=t+1}^W\rho^{k-1}\\
&= \frac{2L}{\alpha}\rho^W \E[\text{Reg}(\phi)]+\zeta\sum_{t=0}^{W-1} \|R_e\|_2 (T-t)\|P(t)\|_F^2 \frac{\rho^t-\rho^W}{1-\rho}
\end{align*}
When $W\geq T$, substituting the bounds on $\E[\|\bm \delta(k)\|^2]$  into \eqref{equ: expected general proof}  yields the  bound on the expected regret below. 
\begin{align*}
\E[\text{Reg}(RHIG)]\leq\ &  \frac{2L}{\alpha}\rho^W \E[\text{Reg}(\phi)]+\zeta\sum_{k=1}^{T}\rho^{k-1}\|R_e\|_2 \sum_{t=0}^{k-1}(T-t)\|P(t)\|_F^2\\
&+ \zeta \frac{\rho^T-\rho^W}{1-\rho} \|R_e\|_2 \sum_{t=0}^{T-1}(T-t)\|P(t)\|_F^2\\
 = \ & \frac{2L}{\alpha}\rho^W \E[\text{Reg}(\phi)]+\zeta\sum_{t=0}^{T-1} \|R_e\|_2 (T-t)\|P(t)\|_F^2 (\sum_{k=t+1}^T\rho^{k-1} + \frac{\rho^T-\rho^W}{1-\rho})\\
=\ & \frac{2L}{\alpha}\rho^W \E[\text{Reg}(\phi)]+\zeta\sum_{t=0}^{T-1} \|R_e\|_2 (T-t)\|P(t)\|_F^2 \frac{\rho^t-\rho^W}{1-\rho}
\end{align*}
In conclusion, we have the regret bound for general $W\geq 0$ below.
\begin{align*}
\E[\text{Reg}(RHIG)]&\leq \frac{2L}{\alpha} \rho^W \text{Reg}(\phi)+\zeta\sum_{t=0}^{\min(W,T)-1} \|R_e\|_2 (T-t)\|P(t)\|_F^2 \frac{\rho^t-\rho^W}{1-\rho}
\end{align*}
%

\subsection{Proof of Corollary \ref{cor: RHIG-OGD-stoch}}

Before the proof, we note that we cannot apply the expected regret bound in \cite{besbes2015non} directly due to the  major differences in the problem formulation as discussed below. Firstly,  the expected regret definition considered in this paper is different from that in \cite{besbes2015non} because the true cost function parameter $\theta_t$ in our case is also random and taken expectation on, while the true cost function in \cite{besbes2015non} is deterministic and the expectation is only taken on the random gradient noises. Besides, \cite{besbes2015non} considers unbiased gradient estimation while our gradient estimation $\nabla_{x_t}f(x_t;\theta_{t\mid \tau})$ can be biased. Further, \cite{besbes2015non} considers independent gradient noises at each stage $t$, while our gradient noises are correlated due to the correlation among prediction errors. Therefore, we have to revise the original proof in \cite{besbes2015non} for a new regret bound for our setting.

Similar to the proof of Theorem \ref{thm: ogd dyn regret}, we denote the set of stages in epoch $k$ as $\mathcal T_k=\{k\Delta+1,\dots, \min(k\Delta+\Delta,T)\}$ for $k=0, \dots, \ceil{T/\Delta}-1$; and introduce $z_k^*= \argmin_{z\in \X}\sum_{t\in \mathcal T_k}[f(z;\theta_t)]$, $y_t^*=\argmin_{x\in \X}f(x;\theta_t)$, $\bm x^*=\argmin_{\bm x\in \X^T} \sum_{t=1}^T[f(x_t;\theta_t)+ d(x_t, x_{t-1})]$. Notice that $z_k^*, y_t^*, x_t^*$ are all random variables depending on $\bm \theta$. The expected dynamic regret of OGD can be bounded as follows.
\begin{align*}
\E[\text{Reg}(OGD)]& = \sum_{t=1}^T \E[f(x_t;\theta_t)+ d(x_t, x_{t-1})]- \sum_{t=1}^T \E[f(x^*_t;\theta_t)+ d(x^*_t, x^*_{t-1})]\\
& \leq  \sum_{t=1}^T \E[f(x_t;\theta_t)+ d(x_t, x_{t-1})]- \sum_{t=1}^T \E[f(x^*_t;\theta_t)]\\
& \leq \sum_{t=1}^T \E[f(x_t;\theta_t)+ d(x_t, x_{t-1})]- \sum_{t=1}^T \E[f(y^*_t;\theta_t)]\\
& = \sum_{k=0}^{\ceil{T/\Delta}-1}\sum_{t\in \mathcal T_k} \E[f(x_t;\theta_t)+ d(x_t, x_{t-1})- f(y^*_t;\theta_t)]\\
& =   \sum_{k=0}^{\ceil{T/\Delta}-1}\sum_{t\in \mathcal T_k} \E [f(x_t;\theta_t)- f(z_k^*;\theta_t)]+\sum_{k=0}^{\ceil{T/\Delta}-1}\sum_{t\in \mathcal T_k} \E[d(x_t, x_{t-1})]\\
& \quad \quad  + \sum_{k=0}^{\ceil{T/\Delta}-1}\sum_{t\in \mathcal T_k} \E[  f(z_k^*;\theta_t)-f(y^*_t;\theta_t)]\\
& \leq  \ceil{T/\Delta}\log(\Delta+1)\frac{2G^2}{\alpha}+\frac{h^2}{\alpha} \E\|\bm\delta(\min(W,T))\|^2+ \ceil{T/\Delta} \frac{16\beta G^2}{\alpha^2}\\
& \quad \quad +\sum_{k=0}^{\ceil{T/\Delta}-1}\sum_{t\in \mathcal T_k} \E[  f(z_k^*;\theta_t)-f(y^*_t;\theta_t)]
\end{align*}
where the first inequality uses Assumption \ref{ass: d <beta (x-y)^2/2}, the second inequality uses the optimality of $y_t^*$, the last inequality follows from taking expectation on the regret bounds in Theorem \ref{thm: OGD static regret} and the fact that the OGD considered here restarts at the beginning of each epoch $k$ and repeats the stepsizes defined in Theorem \ref{thm: OGD static regret}, thus satisfying the static regret bound and switching cost bound in Theorem \ref{thm: OGD static regret} within each epoch. 

Now, it suffices to bound $\sum_{k=0}^{\ceil{T/\Delta}-1}\sum_{t\in \mathcal T_k}\E [  f(z_k^*;\theta_t)-f(y^*_t;\theta_t)]$. By the optimality of $z_k^*$, we have that
$$ \sum_{k=0}^{\ceil{T/\Delta}-1}\sum_{t\in \mathcal T_k} \E[  f(z_k^*;\theta_t)-f(y^*_t;\theta_t)]\leq  \sum_{k=0}^{\ceil{T/\Delta}-1}\sum_{t\in \mathcal T_k} \E[  f(y^*_{k\Delta+1};\theta_t)-f(y^*_t;\theta_t)].$$
We define $\E[V^k] = \sum_{t\in \mathcal T_k}\E[ \sup_{x\in \X} | f(x;\theta_t)-f(x;\theta_{t-1})|]$.\footnote{Notice that $y_t^*$ is a  random variable depending on $\theta_t$ for all $t$. Therefore, in the inequalities below, $\E[f(y^*_{k\Delta+1};\theta_t)-f(y^*_{k\Delta+1};\theta_{k\Delta+1}) ]$ can be larger than the term $\sup_{x\in \X}\E[f(x;\theta_t)-f(x;\theta_{k\Delta+1})]$, where $x$ is restricted to only deterministic variables. Thus, the expectation operator $\E$ must be outside the $\sup$ operator in our definition of the expected variation of the environment (see $\E[V^k]$ and $\E[V_T]$).} Then, for any $t\in \mathcal T_k$,  we obtain
\begin{align*}
&\E[f(y^*_{k\Delta+1};\theta_t)-f(y^*_t;\theta_t)] \\
=&	\E[f(y^*_{k\Delta+1};\theta_t)-f(y^*_{k\Delta+1};\theta_{k\Delta+1}) ]+ \E[f(y^*_{k\Delta+1};\theta_{k\Delta+1})-f(y_t^*;\theta_{k\Delta+1})]\\
&+\E[ f(y_t^*;\theta_{k\Delta+1})-f(y^*_t;\theta_t)]\\
\leq & 2\E[V^k].
\end{align*}
By summing over $t\in \mathcal T_k$ and $k=0, \dots, \ceil{T/\Delta}-1$, we obtain
$$ \sum_{k=0}^{\ceil{T/\Delta}-1}\sum_{t\in \mathcal T_k} \E[  f(z_k^*;\theta_t)-f(y^*_t;\theta_t)] \leq 2\Delta \E[V_T]$$
Similar to the proof of Corollary \ref{thm: ogd dyn regret}, by applying the bounds above  and $\Delta=\ceil{\sqrt{2T/\E[V_T]}}$, we obtain the desired bound on the expected dynamic regret of OGD for our setting, i.e.
\begin{align*}
\E[\textup{Reg}(OGD)]\leq C_1\sqrt{\E[V_T] T}\log(1+\sqrt{T/\E[V_T]})+ \frac{h^2}{\alpha}\E[\|\bm\delta(\min(W,T))\|^2]
\end{align*}
where $C_1$ is a constant factor defined in Theorem \ref{thm: ogd dyn regret}.

Consequently, by applying Theorem \ref{thm: expected regret bounds} and the bound on $\E[\|\bm\delta(W)\|^2]$ in \eqref{equ: bound on E(delta(k))}, we have the following results.
\begin{align*}
\E[\text{Reg}(RHIG)]\leq & C_2\rho^W\sqrt{\E[V_T] T}\log(1+\sqrt{T/\E[V_T]}) +\rho^W\frac{2h^2 L}{\alpha^2}  \sum_{t=0}^{\min(W,T)-1} \|R_e\|_2 (T-t)\|P(t)\|_F^2 \\
& +\zeta\sum_{t=0}^{\min(W,T)-1} \|R_e\|_2 (T-t)\|P(t)\|_F^2 \frac{\rho^t-\rho^W}{1-\rho}\\
 \leq  & C_2\rho^W\sqrt{\E[V_T] T}\log(1+\sqrt{T/\E[V_T]}) + \zeta\sum_{t=0}^{\min(W,T)-1} \|R_e\|_2 (T-t)\|P(t)\|_F^2 \frac{\rho^t}{1-\rho}
\end{align*}
by $\frac{2h^2L}{\alpha^2}-\frac{\zeta}{1-\rho}<0$, where $C_2= \frac{2L}{\alpha}C_1$.

\subsection{Proof of Theorem \ref{thm: concentration bdd}}
The proof relies on the Hanson-Wright inequality in \cite{rudelson2013hanson}.\footnote{Here we use the fact that $\|X_i\|_{\varphi}=1$ where $\|\cdot\|_{\varphi}$ is the subGaussian norm defined in \cite{rudelson2013hanson}.} 

\begin{proposition}[Hanson-Wright Inequality \cite{rudelson2013hanson}]\label{prop: hanson wright}
Consider random Gaussian vector	$\bm u=(u_1, \dots, u_n)^\top$ with $u_i$ i.i.d. following $N(0,1)$. There exists an absolute constant $c>0$,\footnote{An absolute constant refers to a quantity that does not change with anything.} such that
	$$\Pb(\bm u^\top A \bm u\geq \E [\bm u^\top A \bm u] +b)\leq \exp\left(-c \min(\frac{b^2}{\|A\|_F^2}, \frac{b}{\|A\|_2})\right),\quad \forall \ b >0$$

\end{proposition}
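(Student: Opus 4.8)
The plan is to give a self-contained proof in the Gaussian case, which is simpler than the general sub-Gaussian statement of \cite{rudelson2013hanson} because we can diagonalize rather than decouple. First I would reduce to symmetric $A$: since $\bm u^\top A\bm u=\bm u^\top A_s\bm u$ pointwise for $A_s=\tfrac12(A+A^\top)$, both the event and its mean are unchanged, and because $\|A_s\|_F\le\|A\|_F$ and $\|A_s\|_2\le\|A\|_2$ the exponent built from $\|A_s\|_F,\|A_s\|_2$ is at least as negative as the claimed one, so a bound in terms of $A_s$ implies the stated bound for $A$. Next, applying the spectral theorem I would write $A_s=Q\Lambda Q^\top$ with $Q$ orthogonal and $\Lambda=\mathrm{diag}(\lambda_1,\dots,\lambda_n)$; by rotation invariance of the standard Gaussian, $\bm v:=Q^\top\bm u$ is again $N(0,I_n)$, so $\bm u^\top A_s\bm u=\sum_{i=1}^n\lambda_i v_i^2$ with i.i.d.\ $v_i\sim N(0,1)$ and mean $\sum_i\lambda_i=\mathrm{tr}(A_s)$. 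This reduces the theorem to a Bernstein-type tail bound for $S:=\sum_{i=1}^n\lambda_i(v_i^2-1)$, a sum of independent centered sub-exponential variables.

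The second step is the moment generating function estimate. For one coordinate, $\E[e^{s\lambda(v^2-1)}]=e^{-s\lambda}(1-2s\lambda)^{-1/2}$ whenever $s\lambda<\tfrac12$, and expanding $-\tfrac12\log(1-2s\lambda)-s\lambda$ yields an absolute constant $C_0$ with $\log\E[e^{s\lambda(v^2-1)}]\le C_0\,s^2\lambda^2$ for all $|s\lambda|\le\tfrac14$. By independence the log-MGF of $S$ is additive, so $\log\E[e^{sS}]\le C_0\,s^2\sum_i\lambda_i^2=C_0\,s^2\|A_s\|_F^2$ for every $|s|\le \tfrac{1}{4\max_i|\lambda_i|}=\tfrac{1}{4\|A_s\|_2}$.

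Finally I would run the Chernoff bound $\Pb(S\ge b)\le\exp(-sb+C_0 s^2\|A_s\|_F^2)$ and optimize over $s\in(0,\tfrac{1}{4\|A_s\|_2}]$. The unconstrained minimizer $s^\star=b/(2C_0\|A_s\|_F^2)$ is admissible exactly when $b\le C_0\|A_s\|_F^2/(2\|A_s\|_2)$, giving the sub-Gaussian exponent $-b^2/(4C_0\|A_s\|_F^2)$; otherwise the convex objective is decreasing on the whole interval, so I would evaluate at the endpoint $s=\tfrac{1}{4\|A_s\|_2}$ and use $b>C_0\|A_s\|_F^2/(2\|A_s\|_2)$ to absorb the quadratic term, obtaining an exponent at most $-b/(8\|A_s\|_2)$. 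Setting $c=\min\{1/(4C_0),1/8\}$ covers both regimes and produces $\Pb(S\ge b)\le\exp(-c\min(b^2/\|A_s\|_F^2,b/\|A_s\|_2))$, which by the first-step reduction gives the claim. The main obstacle I anticipate is purely constant bookkeeping: choosing $C_0$ so the single-coordinate MGF bound holds uniformly in $\lambda_i$ over the admissible range of $s$, and verifying that the endpoint evaluation in the large-$b$ regime really collapses to a clean linear-in-$b$ rate, so that one absolute $c$ (independent of $n$, $A$, and the spectrum) serves both branches of the $\min$.
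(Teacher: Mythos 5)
Your proposal is correct, but note that the paper itself contains no proof of this statement: Proposition~\ref{prop: hanson wright} is quoted directly from \cite{rudelson2013hanson} and used as a black box in the proof of Theorem~\ref{thm: concentration bdd}. So the right comparison is with the cited source, whose argument handles general sub-Gaussian coordinates and therefore must decouple the off-diagonal chaos and compare to the Gaussian case. You instead exploit rotation invariance of $N(0,I_n)$ to diagonalize, reducing to a Bernstein-type tail bound for $S=\sum_i\lambda_i(v_i^2-1)$ --- essentially the Laurent--Massart route --- and every step checks out: the symmetrization reduction is monotone in the right direction because $\|A_s\|_F\leq\|A\|_F$ and $\|A_s\|_2\leq\|A\|_2$; the single-coordinate log-MGF bound $-s\lambda-\tfrac12\log(1-2s\lambda)=\sum_{k\geq2}(2s\lambda)^k/(2k)\leq 2s^2\lambda^2$ holds for $|s\lambda|\leq\tfrac14$ (including $\lambda<0$), so $C_0=2$ works and $\max_i|\lambda_i|=\|A_s\|_2$ gives the admissible range $|s|\leq 1/(4\|A_s\|_2)$; and the two-regime Chernoff optimization (interior minimizer versus endpoint, with the quadratic term absorbed via $b>C_0\|A_s\|_F^2/(2\|A_s\|_2)$) correctly yields one absolute constant, e.g.\ $c=\min\{1/(4C_0),1/8\}=1/8$, serving both branches of the $\min$; the degenerate case $A_s=0$ is trivial since then $\bm u^\top A\bm u\equiv 0$. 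What your approach buys is a short, self-contained, fully elementary proof adequate for Theorem~\ref{thm: concentration bdd} as stated, since the paper assumes Gaussian $\{e_t\}$ there; what it gives up is generality: the paper's footnote asserts that similar results hold for sub-Gaussian $e_t$, and for that extension your diagonalization collapses (rotation invariance fails, and $\bm v=Q^\top\bm u$ no longer has independent coordinates), so the full decoupling-based argument of \cite{rudelson2013hanson} would genuinely be needed.
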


Now, we are ready for the proof. For any  realization of the random vectors $\{e_t\}_{t=1}^T$, our regret bound in Section \ref{sec: regret general} still holds, i.e.
\begin{align*}
\text{Reg}(RHIG)\leq \,&\rho^W \frac{2L}{\alpha} C_1\sqrt{V_T T}\log(1+\sqrt{T/V_T})+ {\frac{2L}{\alpha}}\frac{h^2}{\alpha}\rho^{W}\!\|\bm\delta(\min(W,T))\|^2\!\\
&\!+ \sum_{k=1}^{\min(W,T)}\! \!\zeta\rho^{k-1}\|\bm \delta(k)\|^2\!+\!\one_{(W>T)} \frac{\rho^T\!-\!\rho^W}{1-\rho}\zeta \|\bm \delta(T)\|^2\\
\leq  \,&{\rho^W \frac{2L}{\alpha} C_1T\log(2)}+\! \frac{2L}{\alpha}\frac{h^2}{\alpha}\rho^{W}\!\|\bm\delta(\min(W,T))\|^2\!+\! \! \sum_{k=1}^{\min(W,T)}\! \!\zeta\rho^{k-1}\|\bm \delta(k)\|^2\\
&\!+\!\one_{(W>T)} \frac{\rho^T\!-\!\rho^W}{1-\rho}\zeta \|\bm \delta(T)\|^2
\end{align*}
where we used the technical assumption that $V_T\leq T$. Let $R(W)$ denote the second regret bound. It can be verified that
$$R(W)\leq \E[\text{Regbdd}].$$

From \eqref{equ: delta(k)=M e} in the proof of Theorem \ref{thm: expected regret bounds}, we have that $\bm \delta(k)=\bm{M}_k \bm e = \bm{M}_k \bm{R_e}^{1/2} \bm u$, where $\bm u $ is a standard Gaussian vector  for $k\leq T$; and $\bm \delta(k)= \bm{M}_T \bm{R_e}^{1/2} \bm u$ for $k\geq T$. 

When $W\leq T$, we have the following formula for the regret bound $R(W)$.
\begin{align*}
R(W)=\ &  \rho^W \frac{2L}{\alpha} C_1T\log(2)+\rho^{W} \frac{2L}{\alpha}\frac{h^2}{\alpha}\|\bm\delta(W)\|^2+ \zeta \sum_{k=1}^{W}\rho^{k-1}\|\bm \delta(k)\|^2\\
= \ &\rho^W \frac{2L}{\alpha} C_1T\log(2)\\
&+\bm u^\top \underbrace{(\rho^{W} \frac{2L}{\alpha}\frac{h^2}{\alpha} \bm{R_e}^{1/2}\bm M_W^\top \bm M_W \bm{R_e}^{1/2}+\zeta \sum_{k=1}^{W}\rho^{k-1}  \bm{R_e}^{1/2}\bm M_k^\top \bm M_k \bm{R_e}^{1/2} )}_{\bm A_W}\bm u
\end{align*}

We bound $\|\bm A_W\|_F$ below.
\begin{align*}
\|\bm A_W\|_F& \leq \rho^{W} \frac{2L}{\alpha}\frac{h^2}{\alpha} \|\bm{R_e}^{1/2}\bm M_W^\top \bm M_W \bm{R_e}^{1/2}\|_F+\zeta \sum_{k=1}^{W}\rho^{k-1} \| \bm{R_e}^{1/2}\bm M_k^\top \bm M_k \bm{R_e}^{1/2}\|_F\\
& \leq \rho^{W} \frac{2L}{\alpha}\frac{h^2}{\alpha}  \| \bm M_W \bm{R_e}^{1/2}\|_F^2+\zeta \sum_{k=1}^{W}\rho^{k-1} \|  \bm M_k \bm{R_e}^{1/2}\|_F^2\\
& =  \rho^{W} \frac{2L}{\alpha}\frac{h^2}{\alpha}  \text{tr}(\bm{R_e}^{1/2}\bm M_W^\top \bm M_W \bm{R_e}^{1/2})+ \zeta \sum_{k=1}^{W}\rho^{k-1} \text{tr}(\bm{R_e}^{1/2}\bm M_k^\top \bm M_k \bm{R_e}^{1/2})\\
& =  \rho^{W} \frac{2L}{\alpha}\frac{h^2}{\alpha}  \text{tr}(\bm{R_e}\bm M_W^\top \bm M_W )+ \zeta \sum_{k=1}^{W}\rho^{k-1} \text{tr}(\bm{R_e}\bm M_k^\top \bm M_k )\\
& \leq  \rho^{W} \frac{2L}{\alpha}\frac{h^2}{\alpha} \|R_e\|_2 \text{tr}(\bm M_W^\top \bm M_W )+ \zeta \|R_e\|_2\sum_{k=1}^{W}\rho^{k-1} \text{tr}(\bm M_k^\top \bm M_k )\\
& = \rho^{W} \frac{2L}{\alpha}\frac{h^2}{\alpha} \|R_e\|_2 \|\bm M_W\|_F^2+ \zeta \|R_e\|_2\sum_{k=1}^{W}\rho^{k-1}  \|\bm M_k\|_F^2\\
& = \rho^{W} \frac{2L}{\alpha}\frac{h^2}{\alpha} \|R_e\|_2\sum_{t=0}^{W-1}(T-t)\|P(t)\|_F^2+ \zeta \|R_e\|_2\sum_{k=1}^{W}\rho^{k-1}\sum_{t=0}^{k-1}(T-t)\|P(t)\|_F^2\\
& = \rho^{W} \frac{2L}{\alpha}\frac{h^2}{\alpha} \|R_e\|_2\sum_{t=0}^{W-1}(T-t)\|P(t)\|_F^2+ \zeta \|R_e\|_2 \sum_{t=0}^{W-1}\sum_{k=t+1}^{W}\rho^{k-1} (T-t)\|P(t)\|_F^2\\
& =  \rho^{W} \frac{2L}{\alpha}\frac{h^2}{\alpha} \|R_e\|_2\sum_{t=0}^{W-1}(T-t)\|P(t)\|_F^2+ \zeta \|R_e\|_2 \sum_{t=0}^{W-1}\frac{\rho^t -\rho^W}{1-\rho} (T-t)\|P(t)\|_F^2\\
& \leq \zeta\sum_{t=0}^{W-1} \|R_e\|_2 (T-t)\|P(t)\|_F^2 \frac{\rho^t}{1-\rho}
\end{align*}
where we used \eqref{equ: bound on E(delta(k))} and $\zeta=\frac{h^2}{\alpha}+ \frac{h^2}{2L}$ and $\rho=1-\frac{\alpha}{4L}$.

When $W>T$, we have the following formula for the regret bound $R(W)$.  
\begin{align*}
R(W)=\ & \rho^W \frac{2L}{\alpha} C_1T\log(2)+ \frac{2h^2L}{\alpha^2}\rho^{W}\|\bm\delta(T)\|^2+ \zeta \sum_{k=1}^{T}\rho^{k-1}\|\bm \delta(k)\|^2+ \zeta \|\bm \delta(T)\|^2 \frac{\rho^T-\rho^W}{1-\rho}\\
= \ & \rho^W \frac{2L}{\alpha} C_1T\log(2)\\
& +\bm u^\top \underbrace{((\frac{2h^2L}{\alpha^2}\rho^{W}+ \zeta \frac{\rho^T-\rho^W}{1-\rho}) \bm{R_e}^{1/2}\bm M_T^\top \bm M_T \bm{R_e}^{1/2}+\zeta \sum_{k=1}^{T}\rho^{k-1} \bm{R_e}^{1/2}\bm M_k^\top \bm M_k \bm{R_e}^{1/2})}_{\bm A_W}\bm u
\end{align*}
Similarly, we bound $\|\bm A_W\|_F$ below.
\begin{align*}
\|\bm A_W\|_F& \leq \left(\frac{2h^2L}{\alpha^2}\rho^{W}+ \zeta \frac{\rho^T-\rho^W}{1-\rho}\right)  \|\bm{R_e}^{1/2}\bm M_T^\top \bm M_T \bm{R_e}^{1/2}\|_F+\zeta \sum_{k=1}^{T}\rho^{k-1} \| \bm{R_e}^{1/2}\bm M_k^\top \bm M_k \bm{R_e}^{1/2}\|_F\\
& \leq  \left(\frac{2h^2L}{\alpha^2}\rho^{W}+ \zeta \frac{\rho^T-\rho^W}{1-\rho}\right) \|R_e\|_2 \|\bm M_T\|_F^2+\zeta \sum_{k=1}^{T}\rho^{k-1} \|R_e\|_2 \|\bm M_k\|_F^2\\
& \leq \left(\frac{2h^2L}{\alpha^2}\rho^{W}+ \zeta \frac{\rho^T-\rho^W}{1-\rho}\right) \|R_e\|_2\sum_{t=0}^{T-1}(T-t)\|P(t)\|_F^2+\zeta \sum_{k=1}^{T}\rho^{k-1} \|R_e\|_2 \sum_{t=0}^{k-1}(T-t)\|P(t)\|_F^2\\
& \leq \left(\frac{2h^2L}{\alpha^2}\rho^{W}+ \zeta \frac{\rho^T-\rho^W}{1-\rho}\right) \|R_e\|_2\sum_{t=0}^{T-1}(T-t)\|P(t)\|_F^2+\zeta  \|R_e\|_2 \sum_{t=0}^{T-1}\frac{\rho^t-\rho^T}{1-\rho}(T-t)\|P(t)\|_F^2\\
& \leq \zeta\sum_{t=0}^{T-1} \|R_e\|_2 (T-t)\|P(t)\|_F^2 \frac{\rho^t}{1-\rho}
\end{align*}

In conclusion, for any $W\geq 1$, we have that $R(W)=\rho^W \frac{2L}{\alpha} C_1T\log(2)+ \bm u^\top \bm A_W \bm u$, and $$\|\bm A_W\|_F \leq \zeta\sum_{t=0}^{\min(W,T)-1} \|R_e\|_2 (T-t)\|P(t)\|_F^2 \frac{\rho^t}{1-\rho}.$$
Further, we have $\|\bm A_W\|_2\leq \|\bm A_W\|_F$. Therefore, by Proposition  \ref{prop: hanson wright}, we prove the concentration bound below. For any $b>0$, 
\begin{align*}
\Pb(\text{Reg}(RHIG)\geq \E[\text{Regbdd}]+b)& \leq \Pb(R(W)\geq \E[\text{Regbdd}]+b)\\
& \leq \Pb(R(W)\geq \E[R(W)]+b)\\
& =\Pb(\bm u^\top \bm A_W \bm u\geq \E [\bm u^\top \bm A_W \bm u] +b)\\
& \leq \exp\left(-c \min(\frac{b^2}{K^2}, \frac{b}{K})\right)
\end{align*}
where $K=\zeta\sum_{t=0}^{\min(T,W)-1} \|R_e\|_2 (T-t)\|P(t)\|_F^2 \frac{\rho^t}{1-\rho}$.


\section{More details of the numerical experiments}\label{append: numerical}

\paragraph{(i) The high-level planning problem.}
The parameters   are:  $e_t \sim N(0,1)$ i.i.d., $T=20$, $\alpha=1$, $\beta=0.5$, $x_0=10$, $a=4$, $\omega=0.5$, $\eta=0.5$, $\xi_t=1$, CHC's commitment level $v=3$. The regret is averaged over 200 iterations.

\paragraph{(ii) The physical tracking problem.}
Consider the second-order system
$$ \ddot x=k_1 u+g+k_2$$
where $x$ is altitude, $\dot x$ is velocity, $\ddot x$ is acceleration, etc.

Consider a discrete-time version of the system above as 
$$ \frac{x_{t+1}-2x_t+x_{t-1}}{\Delta^2}=k_1u_t-g+k_2$$
which is equivalent to
$$ u_t=\frac{1}{k_1}(\frac{x_{t+1}-2x_{t}+x_{t-1}}{\Delta^2}-(-g+k_2)).$$

Consider a cost function at stage $t$ as
$$\frac{\alpha}{2}(x_t-\theta_t)^2+ \frac{\beta}{2}u_t^2.$$
We can write the cost function in terms of $x_t$, that is,
$$\frac{\alpha}{2}(x_t-\theta_t)^2+ \frac{\beta}{2}\frac{1}{k_1^2}(\frac{x_{t+1}-2x_{t}+x_{t-1}}{\Delta^2}-(-g+k_2))^2.$$

Notice that the switching cost is not $d(x_t, x_{t-1})$ but $d(x_{t+1}, x_t, x_{t-1})$, but we still have the local coupling property of the gradients and we can still apply RHIG. 

The experiment parameters are provided below. Consider horizon 10 seconds and time discretization $\Delta=0.1$s. Let $k_1=1$, $k_2=1$, $\alpha=1$, $\beta =1\times 10^{-5}$, $x_0=1$m, $g=9.8$m/$\text{s}^2$. Let $e_t\sim N(0,0.5^2)$ i.i.d. for all $t$. Consider $d_t=0.9 \sin(0.2 t) +1$ before $t\leq 5.6$s and $d_t= 0.3 \sin(0.2 t) +1$ afterwards. Let $\gamma=0.6$, $\xi_t=1$, $\eta=1/L$ and $L\approx 2.6$.

\end{document}